\def\eqref#1{equation~\ref{#1}}
\def\1{\bm{1}}
\DeclareMathAlphabet{\mathsfit}{\encodingdefault}{\sfdefault}{m}{sl}
\SetMathAlphabet{\mathsfit}{bold}{\encodingdefault}{\sfdefault}{bx}{n}
\newcommand{\E}{\mathbb{E}}
\newcommand{\R}{\mathbb{R}}
\newcommand{\bP}{\mathbb{P}}
\newcommand{\bE}{\mathbb{E}}
\renewcommand{\1}{\mathbbm{1}}
\newcommand{\Df}{{\rm Diff}}
\title{FaiREE: Fair Classification with Finite-Sample and Distribution-Free Guarantee}
\author{
Puheng Li\footnotemark[1]\, \footnotemark[4]
\and
James Zou\footnotemark[2]%\, \footnotemark[4]
\and
Linjun Zhang\footnotemark[3]\, \footnotemark[5]
}%\newcommand{\fix}{\marginpar{FIX}}
\begin{document}
\date{}
\maketitle
\renewcommand{\thefootnote}{\fnsymbol{footnote}} 
\footnotetext[1]{Stanford University. Email: \href{mailto:puhengli@stanford.edu}{puhengli@stanford.edu}.}
\footnotetext[2]{Stanford University. Email: \href{mailto:jamesz@stanford.edu}{jamesz@stanford.edu}.}
\footnotetext[3]{Rutgers University. Email: \href{mailto:linjun.zhang@rutgers.edu}{linjun.zhang@rutgers.edu}.}
\footnotetext[4]{This work was done when Puheng Li was an undergraduate student at Peking University.}
\footnotetext[5]{Corresponding author.}
\renewcommand{\thefootnote}{\arabic{footnote}}

\begin{abstract}
Algorithmic fairness plays an increasingly critical role in machine learning research. Several group fairness notions and algorithms have been proposed. However, the fairness guarantee of existing fair classification methods mainly depends on specific data distributional assumptions, often requiring large sample sizes, and fairness could be violated when there is a modest number of samples, which is often the case in practice. In this paper, we propose FaiREE, a fair classification algorithm that can satisfy group fairness constraints with finite-sample and distribution-free theoretical guarantees. FaiREE can be adapted to satisfy various group fairness notions (e.g., Equality of Opportunity, Equalized Odds, Demographic Parity, etc.) and achieve the optimal accuracy. These theoretical guarantees are further supported by experiments on both synthetic and real data. FaiREE is shown to have favorable performance over state-of-the-art algorithms.
\end{abstract}

\section{Introduction}

% Outline
% \begin{itemize}
%     \item Algorithmic fairness is important, many notions are introduced. 
%     \item People have tried to propose algorithms trying to train a classifier that satisfies these notions. 
%     However these algorithms are lack of finite-sample guarantee and relies on distributional assumptions. 
%     \item In this paper, we propose a  post-processing algorithm that achieves fairness with finite-sample and distribution-free guarantee (FaiREE). [describe briefly the pipeline of controlling the constraints and the rationale.] As there are possibly multiple classifiers that can satisfy this constraint, we further develop a distribution-free estimate of the test misclassification error, resulting in an algorithm that produces the smallest misclassification error given the fairness constraints. We further extend the results to other fairness notions. 
% \end{itemize}

As machine learning algorithms have been increasingly used in consequential domains such as college admission \cite{chouldechova2018frontiers}, loan application \cite{ma2018study}, and disease diagnosis \cite{fatima2017survey}, there are emerging concerns about the algorithmic fairness in recent years. When standard machine learning algorithms are directly applied to the biased data provided by humans, the outputs are sometimes found to be biased towards certain sensitive attribute that we want to protect (race, gender, etc). To quantify the fairness in machine learning algorithms, many fairness notions have been proposed, including the individual fairness notion \cite{biega2018equity}, group fairness notions such as Demographic Parity, Equality of Opportunity, Predictive Parity, and Equalized Odds \cite{dieterich2016compas,hardt2016equality,gajane2017formalizing,verma2018fairness}, and multi-group fairness notions including multi-calibration \cite{hebert2018multicalibration} and multi-accuracy \cite{kim2019multiaccuracy}. 
% \linjun{we can should \cite{gajane2017formalizing,verma2018fairness} for the references here; also, need to add more citations}. 
Based on these fairness notions or constraints, corresponding algorithms were designed to help satisfy the fairness constraints \cite{hardt2016equality,pleiss2017fairness,zafar2017fairness,krishnaswamy2021fair,valera2018enhancing, chzhen2019leveraging, zeng2022bayes, thomas2019preventing}.
\begin{figure}
% \begin{wrapfigure}{r}{0.55\textwidth}
%\vspace{-0.5em}
    \centering
    \includegraphics[width=0.95\textwidth]{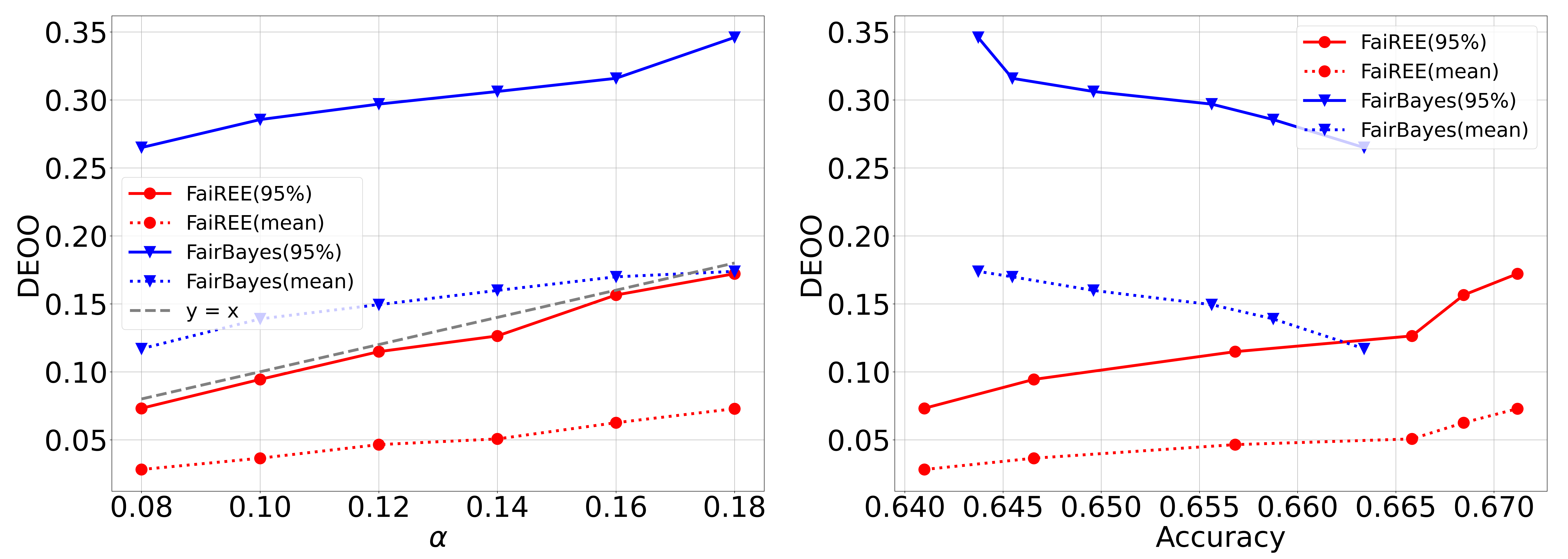}
    \caption{Comparison of FairBayes and FaiREE on the synthetic data with sample size = 1000. See Table \ref{table1} for detailed numerical results. Left: $DEOO$ v.s. $\alpha$, Right: DEOO v.s. Test accuracy. Here, $DEOO$ is the degree of violation to fairness constraint Equality of Opportunity and $\alpha$ is the pre-specified desired level to upper bound $DEOO$ for both methods. See Eq.~(\ref{deoo}) in Section~\ref{section2} for a more detailed definition. %\linjun{need to explain what is $\alpha$ here} %\james{Explain y axis and $\alpha$}
    }
    \label{compare}
    % \vspace{-1em}
\end{figure}
% \end{wrapfigure}

Among these fairness algorithms, post-processing  is a popular type of algorithm which modifies the output of the model to satisfy fairness constraints. However, recent post-processing algorithms are found to lack the ability to realize accuracy–fairness trade-off and perform poorly when the sample size is limited \cite{hardt2016equality,pleiss2017fairness}. %, indicating that the classifier may have relatively poor accuracy after pre-processing. 
In addition, since most fairness constraints are non-convex, some papers propose convex relaxation-based methods \cite{zafar2017fairness,krishnaswamy2021fair}. This type of algorithms generally do not have the theoretical guarantee of how the output satisfies the exact original fairness constraint.  %relax the constraint to a convex one, and few of them theory backed-up about the modification to convexity. 
Another line of research considers recalibrating the Bayes classifier by
a group-dependent threshold \cite{valera2018enhancing, chzhen2019leveraging, zeng2022bayes}. However, their results require either some distributional assumptions or infinite sample size, which is hard to verify/satisfy in practice.
%Some recent methods that have theoretical analysis about the algorithmic outcome, their theory only fits at the population level where the sample size is infinity (\cite{zeng2022bayes}). \linjun{let's also discuss \cite{chzhen2019leveraging} and \cite{valera2018enhancing} here.}  Current methods have trouble dealing with finite samples and uncommon data distributions \cite{chzhen2019leveraging, valera2018enhancing}, which frequently happen in real data.

In this paper, we propose a post-processing algorithm FaiREE that provably achieves group fairness {guarantees} with only finite-sample and {free of distributional assumptions} (this property is also called ``distribution-free'' in the literature \cite{maritz1995distribution, clarke2007simple, gyorfi2002distribution}).  To the best of our knowledge, this is the first algorithm in \textcolor{black}{fair} classification with a finite-sample and distribution-free guarantee. A brief pipeline of FaiREE is to first score the dataset with the given classifier, and select a candidate set based on these scores which can fit the fairness constraint with a theoretical guarantee. As there are possibly multiple classifiers that can satisfy this constraint, we further develop a distribution-free estimate of the test mis-classification error, resulting in an algorithm that produces the optimal mis-classification error given the fairness constraints. As a motivating example, Figure \ref{compare} shows that applying state-of-the-art FairBayes method in \cite{zeng2022bayes} on a dataset with 1000 samples results in substantial fairness violation on the test data and incorrect behavior of fairness-accuracy trade-off due to lack of fairness generalization. Our proposed FaiREE improved fairness generalization in these finite sample settings. 
%In Figure \ref{compare}, we compare FaiREE with the FairBayes method in \cite{zeng2022bayes}, a recent method which can  and control degree of violation to fairness guarantees and find that our method has greater advantage under finite samples. 

\paragraph{Additional Related Works.} The fairness algorithms in the literature can be roughly categorized into three types: 1). Pre-processing algorithms that learn a fair representation to improve fairness \cite{zemel2013learning,louizos2015variational,lum2016statistical,adler2018auditing,calmon2017optimized,gordaliza2019obtaining,madras2018learning, kilbertus2020fair} 2). In-processing algorithms that optimize during training time \cite{calders2009building,woodworth2017learning,zafar2017fairness,zafar2017fairness1,agarwal2018reductions,russell2017worlds, zhang2018mitigating, celis2019classification} 3). Post-processing algorithms that try to modify the output of the original method to fit fairness constraints \cite{kamiran2012decision,feldman2015computational,hardt2016equality,fish2016confidence,pleiss2017fairness,corbett2017algorithmic,menon2018cost,hebert2018multicalibration,kim2019multiaccuracy,deng2023happymap}.
%The existing algorithms of these three categories have their drawbacks respectively. Pre-processing algorithms are typically used for individual fairness(if the metric is given) because it have no access to label $Y$, and it is inferior to the other two kinds of algorithms in both accuracy and fairness performance. Method of optimizing during training time is specific for certain tasks and it requires modifying the classifier, which may be impossible under many circumstances. Post-processing algorithms are more popular since they cope with group fairness well, but they generally lack the ability to achieve the fairness-accuracy trade-off, which implies that the classifier may not perform well with respect to mis-classification error after pre-processing. What's more, due to the non-convexity of fairness constraints, these previous algorithms of the second and the third type usually relax the constraint to a convex one, and few of them are backed by a theory about the convex modification (\cite{hardt2016equality}). Some recent methods have theoretical analysis about the algorithmic performance, but the theory only fits at the population level where we have infinitely many samples (\cite{zeng2022bayes}). In contrast, our goal is to design a post-processing fair algorithm under group fairness for machine learning which requires no modification to the non-convex fairness constraint and is backed by a distribution-free and finite-sample theory. 

The design of post-processing algorithms with distribution-free and finite-sample guarantees gains much attention recently due to its flexibility in practice \cite{shafer2008tutorial, romano2019conformalized}, as it can be applied to any given algorithm (eg. a black-box neural network), and achieve desired theoretical guarantee with almost no assumption. One of the research areas that satisfies this property is conformal prediction \cite{shafer2008tutorial,lei2018distribution,romano2019conformalized} whose aim is to construct prediction intervals that cover a future response with high probability. In this paper, we extend this line of research beyond prediction intervals, by designing classification algorithms that satisfy certain group fairness with distribution-free and finite-sample guarantees. %\linjun{need to mention conformal somewhere; Linjun will do this}% that its accuracy won't suffer a lot for fairness, which totally fixes the mentioned problems in recent papers about fair algorithm in group fairness such as \cite{hardt2016equality} and \cite{zeng2022bayes}.

\textbf{Paper Organization.} Section \ref{section2} provides the definitions and notations we use in the paper. Section \ref{s3} provides the general pipeline of FaiREE. In Section \ref{s4}, we further extend the results to other fairness notions. Finally, Section \ref{section5} conducts experiments on both synthetic and real data and compares with several state-of-art algorithms to show that FaiREE has desirable performance \footnote{Code is available at \url{https://github.com/lphLeo/FaiREE}}.

%\vspace{-0.2cm}
\section{Preliminary}\label{section2}
%\vspace{-0.2cm}
In this paper, we consider two types of features in classification: the standard feature $X \in \mathcal{X}$, and the sensitive attribute, which we want the output to be fair on, is denoted as $A \in \mathcal{A} = \{0, 1\}$. For the simplicity of presentation, we consider the binary classification problem with labels in $\mathcal{Y} = \{0, 1\}$. We note that our analysis can be similarly extended to the multi-class and multi-attribute setting. Under the binary classification setting, we use the score-based classifier that outputs a prediction $\widehat{Y}=\widehat{Y}(x,a) \in \{0, 1\}$ based on a score function $f(x,a)\in[0,1]$ that depends on $X$ and $A$:
    \begin{definition}(Score-based classifier)
        A score-based classifier is an indication function $\hat Y=\phi(x,a) = \1\{f(x,a) > c\}$ for a measurable score function $f: \mathcal{X} \times\{0,1\} \rightarrow[0,1]$ and some threshold $c>0$. %$f$ can be interpreted as the probability of predicting $\widehat{Y} = 1$ when observing $X = x$ and $A = a$.     
        
        % \linjun{To confirm: does $\hat Y=\1\{f>1/2\}$ or $Y=Ber(f)$?}

    \end{definition}

    To address the algorithmic fairness problem, several group fairness notions have been developed in the literature. In the following, we introduce two of the popular notions, Equality of Opportunity and Equalized Odds. We will discuss other fairness notions in Section~\ref{A7} of the Appendix.
    
    Equality of Opportunity requires comparable true positive rates across different protected groups. 
\begin{definition}(Equality of Opportunity \cite{hardt2016equality})
A classifier satisfies Equality of Opportunity if it satisfies the same true positive rate among protected groups:
$\bP_{X \mid A=1, Y=1}(\widehat{Y}=1)=\bP_{X \mid A=0, Y=1}(\widehat{Y}=1).$

% \linjun{We should use $\bP$ instead of $P$ for probabilities, same for the expectation $\E$}
\label{def-eq}
\end{definition}

Equalized Odds is an extension of Equality of Opportunity, requiring both false positive rate and true positive rate are similar across different attributes.

    \begin{definition}(Equalized Odds \cite{hardt2016equality})
        A classifier satisfies Equalized Odds if it satisfies the following equality:
      %  $$ 
       %\begin{aligned}
           $\bP_{X \mid A=1, Y=1}(\widehat{Y}=1) = \bP_{X \mid A=0, Y=1}(\widehat{Y}=1)$ and 
        $\bP_{X \mid A=1, Y=0}(\widehat{Y}=0) = \bP_{X \mid A=0, Y=0}(\widehat{Y}=0)$. 
    %   \end{aligned}
   %$$        % Namely, Equalized Odds is a combination of the constraint of Equality of Opportunity (EOO):
        % $$\bP_{X \mid A=1, Y=1}(\widehat{Y}=1) = \bP_{X \mid A=0, Y=1}(\widehat{Y}=1)$$and Predictive Equality (PO):
        % $$\bP_{X \mid A=1, Y=0}(\widehat{Y}=1) = \bP_{X \mid A=0, Y=0}(\widehat{Y}=1).$$
    \end{definition}

Sometimes it is too strict to require the classifier to satisfy Equality of Opportunity or Equalized Odds exactly, which may sacrifice a lot of accuracy (as a very simple example is $f(x, a) \equiv 1$). In practice, to strike a balance between fairness and accuracy, it makes sense to relax the equality above to an inequality with a small error bound.
% \linjun{again, need to add a sentence to explain why we relax the equality above to a small DEOO}
We use the difference with respect to Equality of Opportunity, denoted by $DEOO$, to measure the disparate impact:
\begin{equation}
DEOO = \bP_{X \mid A=1, Y=1}(\widehat{Y}=1)-\bP_{X \mid A=0, Y=1}(\widehat{Y}=1). 
\label{deoo}
\end{equation}

For a classifier $\phi$, following \cite{zeng2022bayes, cho2020fair}, $|DEOO(\phi)| \leq \alpha$ denotes an $\alpha$-tolerance fairness constraint that controls the difference between the true positive rates below $\alpha$. 
% \linjun{may define $\alpha$ here?}

Similarly, we define the following difference with Equalized Odds. Since Equalized Odds, the difference is a two-dimensional vector: % denoted by $DEO$ $\in\R^2$, as:
$$DEO=(\bP_{X \mid A=1, Y=1}(\widehat{Y}=1) - \bP_{X \mid A=0, Y=1}(\widehat{Y}=1), \bP_{X \mid A=1, Y=0}(\widehat{Y}=1) - \bP_{X \mid A=0, Y=0}(\widehat{Y}=1)).$$
% $$ DEO = \left\{
%       \begin{aligned}
%           \bP_{X \mid A=1, Y=1}(\widehat{Y}=1) - \bP_{X \mid A=0, Y=1}(\widehat{Y}=1)\\
%           \bP_{X \mid A=1, Y=0}(\widehat{Y}=1) - \bP_{X \mid A=0, Y=0}(\widehat{Y}=1)
%       \end{aligned}
%   \right.
%   $$
%   \linjun{should change the notation; left bracket is typically used in the if/else statement, not defining a vector. We may simply write $DEO=(\bP_{X \mid A=1, Y=1}(\widehat{Y}=1) - \bP_{X \mid A=0, Y=1}(\widehat{Y}=1), \bP_{X \mid A=1, Y=0}(\widehat{Y}=1) - \bP_{X \mid A=0, Y=0}(\widehat{Y}=1))$. If we decide to use the latter notation, we should change the notation throughout.}
   
  For notational simplicity, we use the notation $\preceq$ for the element-wise comparison between vectors, that is, $DEO \preceq (\alpha_1, \alpha_2)$ if and only if $\bP_{X \mid A=1, Y=1}(\widehat{Y}=1) - \bP_{X \mid A=0, Y=1}(\widehat{Y}=1) \leq \alpha_1$ and $\bP_{X \mid A=1, Y=0}(\widehat{Y}=1) - \bP_{X \mid A=0, Y=0}(\widehat{Y}=1) \leq \alpha_2$.
  
%   $$ \left\{
%       \begin{aligned}
%           \bP_{X \mid A=1, Y=1}(\widehat{Y}=1) - \bP_{X \mid A=0, Y=1}(\widehat{Y}=1) \leq \alpha_1\\
%           \bP_{X \mid A=1, Y=0}(\widehat{Y}=1) - \bP_{X \mid A=0, Y=0}(\widehat{Y}=1) \leq \alpha_2
%       \end{aligned}
%   \right.
%   $$
%In the rest part of the paper, for clear interpretation, we all use $\alpha$ as the error bound to control $DEOO$ or difference with respect to other fairness notions.
% \linjun{what is $\alpha$? This notation is never introduced before this sentence.}

\textbf{Additional Notation.} We denote the proportion of group $a$ by  $p_{a}:=\bP(A=a)$ for $a\in \{0,1\}$; the proportion of group $Y = 1$ conditioned on $A$ for $p_{Y, a}:=\bP(Y=1 \mid A=a)$; the proportion of group $Y=1$ conditioned on $A$ and $X$ for $\eta_{a}(x):=\bP(Y=1 \mid A=a, X=x)$. Also, we denote by $\bP_{X}(x)$ and $\bP_{X \mid A=a, Y=y}(x)$ respectively the distribution function of $X$ and the distribution function of $X$ conditioned on $A$ and $Y$. The standard Bayes-optimal classifier without fairness constraint is defined as $\phi^*(x,a)=\1\{f^*(x,a)>1/2\}$, where $f^{*} \in {\arg\min}_{f} [\bP(Y \neq \1\{f(x,a)>1/2\})]$. We denote $v_{(k)}$ as the $k^{th}$ ordered value of sequence $v$ in non-decreasing order. For a set $T$, we denote $sort(T)$ as a function that returns $T$ in non-decreasing order. For a number $a\in\R$, we use $\lceil a \rceil$  to  denote the ceiling function that  maps $a$ to the least integer greater than or equal to $a$. For a positive integer $n$, we use $[n]$ to denote the set $\{1,2,...,n\}$. %For two positive sequences $\{a_k\}$ and $\{b_k\}$, we write $a_k =O(b_k)$ (or $a_n\lesssim b_n$), and $a_k = o(b_k)$, if $\lim_{k\rightarrow\infty}(a_k/b_k) < \infty$ and $\lim_{k\rightarrow\infty}(a_k/b_k) =0$, respectively.

\section{FaiREE: A Finite Sample Based Algorithm}
\label{s3}
In this section, we propose FaiREE, a general post-processing algorithm that produces a Fair classifier in a finite-sample and distribution-fREE manner, and can be applied to a wide range of group fairness notions. We will illustrate its use in Equality of Opportunity as an example in this section, and discuss more applications in later sections.% We note that FaiREE , and w

% \linjun{we should overview the section in the beginning, eg. In this section, we introduce the Fair .. EE.. (FaiREE), a general pipeline that produces a fair classifier with ...guarantee.}
\subsection{The general pipeline of FaiREE}
%Existing fairness algorithms generally have poor performance for data with limited sample size, especially for those algorithms that control relaxed fairness guarantees as defined in \eqref{deoo}. In Figure~\ref{compare}, we find that the algorithm in \cite{zeng2022bayes} can hardly control $DEOO$ under the finite-sample setting. 

% Suppose we have dataset $S=\{(x_i,a_i, y_i)\}_{i=1}^n$ consisting of $n$ triples of feature $x_i\in\mathcal X$, protected attributed $a_i\in\{0,1\}$, and label $y_i\in\{0,1\}$. The feature set of $S$ is $S_x = \{x_1,\ldots,x_{n}\}$. Our goal is to post-process any given classifier to make it satisfy certain group fairness constraints. To do so, we divide $S_x$ into four subsets by different values of label $Y$ and sensitive attribute $A$: $S$ = $S^{0,0}\cup S^{0,1} \cup S^{1,0} \cup S^{1,1}$, where $S^{y,a} = \{x^{y,a}_{1}, \ldots, x^{y,a}_{n^{y,a}}\}$ is the feature set of data with label $Y=y \in \{\0,1\}$, protected attribute $A=a \in \{0,1\}$ and has $n^{y,a}$ data samples.

Suppose we have dataset $S = S^{0,0}\cup S^{0,1} \cup S^{1,0} \cup S^{1,1}$, where $S^{y,a} = \{x^{y,a}_{1}, \ldots, x^{y,a}_{n^{y,a}}\}$ is the set of features associated with label $Y=y \in \{0,1\}$ and protected attribute $A=a \in \{0,1\}$. We denote the size of $S^{y,a}$ by $n^{y,a}$. Throughout the paper, we assume that $x_i^{y,a}, i \in \{1,\ldots,n^{y,a}\}$ are independently and identically distributed given $Y=y, A=a$. We define $n = n^{0,0} + n^{0,1} + n^{1,0} + n^{1,1}$ to be the total number of samples. Our goal is to post-process any given classifier to make it satisfy certain group fairness constraints.

%\linjun{in this definition, $S$ = $S^{0,0}\cup S^{0,1} \cup S^{1,0} \cup S^{1,1}$ does not hold; LHS is 3-dim, and RHS is 1-dim}

% \linjun{add a sentence referring to Fig.1 to support the above claim} 

%In order to achieve the fairness guarantee with finite-sample guarantee, we propose 
FaiREE is a post-processing algorithm that can transform any pre-trained classification function score $f$ in order to satisfy fairness constraints. In particular, FaiREE  consists of three main steps, \textit{scoring}, \textit{candidate set construction}, and \textit{candidate selection}. See Figure \ref{fancy} for an illustration. We would like to note that the procedure that first chooses a candidate set of tuning parameters and then selects the best one has been commonly used in machine learning, such as in Seldonian algorithm framework to control safety and fairness \cite{thomas2019preventing,giguere2022fairness,weber2022enforcing}, the Learn then Test framework for risk control \cite{angelopoulos2021learn}, and in high-dimensional statistics \cite{wang2022finite}.  
%Many recent works of algorithms contain a three-step pipeline, with one of which as candidate selection (\cite{thomas2019preventing, wang2015air}). \linjun{add a brief discussion of Seldonian algorithms and cite two Philip's works.}
% For example, Seldonian algorithms (\cite{thomas2019preventing, giguere2022fairness}) consist of three steps of data splitting, candidate set construction and safety test.
%\linjun{need to explain a bit more, eg. what do different colors represent, this is for equality of oppurtunity, etc.}

\begin{figure}[H]
    \centering
    \includegraphics[width=14cm]{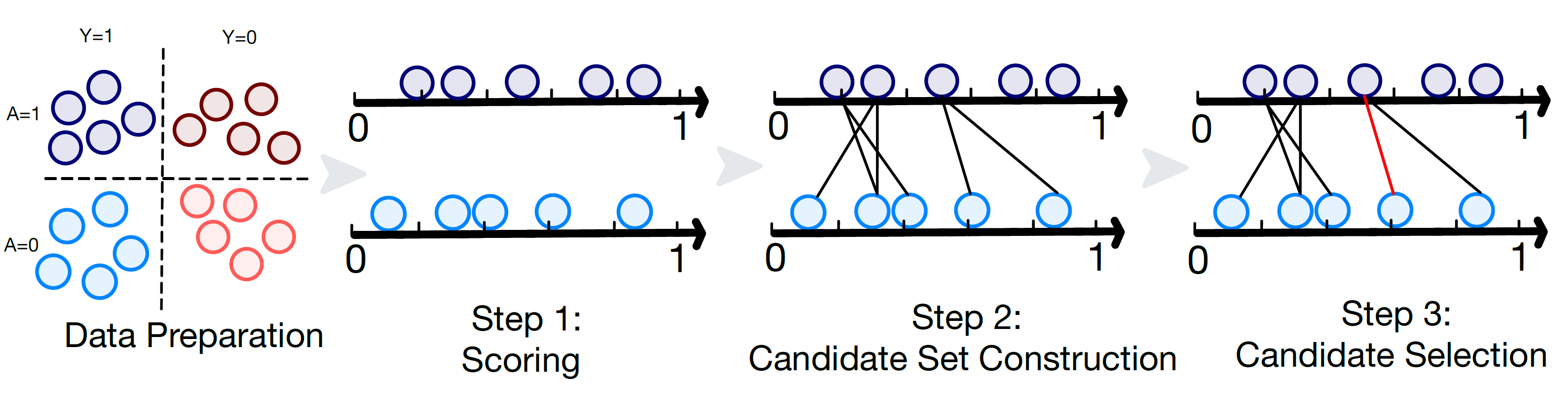}
    \caption{A concrete pipeline of FaiREE for Equality of Opportunity. Edges in Step 2 represent the selected candidate pair and the red edge in Step 3 represents the final optimal candidate selected from all the edges. Each pair represents two different thresholds of a single classifier.} %\james{Explain intuition of candidate pairs. }}%\james{Not clear. What do the edges in Step 2 and 3 indicate? What's the red edge?}}
    \label{fancy}
\end{figure}

\noindent\textbf{Step 1: Scoring.} FaiREE takes input as
    1). a given fairness guarantee $\mathcal{G}$, such as Equality of Opportunity or Equalized Odds;
   % 2). a given classifier $f$, (e.g. a classifier trained from the training set from logistic regression or neural networks)
    2). an error bound $\alpha$, which controls the violation with respect to our given fairness notion;
    % \linjun{we should introduce this $\alpha$ in the preliminary setting}
    3). a small tolerance level $\delta$, which makes sure our final classifier satisfies our requirement with  probability at least $1-\delta$;
    4). a dataset $S$. 
    % \linjun{training set?}

For \textit{scoring}, we first apply the given classifier $f$ to $S^{y,a}$ and denote the outcome $t_i^{y,a}:=f(x_i^{y,a})$  %\james{$x_i^{y,a}$ not defined} 
as scores for each sample. These scores are then sorted within each subset in non-decreasing order respectively and obtain $T^{y,a} = \{t^{y,a}_{(1)},\ldots,t^{y,a}_{(n^{y,a})}\}$. %\linjun{should define $n^{i,j}$ somewhere, may in the beginning of Section 3.1.} \linjun{change it to $n^{y,a}$.}
% \linjun{increasing or decreasing order?}

\noindent\textbf{Step 2: Candidate Set Construction.} We first present a key observation for this step, which holds for many group fairness notions such as Equality of Opportunity, Equalized Odds (see details of more fairness notions in Section~\ref{S32}):

{\textit{Any classifier can fit the fairness constraint with high probability by setting the decision threshold appropriately, %an amount that we can estimate
regardless of the data distribution.}}

The insight of this observation comes from recent literature on post-processing algorithms and Neyman-Pearson classification algorithm \cite{fish2016confidence,corbett2017algorithmic, valera2018enhancing,menon2018cost,tong2018neyman,chzhen2019leveraging}. Under Equality of Opportunity, this observation is formalized in Proposition \ref{p1}. We also establish similar results under other fairness notions beyond Equality of Opportunity in Section~\ref{S32}. %\linjun{how about equalized odds and more notions?}
% \linjun{need to add more literature here, see Section 2 in Zeng's paper} 
% \linjun{add a sentence saying that this observation is formalized in Proposition XX.}  
From this observation we can build an algorithm to calculate the probability that a classifier $f$ with a certain threshold will satisfy the fairness constraint: $\Df_\mathcal{G}(f) \leq \alpha$ %\linjun{Let's use $\Df_\mathcal{G}(f)$ instead of $Diff_\mathcal{G}(f)$}
, where $\Df_\mathcal{G}(f)$ is a generic notation to denote the violation rate of $f$ under some fairness notion $\mathcal{G}$. Then we choose the classifiers with the probability $\bP(\Df_\mathcal{G}(f) > \alpha)\leq \delta$ as our candidate set $C$. This candidate set consists of a set of threshold values, with potentially different thresholds for different subpopulations. % (by different values of $Y$ and $A$).
%We are going to show that it's sufficient to choose the threshold values from the $T^{y,a}$ defined in Step 1. \james{Clarify that each candidate is actually a set of thresholds, with potentially different thresholds for different subpopulation.}
% \james{Explain how Step 2 uses the $T^{y,a}$ obtained in Step 1.}

\noindent\textbf{Step 3: Candidate Selection.} Furthermore, as there might be multiple classifiers that satisfy the given fairness constraints, we aim to choose a classifier with a small mis-classification error. To do this, FaiREE estimates the mis-classification error $err(f)$ of the classifier $f$, and chooses the one with the smallest error among the candidate set constructed in the second step.

% \begin{figure}[htbp]
% \centering
% \subfigure[DEOO v.s. $\alpha$]{
% 	\label{Fig.sub.1}
% 	\includegraphics[width=0.48\textwidth]{DEOO.png}}
% \subfigure[Accuracy v.s. $\alpha$]{
% 	\label{Fig.sub.2}
% 	\includegraphics[width=0.48\textwidth]{mis.png}}
% \caption{Comparison of FairBayes and FaiREE on syhthetic data with 1000 data samples}
% \label{compare}
% \end{figure}
% \linjun{let's use the wrapfigure to save some space.}

In the rest part of the section, as an example, we consider Equality of Opportunity as our target group fairness constraint and provide our algorithm in detail. 

\subsection{Application to Equality of Opportunity}
\label{S32}
In this section, we apply FaiREE to the fairness notion Equality of Opportunity. The following two subsections explain the steps \textbf{Candidate Set Construction} and \textbf{Candidate Selection} in detail.
%\linjun{say a few words to introduce the following two subsections}
\subsubsection{Candidate Set Construction}
%Candidate Set Construction
 We first formalize our observation in the following proposition. %The proposition uses the property of order statistics, which requires no assumption for distribution and fits for finite-sample data. 
Using the property of order statistics, %the intuition that order statistics of scores directly function as thresholds that can tune the   fairness violation
 the following proposition states that it is sufficient to choose the threshold of the score-based classifier from the sorted scores to control the fairness violation in a distribution-free and finite-sample manner. Here, $k^{1,a}$ is the index from which we select the threshold in $T^{1,a}$.
%\linjun{can we say a few words why we only need to consider the threshold value in $T^{1,a}$? Also, say a few words why there is such a distribution-free and finite-sample result.}
\begin{proposition}\label{p1}
Consider $k^{1,a} \in \{1,\ldots,n^{1,a}\}$ for $a\in\{0,1\}$, and the score-based classifier  $\phi(x,a)=\1\{f(x,a))>t^{1,a}_{(k^{1,a})}\}$. Let $g_{1}(k, a)=\bE[\sum\limits^{n^{1,a}}_{j=k}{n^{1,a}\choose j}(Q^{1,1-a}-\alpha)^{j}(1-(Q^{1,1-a}-\alpha))^{n^{1,a}-j}]$ with $Q^{1,a}\sim Beta(k,n^{1,a}-k+1)$, then we have:
\begin{align*}
    \bP(|DEOO(\phi)|>\alpha) \leq g_{1}(k^{1,1},1)+g_{1}(k^{1,0},0).
    %&\bE[\sum\limits^{n^{1,0}}_{j=k^{1,0}}{n^{1,0}\choose j}(Q^{1,1}-\alpha)^{j}(1-(Q^{1,1}-\alpha))^{n^{1,0}-j}]\\
    %&+\bE[\sum\limits^{n^{1,1}}_{j=k^{1,1}}{n^{1,1}\choose j}(Q^{1,0}-\alpha)^{j}(1-(Q^{1,0}-\alpha))^{n^{1,1}-j}],
\end{align*} 
Additionally, if $t^{1,a}_{(k^{1,a})}$ is a continuous random variable, the inequality above becomes tight equality. %\linjun{we have not defined $t^{1,a}$}
    % \linjun{should add if $t$ is continuous r.v., the equality holds. (should be the case, right?)}
\end{proposition}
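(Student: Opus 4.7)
The plan is to represent each true positive rate as a function of an independent Beta-distributed random variable and then reduce the tail probability of their difference to a Beta--Binomial calculation. Let $F_a$ denote the CDF of $f(X, a)$ under $X \mid A{=}a, Y{=}1$. The threshold $t^{1,a}_{(k^{1,a})}$ is the $k^{1,a}$-th order statistic of $n^{1,a}$ i.i.d.\ scores with CDF $F_a$, so the true positive rate on group $a$ is
$$\mathrm{TPR}_a \;=\; \bP_{X\mid A=a, Y=1}\!\bigl(f(X,a) > t^{1,a}_{(k^{1,a})} \,\big|\, t^{1,a}_{(k^{1,a})}\bigr) \;=\; 1 - F_a\bigl(t^{1,a}_{(k^{1,a})}\bigr).$$
Under the continuity assumption on $t^{1,a}_{(k^{1,a})}$, the probability integral transform turns $F_a\bigl(t^{1,a}_{(k^{1,a})}\bigr)$ into the $k^{1,a}$-th order statistic of $n^{1,a}$ i.i.d.\ uniforms on $[0,1]$, whose law is $Q^{1,a} \sim Beta(k^{1,a}, n^{1,a} - k^{1,a} + 1)$. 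Moreover, $Q^{1,0}$ and $Q^{1,1}$ are independent because they are constructed from the disjoint samples $S^{1,0}$ and $S^{1,1}$.

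With this representation, $DEOO(\phi) = \mathrm{TPR}_1 - \mathrm{TPR}_0 = Q^{1,0} - Q^{1,1}$, and the two one-sided events comprising $\{|DEOO|>\alpha\}$ are disjoint, so
$$\bP\bigl(|DEOO(\phi)| > \alpha\bigr) \;=\; \bP(Q^{1,0} - Q^{1,1} > \alpha) \,+\, \bP(Q^{1,1} - Q^{1,0} > \alpha).$$
I would evaluate the first term by conditioning on $Q^{1,0}$ and applying the classical Beta--Binomial duality $\bP(Q^{1,1} \leq q) = \sum_{j=k^{1,1}}^{n^{1,1}} \binom{n^{1,1}}{j} q^j (1-q)^{n^{1,1}-j}$ for $q \in [0,1]$, plugged in at $q = Q^{1,0} - \alpha$; taking the outer expectation reproduces $g_1(k^{1,1}, 1)$. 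Swapping the roles of the two groups handles the second term and yields $g_1(k^{1,0}, 0)$. In the continuous regime the boundary events $\{Q^{1,1} = Q^{1,0} - \alpha\}$ and $\{Q^{1,0} = Q^{1,1} - \alpha\}$ have zero probability, so strict-versus-weak inequality makes no difference and the bound becomes a tight equality.

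For the general (possibly non-continuous) case the distribution of $F_a\bigl(t^{1,a}_{(k^{1,a})}\bigr)$ is no longer exactly $Beta(k^{1,a}, n^{1,a} - k^{1,a} + 1)$. However, the standard fact that $F_a(T)$ stochastically dominates a uniform on $[0,1]$ when $T\sim F_a$ extends to order statistics and gives that $F_a\bigl(t^{1,a}_{(k^{1,a})}\bigr)$ stochastically dominates $Q^{1,a}$; equivalently, $\mathrm{TPR}_a$ is stochastically dominated by $1-Q^{1,a}$. Combined with the independence of the two groups, a monotone coupling then preserves the $\leq$ direction of the bound. The main obstacle I anticipate is bookkeeping: handling strict-versus-weak inequalities consistently and interpreting the Beta--Binomial sum when $Q^{1,1-a} - \alpha$ falls outside $[0,1]$, in which case the corresponding conditional probability is $0$ or $1$ and the stated expression must be read (or clipped) accordingly in order for the identification with $g_1$ to be literal.
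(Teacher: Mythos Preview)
Your approach mirrors the paper's almost exactly: write the two true positive rates as $1-F_a\bigl(t^{1,a}_{(k^{1,a})}\bigr)$, invoke the probability integral transform so that in the continuous case $F_a\bigl(t^{1,a}_{(k^{1,a})}\bigr)\sim\mathrm{Beta}(k^{1,a},n^{1,a}-k^{1,a}+1)$, split $\{|DEOO|>\alpha\}$ into two disjoint one-sided events, and evaluate each via the Beta--Binomial identity. For the continuous case this is complete and identical to the paper's argument.

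The gap is in your handling of the non-continuous case. You correctly observe that $F_a\bigl(t^{1,a}_{(k^{1,a})}\bigr)$ stochastically dominates $Q^{1,a}$, but a monotone coupling does \emph{not} yield the desired inequality: under a coupling with $F_0\bigl(t^{1,0}_{(k^{1,0})}\bigr)\ge Q^{1,0}$ and $F_1\bigl(t^{1,1}_{(k^{1,1})}\bigr)\ge Q^{1,1}$ almost surely, the difference $F_0(\cdot)-F_1(\cdot)$ bears no order relation to $Q^{1,0}-Q^{1,1}$, since the two shifts act with opposite signs on the difference. For instance, if $F_1$ is degenerate at a single point then $F_1\bigl(t^{1,1}_{(k^{1,1})}\bigr)=1$ a.s., and the actual one-sided probability $\bP(Q^{1,0}<1-\alpha)$ can strictly exceed $\bP(Q^{1,0}<Q^{1,1}-\alpha)$, so the coupling bound fails. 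The paper does not argue via a joint coupling; it treats each one-sided term asymmetrically, conditioning on one threshold and bounding the inner probability through $\bP\bigl(t^{1,0}<{F^{1,0}}^{-1}(q)\bigr)\le q$ before taking the outer expectation. Even there, the final replacement of $F^{1,1}\bigl(t^{1,1}_{(k^{1,1})}\bigr)$ by $Q^{1,1}$ needs the stochastic ordering to go the \emph{opposite} way from the direction you (correctly) established, so this step is genuinely delicate in the non-continuous regime and should not be dismissed as bookkeeping.
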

% then we have\\ $\bP(|DEOO(\phi)|>\alpha) \leq \E[\sum\limits^{n^{1,0}}_{j=k^{1,0}}\left(\begin{aligned}
% n&^{1,0} \\
% &j
% \end{aligned}\right)(Q^{1,1}-\alpha)^{j}(1-(Q^{1,1}-\alpha))^{n^{1,0}-j}]\\+\E[\sum\limits^{n^{1,1}}_{j=k^{1,1}}\left(\begin{aligned}
% n&^{1,1} \\
% &j
% \end{aligned}\right)(Q^{1,0}-\alpha)^{j}(1-(Q^{1,0}-\alpha))^{n^{1,1}-j}]
% $, 
%\linjun{we should use the display mode, and remove all the $\backslash$ $\backslash$; we should also try to make the equation shorter; we need to use ``choose'' for the combinatorials, eg. for $a\in\{0,1\}$, 
% \linjun{Same comment applies to all the other parts of the paper} 
% where $Q^{1,0}, Q^{1,1}$ are random variables of the distribution $Beta(k^{1,0}, n^{1,0}-k^{1,0}+1)$ and $Beta(k^{1,1}, n^{1,1}-k^{1,1}+1)$.
Here, $g_1$ is a function constructed using the property of order statistics so that $g_{1}(k^{1,1},1)$ and $g_{1}(k^{1,0},0)$ upper bound $\bP(DEOO(\phi)>\alpha) $ and $\bP(DEOO(\phi)<-\alpha)$ respectively. We note that $g_1$ can be efficiently compute using Monte Carlo simulations. In our experiments, we approximate $g_1$ by randomly sampling from the Beta distribution for 1000 times and achieve satisfactory approximation. %$\sum\limits_{a = 0}^{1} g_1(k^{1,a},a)$ form an upper bound for the probability that $DEOO(\phi)$ constraint is violated, where $\phi$ has thresholds $k^{1,a}$ for $a=0,1$. \james{Add intuition for where $g_1$ comes from} 
This proposition ensures that the $DEOO$ of a given classifier can be controlled with high probability if we choose an appropriate threshold value when post-processing.
% \linjun{add interpretation/explanation of the proposition here.}

Based on the above proposition, we then build our classifiers for an arbitrarily given score function $f$ as below.
% We define $L(k^{1,0},k^{1,1})=\E[\sum\limits^{n^{1,0}}_{j=k^{1,0}}\left(\begin{aligned}
% n&^{1,0} \\
% &j
% \end{aligned}\right)(Q^{1,1}-\alpha)^{j}(1-(Q^{1,1}-\alpha))^{n^{1,0}-j}]\\+\E[\sum\limits^{n^{1,1}}_{j=k^{1,1}}\left(\begin{aligned}
% n&^{1,1} \\
% &j
% \end{aligned}\right)(Q^{1,0}-\alpha)^{j}(1-(Q^{1,0}-\alpha))^{n^{1,1}-j}]$. \linjun{$L(k^{1,0},k^{1,1})=g_{1,0}+g_{1,1}$.}
We define $L(k^{1,0},k^{1,1})=g_{1}(k^{1,1},1) +g_{1}(k^{1,0},0)$. Recall that the error tolerance is $\alpha$, and $\delta$ is the tolerance level. %Here, $k^{1,a}$ is the index from which we select the threshold in $T^{1,a}$. 
% \linjun{need to explain what do $k^{1,0}$ and $k^{1,1}$ mean.}
Our candidate set is then constructed as
$K =  \{(k^{1,0}, k^{1,1}) \mid L(k^{1,0}, k^{1,1}) \leq \delta\}.$

Before we proceed to the theoretical guarantee for this candidate set, we introduce a bit more notation. Let us denote the size of the candidate set $K$ by $M$, and the elements in the set $K$ by $(k^{1,0}_1, k^{1,1}_1), \ldots, (k^{1,0}_M,k^{1,1}_M)$.  Additionally, we let $\hat\phi_i(x,a)=\1\{f(x,a)>t^{1,a}_{(k^{1,a}_i)}\}$, for $i=1,\ldots,M$.
% \linjun{it's better to use another notation instead of $n$, as $n$ is easy to be thought as the sample size}

To ensure that there exists at least one valid classifier (i.e. $M\geq 1$), we should have $\E[(Q^{1,0}-\alpha)^{n^{1,0}}]+\E[(Q^{1,1}-\alpha)^{n^{1,1}}] \leq \delta$, which requires a necessary and sufficient lower bound requirement on the sample size, as formulated in the following proposition:

% \begin{assumption}
% $min\{n^{1,0}, n^{1,1}\} \geq \lceil \frac{\log \frac{\delta}{2}} {\log (1-\alpha)}\rceil \text {, in which }\lceil\cdot\rceil \text { denotes the ceiling function. }$
% \label{a1}
% \end{assumption} 
% From the above proposition, we can straightforwardly get the following proposition of the DEOO of $\hat\phi_i$:
\begin{theorem}
If $\min\{n^{1,0}, n^{1,1}\} \geq \lceil \frac{\log \frac{\delta}{2}} {\log (1-\alpha)}\rceil$, for each $i\in\{1,\ldots,M\}$ in the candidate set, we have $|DEOO(\hat\phi_i)| < \alpha$ with probability $1-\delta$.  
\label{p3}

% \linjun{need to clearly state if this is for each $i$ or for all $i$}
\end{theorem}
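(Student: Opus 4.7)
The plan is to decompose the claim into two parts: (I) under the sample-size hypothesis the candidate set $K$ is non-empty, so the assertion ``for each $i\in\{1,\ldots,M\}$'' is non-vacuous; (II) every $(k^{1,0}_i,k^{1,1}_i)\in K$ yields a classifier $\hat\phi_i$ with $\bP(|DEOO(\hat\phi_i)|>\alpha)\le\delta$. Part (II) is essentially tautological from Proposition~\ref{p1} together with the defining property of $K$, so the substantive work lives in Part (I).

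For Part (II), fix any $i\in\{1,\ldots,M\}$ and apply Proposition~\ref{p1} to the thresholds $t^{1,a}_{(k^{1,a}_i)}$:
$$\bP(|DEOO(\hat\phi_i)|>\alpha)\;\le\;g_1(k^{1,1}_i,1)+g_1(k^{1,0}_i,0)\;=\;L(k^{1,0}_i,k^{1,1}_i)\;\le\;\delta,$$
where the last inequality is the defining inclusion $i\in K$. Taking complements gives $\bP(|DEOO(\hat\phi_i)|\le\alpha)\ge 1-\delta$; the strict ``$<\alpha$'' form follows under the standard assumption that the scores have a continuous distribution, which is precisely the regime in which Proposition~\ref{p1} is stated to be tight.

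For Part (I), the plan is to exhibit a concrete element of $K$, namely $(k^{1,0},k^{1,1})=(1,1)$. Letting $Q^{1,a}:=F_{1,a}(t^{1,a}_{(1)})$, where $F_{1,a}$ denotes the CDF of $f(X,a)\mid Y=1,A=a$, the order-statistic argument behind Proposition~\ref{p1} gives $Q^{1,a}\sim\mathrm{Beta}(1,n^{1,a})$, independently across $a$. Dropping the non-negative $Q^{1,1}$ from the event $\{Q^{1,0}-Q^{1,1}>\alpha\}$ and using the explicit tail $\bP(\mathrm{Beta}(1,n)>\alpha)=(1-\alpha)^n$ yields
$$g_1(1,1)\;=\;\bP(Q^{1,0}-Q^{1,1}>\alpha)\;\le\;\bP(Q^{1,0}>\alpha)\;=\;(1-\alpha)^{n^{1,0}},$$
and symmetrically $g_1(1,0)\le(1-\alpha)^{n^{1,1}}$. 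Therefore
$$L(1,1)\;\le\;(1-\alpha)^{n^{1,0}}+(1-\alpha)^{n^{1,1}}\;\le\;2(1-\alpha)^{\min\{n^{1,0},n^{1,1}\}}.$$
The sample-size hypothesis $\min\{n^{1,0},n^{1,1}\}\ge\lceil\log(\delta/2)/\log(1-\alpha)\rceil$ is exactly equivalent to $(1-\alpha)^{\min\{n^{1,0},n^{1,1}\}}\le\delta/2$, so $L(1,1)\le\delta$ and $(1,1)\in K$. Thus $M\ge 1$, which combined with Part (II) completes the proof.

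The main obstacle is Part (I): one cannot simply invoke Proposition~\ref{p1}, since that proposition merely bounds the fairness violation of a given threshold pair and does not by itself guarantee that any pair meets the $\delta$ budget. Producing a witness requires selecting specific thresholds and reducing the intractable two-dimensional Beta expectation $g_1$ to a clean function of the sample size; choosing $k=1$ for both groups is what makes the reduction $g_1(1,a)\le(1-\alpha)^{n^{1,1-a}}$ both valid and tight enough to match the stated bound.
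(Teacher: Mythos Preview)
Your argument is correct. The paper does not give an explicit proof of this theorem; the only indication of the intended argument is the sentence preceding the statement, which says that $M\ge 1$ is ensured once $\E[(Q^{1,0}-\alpha)^{n^{1,0}}]+\E[(Q^{1,1}-\alpha)^{n^{1,1}}]\le\delta$. That expression is what one obtains (up to the paper's own index swaps) by evaluating $L$ at the \emph{largest} threshold pair $(k^{1,0},k^{1,1})=(n^{1,0},n^{1,1})$, whereas you chose the \emph{smallest} pair $(1,1)$. By the order-statistic symmetry $F^{1,a}(t^{1,a}_{(1)})\sim\mathrm{Beta}(1,n^{1,a})$ versus $1-F^{1,a}(t^{1,a}_{(n^{1,a})})\sim\mathrm{Beta}(1,n^{1,a})$, both witnesses yield the identical bound $L\le (1-\alpha)^{n^{1,0}}+(1-\alpha)^{n^{1,1}}$, so the two routes are equivalent; your choice arguably makes the reduction cleaner because $\bP(\mathrm{Beta}(1,n)>\alpha)=(1-\alpha)^n$ is immediate. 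One cosmetic point: in your display you attribute $g_1(1,1)$ to the event $\{Q^{1,0}-Q^{1,1}>\alpha\}$, but following the proof of Proposition~\ref{p1} (where $g_1(k^{1,1},1)$ bounds the term $A=\bP(Q^{1,1}-Q^{1,0}>\alpha)$) the roles of $Q^{1,0}$ and $Q^{1,1}$ are swapped; this has no effect on the final symmetric bound $(1-\alpha)^{n^{1,0}}+(1-\alpha)^{n^{1,1}}$.
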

% \linjun{We should call this a theorem}
% \linjun{Let's combine Assumption 1 and Proposition 2.}

% \linjun{add a few sentences mentioning the computational complexity}

As there are at most $n^{1,0}n^{1,1}$ elements in the candidate set, the size of $K$, $M$, can be as large as $O(n^2)$. To further reduce the computational complexity, in the following part, we provide a method to shrink the candidate set.
%\linjun{add a sentence to briefly summarize what the following part is about, and why we do this.}

Our construction is inspired by the following lemma, which gives the analytical form of the  fair Bayes-optimal classifier under the Equality of Opportunity constraint. This Bayes-optimal classifier is defined as $\phi_{\alpha}^* = {\arg\min}_{\mid DEOO(\phi)\mid \leq \alpha} \bP(\phi(x,a) \neq Y)$.

\begin{lemma}[Adapted from Theorem E.4 in \cite{zeng2022bayes}]
The fair Bayes-optimal classifier under Equality of Opportunity can be explicitly written as $\phi_{\alpha}^*(x,a)=\1\{f^*(x,a)>t^*_{a}\}$, then $t^*_1 = \frac{p_{1}p_{Y,1}}{2p_{1}p_{Y,1}-{(1/{t^{1,0}_{(k)}}-2)\cdot p_{0}p_{Y,0}}}$.
\label{pz}
\end{lemma}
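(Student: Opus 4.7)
The plan is to derive the stated form via a Lagrangian/Neyman-Pearson reduction for the constrained Bayes-risk minimization $\min_\phi \bP(\phi(X,A)\neq Y)$ subject to the fairness constraint (taking the active side of $|DEOO(\phi)|\leq\alpha$). Since both the objective and the DEOO functional are linear in the decision rule $\phi$, pointwise minimization of the Lagrangian will yield a threshold rule in $\eta_a(x)$. First I rewrite both quantities as linear functionals of $\phi(\cdot,a)$: using $\1\{Y\neq\phi\}=\phi(1-Y)+(1-\phi)Y$ and conditioning on $A=a$,
$$
\bP(Y\neq\phi) \;=\; \sum_{a\in\{0,1\}} p_a\!\int \phi(x,a)\bigl(1-2\eta_a(x)\bigr)\,dF_{X\mid A=a}(x) \;+\; \bP(Y=1),
$$
and since $\bP(\phi=1\mid A=a,Y=1)=\bE[\phi(X,a)\eta_a(X)\mid A=a]/p_{Y,a}$,
$$
DEOO(\phi) \;=\; \frac{1}{p_{Y,1}}\!\int\phi(x,1)\eta_1(x)\,dF_{X\mid A=1}(x) \;-\; \frac{1}{p_{Y,0}}\!\int\phi(x,0)\eta_0(x)\,dF_{X\mid A=0}(x).
$$

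Next I form the Lagrangian $\mathcal{L}(\phi,\lambda)=\bP(Y\neq\phi)+\lambda(DEOO(\phi)-\alpha)$ and minimize pointwise in $\phi(x,a)\in\{0,1\}$. Collecting coefficients, the multiplier of $\phi(x,0)$ equals $p_0(1-2\eta_0(x))-\lambda\eta_0(x)/p_{Y,0}$, which is nonpositive exactly when $\eta_0(x)\geq p_0/(2p_0+\lambda/p_{Y,0})$; similarly, the multiplier of $\phi(x,1)$ is nonpositive exactly when $\eta_1(x)\geq p_1/(2p_1-\lambda/p_{Y,1})$. Since the Bayes regression function satisfies $f^*(x,a)=\eta_a(x)$, this produces the asserted form $\phi_\alpha^*(x,a)=\1\{f^*(x,a)>t^*_a\}$ with
$$
t^*_0 \;=\; \frac{p_0}{2p_0 + \lambda/p_{Y,0}}, \qquad t^*_1 \;=\; \frac{p_1}{2p_1 - \lambda/p_{Y,1}}.
$$

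Finally I eliminate $\lambda$ to obtain the stated closed form. Solving the first identity yields $\lambda = p_0 p_{Y,0}(1/t^*_0 - 2)$; substituting into $t^*_1$ and multiplying numerator and denominator by $p_{Y,1}$ gives
$$
t^*_1 \;=\; \frac{p_1 p_{Y,1}}{2 p_1 p_{Y,1} \;-\; (1/t^*_0 - 2)\, p_0 p_{Y,0}},
$$
which, upon identifying $t^*_0$ with the order statistic $t^{1,0}_{(k)}$ at which FaiREE places the threshold for group $A=0$, is exactly the claimed expression. The main technical obstacle is justifying the reduction rigorously: one needs (i) strong duality, i.e.\ that the fairness constraint is active at the optimum (otherwise the unconstrained Bayes rule $\1\{f^*>1/2\}$ already satisfies it and the claim is immediate with $\lambda=0$), and (ii) a treatment of ties on the boundary $\{\eta_a(x)=t^*_a\}$. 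Under absolute continuity of the law of $\eta_a(X)\mid A=a$ the boundary set has measure zero and the pointwise minimizer is unique; otherwise one permits randomization on the boundary to meet the constraint with equality, which preserves the threshold structure. This is essentially Theorem E.4 of \cite{zeng2022bayes} transcribed into the notation of the present paper.
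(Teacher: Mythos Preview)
Your derivation is correct, and the final algebraic elimination step---solving $t^*_0$ for the multiplier and substituting into $t^*_1$---is exactly what the paper does. The difference is one of scope: the paper does not re-derive the threshold structure. It restates Theorem~E.4 of \cite{zeng2022bayes} as an auxiliary lemma, reads off directly
\[
t^*_{0} = \frac{p_{0}p_{Y,0}}{2p_{0}p_{Y,0} + t^{*}_{E,\alpha}},\qquad
t^*_{1} = \frac{p_{1}p_{Y,1}}{2p_{1}p_{Y,1} - t^{*}_{E,\alpha}},
\]
and then eliminates $t^{*}_{E,\alpha}$ in two lines. Your Lagrangian/pointwise-minimization argument is essentially a self-contained re-proof of that cited theorem (including the discussion of active constraints and ties), after which you perform the same elimination. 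What your route buys is independence from the external reference and an explicit check that the thresholds arise from a genuine Neyman--Pearson reduction; what the paper's route buys is brevity, since the statement is labeled ``Adapted from Theorem~E.4'' and the only new content is the algebraic relation between $t^*_0$ and $t^*_1$.
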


% Our construction is inspired by Theorem E.4 in \cite{zeng2022bayes} (which is restated in  Theorem \ref{eo} in our appendix for completeness),
% % \linjun{what is the Theorem number in Zeng's paper?}
%  where the fair Bayes optimal classifier is found to have the form $\phi_{\alpha}^*(x,a)=\1\{f_{\alpha}^*(x,a)>\frac{1}{2}+\lambda^*_a\}$. And the output classifier of our algorithm is of the form $\hat\phi = \1\{f(x,a)>\frac{1}{2}+\lambda_a\}$.
 
% \linjun{let's change the form $1(\cdot)$ to $\1\{\cdot\}$}
% \linjun{We should remove Theorem 1 or put it into the appendix. The first theorem should be our own major results, ie, the property of our algorithm, not other people's result} 
%  \linjun{formulate as a proposition.} 

Note that in practice, the input classifier $f$ can be the classifier trained by a classification algorithm on the training set, which means it is close to $f^*$. Thus from this observation, we can adopt a new way of building a much smaller candidate set. Note that our original candidate set is defined as : $K =  \{(k^{1,0}, k^{1,1}) \mid L(k^{1,0}, k^{1,1}) \leq \delta\} = \{(k^{1,0}_1, k^{1,1}_1), \ldots, (k^{1,0}_M,k^{1,1}_M)\}.$
% \linjun{do we have theoretical guarantee for this? If not, we should either put it into a remark or defer it to the simulation section}\puheng{Yes, the following corollary is just the theoretical guarantee.}
Now, for every $1 \leq k \leq n^{1,0}$, from Lemma \ref{pz} we denote $u_1(k) = \mathop{\arg\min}\limits_{u} |t^{1,1}_{(u)} - \frac{\hat{p}_1\hat{p}_{Y,1}}{2\hat{p}_1\hat{p}_{Y,1}-{(1/{t^{1,0}_{(k)}}-2)\cdot \hat{p}_0\hat{p}_{Y,0}}}|$, where $\hat{p}_a = \frac{n^{1,a} + n^{0,a}}{n^{0,0} + n^{0,1} + n^{1,0} + n^{1,1}}$ and $\hat{p}_{y,a} = \frac{n^{1,a}}{n^{0,a} + n^{1,a}}$. %\linjun{have we defined $\hat p$?}
% for every $1 \leq k \leq n^{1,1}$, we denote $u_2(k) = \mathop{argmin}\limits_{u} |t^{1,0}_{(u)} - \frac{1}{2-\frac{(\frac{1}{t^{1,1}_{(k)}}-2)\hat{p}_1\hat{p}_{Y,1}}{\hat{p}_0\hat{p}_{Y,0}}}|$.\\
We then build our candidate set as below:
\begin{align}\label{eq:Kprime}
K^{\prime} =  &\{(k^{1,0}, u_1(k^{1,0})) \mid L(k^{1,0}, u_1(k^{1,0})) \leq \delta\}. 
%=& \{(k^{1,0}_1, k^{1,1}_1), \ldots, (k^{1,0}_M,k^{1,1}_M):M \leq n^{1,0}\}
\end{align}

% \linjun{this part seems to construct a candidate set; any reason why we put it in this subsection, instead of the last section where the focus was on the candidate set construction?}

This candidate set $K'$ has cardinality at most $n$. Since our next step, Candidate Selection, has computational complexity that is linear in the size of the candidate set, using the new set $K'$ would help us reduce the computational complexity from $O(n^2)$ to $O(n)$. 

\subsubsection{Candidate Selection}
In this subsection, we explain in detail how we choose the classifier with the smallest mis-classification error from the candidate set constructed in the last step. For a given pair $({k_{i}^{1,0}}, {k_{i}^{1,1}})$ {in the candidate set of index} ($i\in[M]$), we need to know the rank of $t^{1,0}_{(k_{i}^{1,0})}$ and $t^{1,1}_{(k_{i}^{1,1})}$ in the sorted set $T^{0,0}$ and $T^{0, 1}$ respectively in order to compute the test error where we need to consider both $y=0$ and $1$.  Specifically, we find the $k_{i}^{0,a}$ such that $t^{0,a}_{(k_{i}^{0,a})} \leq t^{1,a}_{(k_{i}^{1,a})} < t^{0,a}_{(k_{i}^{0,a} + 1)}$ for $a\in\{0,1\}$.
% \linjun{what do we mean by ``suppose''? Do we need to assume these inequalities hold?}
%\linjun{$\leftarrow$ what does this mean? need to add explanations here}

 To estimate the test mis-classification error of $\hat\phi_i(x,a) = \1\{f(x,a) > t^{1,a}_{(k^{1,a}_{i})}\}$,  %\linjun{$\hat\phi_i=?$ add definition here},
 we divide the error into four terms by different values of $y$ and $a$. We then estimate each part using the property of order statistics respectively, and obtain the following proposition:

% \begin{proposition}
% \linjun{any conditions?}
% %$\bP(Y \neq \hat{Y})=\mathbb E_{S^{tr}}[\bP(Y \neq \hat{Y}\mid S^{tr})]$
% % $\bP(\hat{\phi_i} \neq Y)\leq \frac{k^{1,0}_i}{n^{1,0}+1}p_0p_{Y,0} + \frac{k^{1,1}_i}{n^{1,1}+1}p_1p_{Y,1}
% % + \frac{n^{0,0}+1-\E(k^{0,0}_i)}{n^{0,0}+1}p_0(1-p_{Y,0}) + \frac{n^{0,1}+1-\E(k^{0,1}_i)}{n^{0,1}+1}p_1(1-p_{Y,1})$.
% \linjun{we may define $e_{EOO}=[ \frac{k^{1,0}_i}{n^{1,0}+1}p_0p_{Y,0} + \frac{k^{1,1}_i}{n^{1,1}+1}p_1p_{Y,1}
% + \frac{n^{0,0}+\frac{1}{2}-\E(k^{0,0}_i)}{n^{0,0}+1}p_0(1-p_{Y,0}) + \frac{n^{0,1}+\frac{1}{2}-\E(k^{0,1}_i)}{n^{0,1}+1}p_1(1-p_{Y,1})]$? Why do we have an $\E$ for $\E(k_i^{0,1})$?} 
% $\mid \bP(\hat{\phi_i} \neq Y) -[ \frac{k^{1,0}_i}{n^{1,0}+1}p_0p_{Y,0} + \frac{k^{1,1}_i}{n^{1,1}+1}p_1p_{Y,1}
% + \frac{n^{0,0}+\frac{1}{2}-\E(k^{0,0}_i)}{n^{0,0}+1}p_0(1-p_{Y,0}) + \frac{n^{0,1}+\frac{1}{2}-\E(k^{0,1}_i)}{n^{0,1}+1}p_1(1-p_{Y,1})]\mid \leq \frac{p_0(1-p_{Y,0})}{2(n^{0,0}+1)}+\frac{p_1(1-p_{Y,1})}{2(n^{0,1}+1)}$
% \end{proposition}

\begin{proposition}\label{p4}
Suppose the density functions of $f$ under $A=a, Y=1$ are continuous. Let $\hat{e_i}= \frac{k^{1,0}}{n^{1,0}+1} \frac{n^{1,0}}{n} - \frac{k^{1,1}}{n^{1,1}+1} \frac{n^{1,1}}{n} - (1- \frac{k^{0,0}}{n^{0,0}})\frac{n^{0,0}}{n} - ( 1 - \frac{k^{0,1}}{n^{0,1}})\frac{n^{0,1}}{n}$, for $i=1,2,...,M$. Then, there exist two constants $c_1,c_2 >0$ such that $\mid \bP(\hat{\phi_i}(x,a) \neq Y) -\hat{e_i}\mid \leq  c_1/\sqrt{\min(n^{0,0}, n^{0,1}, n^{1,0}, n^{1,1})}$ with probability larger than $1-c_2\exp(-\min(n^{0,0}, n^{0,1}, n^{1,0}, n^{1,1}))$.
\end{proposition}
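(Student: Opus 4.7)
\textbf{Proof proposal for Proposition~\ref{p4}.}
The plan is to decompose the true misclassification error into four sub-population terms indexed by $(y,a)\in\{0,1\}^2$, and then to approximate each term with the appropriate finite-sample tool: order-statistics theory for the $y=1$ terms (where the threshold is itself an order statistic of the conditioning population) and empirical-CDF concentration for the $y=0$ terms (where the threshold comes from an independent sample). The joint sub-population proportions $\bP(A{=}a,Y{=}y)$ will be replaced throughout by $n^{y,a}/n$, with the replacement error controlled by a multinomial Chernoff bound.

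Concretely, I would first write
\begin{align*}
\bP(\hat{\phi}_i \neq Y) &= \sum_{a\in\{0,1\}} \bP(A{=}a,Y{=}0)\,\bP\bigl(f(X,a)>t^{1,a}_{(k^{1,a}_i)}\bigm| A{=}a,Y{=}0\bigr) \\
&\quad + \sum_{a\in\{0,1\}} \bP(A{=}a,Y{=}1)\,\bP\bigl(f(X,a)\leq t^{1,a}_{(k^{1,a}_i)}\bigm| A{=}a,Y{=}1\bigr).
\end{align*}
For the $y=1$ summand at each fixed $a$, the threshold $t^{1,a}_{(k^{1,a}_i)}$ is the $k^{1,a}_i$-th order statistic of an i.i.d.\ sample drawn from the continuous conditional law of $f(X,A)$ given $A=a,Y=1$. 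The probability integral transform---valid because the density is continuous---implies that $F_{1,a}\bigl(t^{1,a}_{(k^{1,a}_i)}\bigr)\sim\mathrm{Beta}(k^{1,a}_i,n^{1,a}-k^{1,a}_i+1)$, whose mean is $k^{1,a}_i/(n^{1,a}+1)$. A Chernoff bound on the equivalent binomial-tail representation (namely, $F(X_{(k)})\le t$ if and only if $\mathrm{Bin}(n,t)\ge k$) then yields an $O(1/\sqrt{n^{1,a}})$ deviation with exponentially small failure probability.

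For the $y=0$ summand at each fixed $a$, the key observation is that, by the construction of $k^{0,a}_i$, the ratio $k^{0,a}_i/n^{0,a}$ equals precisely the empirical CDF $\widehat{F}^{0,a}$ of the independent sample $T^{0,a}$ evaluated at the (data-dependent) point $t^{1,a}_{(k^{1,a}_i)}$. Since $T^{0,a}$ and $T^{1,a}$ are drawn from disjoint sub-populations and are therefore independent, the DKW inequality applied to $T^{0,a}$ gives the uniform bound $\sup_t\bigl|\widehat{F}^{0,a}(t)-F_{0,a}(t)\bigr|=O(1/\sqrt{n^{0,a}})$ with the required exponential failure probability. Evaluating this uniform bound at $t=t^{1,a}_{(k^{1,a}_i)}$ transfers it into the desired approximation of $1-F_{0,a}\bigl(t^{1,a}_{(k^{1,a}_i)}\bigr)$ by $1-k^{0,a}_i/n^{0,a}$ at the same rate.

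Finally, combining the four sub-population estimates by the triangle inequality and union-bounding the failure events for the four order-statistic/DKW applications together with the multinomial-proportion failure yields the full rate $O\bigl(1/\sqrt{\min\{n^{0,0},n^{0,1},n^{1,0},n^{1,1}\}}\bigr)$ with overall failure probability $O(\exp(-c\min\{n^{0,0},n^{0,1},n^{1,0},n^{1,1}\}))$, as claimed. The main technical subtlety is the $y=0$ analysis: the evaluation point of the empirical CDF is itself data-dependent, so naively conditioning on the threshold and applying a pointwise concentration bound would be circular. The right device is to exploit independence of $T^{0,a}$ from $T^{1,a}$ and invoke DKW uniformly over $t$, which dispatches the data-dependent argument cleanly.
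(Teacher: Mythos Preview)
Your proposal is correct and follows essentially the same route as the paper: decompose the misclassification probability into the four $(y,a)$ cells, handle the $y=1$ cells via the $\mathrm{Beta}(k,n{-}k{+}1)$ law of $F^{1,a}(t^{1,a}_{(k^{1,a})})$ and concentration around its mean $k^{1,a}/(n^{1,a}{+}1)$, handle the $y=0$ cells by the DKW inequality applied uniformly to the independent sample $T^{0,a}$ (exactly to absorb the data-dependent threshold, as you note), replace the true cell proportions by $n^{y,a}/n$ via Hoeffding, and union-bound. The only cosmetic difference is that the paper phrases the proportion step as separate Hoeffding bounds on $\hat p_a$ and $\hat p_{Y,a}$ rather than a single multinomial bound, which is immaterial.
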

% \linjun{can we say if $t$ is continuous r.v., the equality holds?}\puheng{The equality can't hold even if f is continuous}
% The four components of the RHS are respectively the bound for mis-classification error among four groups divided by different $Y$ and $A$.
% \linjun{need to give some insight of the RHS}

The above proposition enables us to efficiently estimate the test error of the $M$ classifiers $\hat\phi_i$'s defined above, from which we can choose a classifier with the lowest test error $\hat\phi$. The algorithm is summarized in Algorithm~\ref{alg:EO}.
% \vspace{-1em}
\begin{algorithm}[!htb]
\caption{FaiREE for Equality of Opportunity}\label{alg:EO}
\KwIn{Data $S$ = $S^{0,0}\cup S^{0,1} \cup S^{1,0} \cup S^{1,1}$;
% where $S^{y,a}$ is the set of $Y=y,A=a$; 
the error bound  $\alpha$; the tolerance level $\delta$; a given pre-trained classifier $f$
}
%\For {i in \{1,\ldots,M\}}{
    % $S^{i,j}_{1}, S^{i,j}_{2} \leftarrow $random split on $S^{i,j}$ by half, $i,j \in \{0,1\}$\\
    % $S_{tr} \leftarrow S^{0,0}_1\cup S^{0,1}_1 \cup S^{1,0}_{1} \cup S^{1,1}_{1}$\\
    % $S^{i,j} = \{x^{i,j}_{1},\ldots,x^{i,j}_{n_{i,j}}\}$         \linjun{we may not need to write $S^{i,j}$ here?}
% \\
    $T^{y,a}=\{f(x^{y,a}_{1}),\ldots,f(x^{y,a}_{n_{y,a}})\}$\\ %\linjun{write it as $T^{y,a}$?}
    % \linjun{the writing of this step is not standard; please check how other paper present similar steps}\\
    $\{t^{y,a}_{(1)},\ldots,t^{y,a}_{(n_{y,a})}\}= $sort($T^{y,a}$) \\
    % \linjun{define $sort(\cdot)$ in the notation section}\\
    
    Define $g_{1}(k, a)$ as in Proposition~\ref{p1}, and let %=\bE[\sum\limits^{n^{1,a}}_{j=k}{n^{1,a}\choose j}(Q^{1,1-a}-\alpha)^{j}(1-(Q^{1,1-a}-\alpha))^{n^{1,a}-j}]$ with $Q^{1,a}\sim Beta(k,n^{1,a}-k+1)$, 
    $L(k^{1,0},k^{1,1}) = g_{1}(k^{1,1}, 1) + g_{1}(k^{1,0},0)$\\

%     $L(k^{1,0},k^{1,1})=\E[\sum\limits^{n^{1,0}}_{j=k^{1,0}}\left(\begin{aligned}
% n&^{1,0} \\
% &j
% \end{aligned}\right)(Q^{1,1}-\alpha)^{j}(1-(Q^{1,1}-\alpha))^{n^{1,0}-j}]+\E[\sum\limits^{n^{1,1}}_{j=k^{1,1}}\left(\begin{aligned}
% n&^{1,1} \\
% &j
% \end{aligned}\right)(Q^{1,0}-\alpha)^{j}(1-(Q^{1,0}-\alpha))^{n^{1,1}-j}]$, $Q^{1,a} \sim Beta(k^{1,a}, n^{1,a} - k^{1,a} + 1)$\\

Build candidate set $K'$ as in Eq.~\ref{eq:Kprime}, and write 
$K' =  \{(k^{1,0}_1, k^{1,1}_1), \ldots, (k^{1,0}_{M'},k^{1,1}_{M'})\}$\\ 

% $\hat{p}_0,\hat{p}_1,\hat{p}_{Y,0},\hat{p}_{Y,1}\leftarrow$ estimation of $p_0,p_1,p_{Y,0},p_{Y,1}$\\
% Define $u_{y,a} = \frac{k^{y,a}n^{y,a}}{(n^{y,a}+1)(n^{1,0}+n^{0,0}+n^{1,1}+n^{0,1})}$, $M(k^{1,0},k^{1,1},k^{0,0},k^{0,1})= u_{1,0} + u_{1,1} - u_{0,0} - u_{0,1} + \frac{n^{0,0}+n^{0,1}}{n^{1,0}+n^{0,0}+n^{1,1}+n^{0,1}}$\\
% \linjun{need to simplify this, eg. by defining the four terms as $u_{i,j}$ for $i,j\in\{0,1\}$}
% \For{i in \{1,\ldots,n\}}{
%     Suppose $t^{0,0}_{(k_{i}^{0,0})} \leq t^{1,0}_{(k_{i}^{1,0})} < t^{0,0}_{(k_{i}^{0,0} + 1)}$, $t^{0,1}_{(k_{i}^{0,1})} \leq t^{1,1}_{(k_{i}^{1,1})} < t^{0,1}_{(k_{i}^{0,1} + 1)}$\\
%     $\hat{E}(k^{0,a}_{i}) \leftarrow$ estimation of $\E(k^{0,a}_{i})$, $a=0,1$
% }
Find $k_{i}^{0,0},k_{i}^{0,1}$: $t^{0,0}_{(k_{i}^{0,0})} \leq t^{1,0}_{(k_{i}^{1,0})} < t^{0,0}_{(k_{i}^{0,0} + 1)}$, $t^{0,1}_{(k_{i}^{0,1})} \leq t^{1,1}_{(k_{i}^{1,1})} < t^{0,1}_{(k_{i}^{0,1} + 1)}$\\
% \linjun{what is this step? ``suppose'' sounds like we are adding an assumption here?}\\
$i_*\leftarrow \mathop{\arg\min}\limits_{i\in[M']} \{\hat{e_i}\}$ ($\hat{e_i}$ is defined in Proposition \ref{p4})\\
\KwOut{$\hat\phi(x,a)=\1\{f(x,a)>t^{1,a}_{(k^{1,a}_{i_*})}\}$}
%}
%\KwOut{\\an ensemble classifier $\hat{\phi}_{\alpha}(x,a) = \1\{\frac{1}{M}\sum\limits^{M}_{i=1}\phi_i(x,a)\geq \frac{1}{2})$}
\end{algorithm}
% \vspace{-1em}
In the following, we provide the theory showing that the output of Algorithm~\ref{alg:EO} is approaching the optimal mis-classification error under Equality of Opportunity. The following theorem states that the final output of FaiREE has both controlled $DEOO$, and achieved almost minimum mis-classification error when the input classifier is properly chosen.% the output of our algorithm with the fair Bayes-optimal classifier under Equality of Opportunity.

To facilitate the theoretical analysis, we first introduce the following lemma
% \linjun{if we say an assumption is weak, we need to justify this statement; eg. can we say all common distributions satisfy this assumption?} 
which implies that the difference between the output of the function can be controlled by the difference between the input. (i.e. the cumulative distribution function won't increase drastically.)
% \linjun{we should add a few sentences to introduce and explain this assumption.}

\begin{lemma}
For a distribution $F$ with a continuous density function, suppose $q(x)$ denotes the quantile of $x$ under $F$, then for $x > y$, we have $F_{(-)}(x - y) \leq q(x) - q(y) \leq F_{(+)}(x-y)$, where $F_{(-)}(x)$ and $F_{(+)}(x)$ are two monotonically increasing functions, $F_{(-)}(\epsilon) > 0, F_{(+)}(\epsilon)>0$ for any $\epsilon > 0$ and $\mathop{lim}\limits_{\epsilon \rightarrow 0} F_{(-)}(\epsilon) = \mathop{lim}\limits_{\epsilon \rightarrow 0} F_{(+)}(\epsilon)=0$.
\label{a2}
\end{lemma}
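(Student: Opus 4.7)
First I would interpret $q(x)$ as the cumulative distribution function $F(x)$, so that for $x>y$ one has $q(x)-q(y) = F(x)-F(y) = \int_y^x f(t)\,dt$, where $f$ denotes the continuous density. With this reading, the bounding functions are natural to construct as the extreme CDF increments over windows of a given width:
\[
F_{(+)}(\epsilon) := \sup_{z\in\R}\bigl[F(z+\epsilon)-F(z)\bigr], \qquad F_{(-)}(\epsilon) := \inf_{z\in S}\bigl[F(z+\epsilon)-F(z)\bigr],
\]
where $S$ is a compact subset of the support of $F$ containing the relevant range of $(x,y)$ for the downstream application. Then the two inequalities in the lemma reduce to applying the defining sup/inf with $z=y$ and width $\epsilon = x-y$.

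Next I would verify the four listed properties. Monotonicity of each function follows because, for each fixed $z$, the map $\epsilon\mapsto F(z+\epsilon)-F(z) = \int_z^{z+\epsilon}f(t)\,dt$ is non-decreasing (since $f\ge 0$), and sup/inf preserve monotonicity. For the limit at $0$, continuity of $f$ together with $F(-\infty)=0$ and $F(+\infty)=1$ implies that $F$ is uniformly continuous on $\R$ via a standard compactness argument; hence $F_{(+)}(\epsilon)\to 0$ as $\epsilon\to 0^+$, and the estimate $0 \le F_{(-)}(\epsilon) \le F_{(+)}(\epsilon)$ gives the corresponding limit for $F_{(-)}$.

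The main obstacle is the strict positivity $F_{(-)}(\epsilon)>0$. This is false for arbitrary continuous-density distributions (e.g.\ if $f$ vanishes on intervals or decays to $0$ near the boundary of its support), so the lemma implicitly requires $f$ to be bounded away from zero on the window of interest. I would close the gap by restricting $S$ to a compact subinterval of the support on which continuity of $f$ yields a positive lower bound $m>0$; then $F(z+\epsilon)-F(z)\ge m\epsilon$ uniformly in $z\in S$, giving $F_{(-)}(\epsilon)\ge m\epsilon>0$. In the downstream use inside Algorithm~\ref{alg:EO}, the values $x,y$ are quantile levels of a continuous score distribution that has strictly positive density on the working region, so this condition holds automatically and the lemma applies as stated.
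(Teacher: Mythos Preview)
Your proposal is correct and follows essentially the same route as the paper: both construct $F_{(+)}$ and $F_{(-)}$ as the supremum and infimum of the CDF increments $F(z+\epsilon)-F(z)$, invoke uniform continuity (via compactness of the relevant domain) for the upper bound and its limit at zero, and take the infimum over a compact set on which the density is strictly positive to secure $F_{(-)}(\epsilon)>0$. Your treatment is in fact more explicit than the paper's, which simply asserts that the domain of $q$ is closed and that this yields both uniform continuity and strict positivity of the infimum; you correctly identify that positivity of $F_{(-)}$ requires the density to be bounded away from zero on the working region and make that assumption explicit.
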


\begin{theorem}\label{p5}
Given any $\alpha^{\prime} < \alpha$. Set $\delta=c_0/M$ for some $c_0>0$, where $M$ is the candidate set size. Suppose $\min\{n^{1,0}, n^{1,1}\} \geq \lceil \frac{\log \frac{\delta}{2}} {\log (1-\alpha)}\rceil$. $\hat{\phi}$ is the output of FaiREE, then:
\begin{enumerate}
    \item[(1).]  $|DEOO(\hat\phi)| < \alpha$ with probability $1-c_0$.
% (2) Suppose the cumulative distribution functions of $f_{\alpha^{\prime}}^*$ under $A=a, Y=1$ satisfy Assumption~\ref{a2}. When the input classifier $f$ is $f_{\alpha^{\prime}}^*$,$$
% \mid \bP(\hat{\phi} \neq Y) - \bP(\phi_{\alpha^{\prime}}^* \neq Y)\mid \leq 2g(\epsilon) + 2h^3(\epsilon)+12h^2(\epsilon)+16h(\epsilon) + 2g(2\epsilon)[h^2(\epsilon)+4h(\epsilon)+2]
% $$
% with probability $1 - (6M+4)(e^{-2n^{0,0}h^{2}(\epsilon)}+e^{-2n^{0,1}h^{2}(\epsilon)}) - (1-h(2\epsilon))^{n^{1,0}} - (1-h(2\epsilon))^{n^{1,1}}$.
% (2) Suppose the cumulative distribution functions of $f_{\alpha^{\prime}}^*$ under $A=a, Y=1$ are continuous and satisfy Assumption~\ref{a2}. For any sufficiently small $\epsilon > 0$ such that $g(\epsilon) + g(\frac{\frac{\epsilon}{\lambda_0^*(\lambda_0^*-\epsilon)}p_0p_{Y,0}+h(\epsilon)(\frac{1}{\lambda_0^*-\epsilon}-2)(2+h(\epsilon))+4h(\epsilon)}{p_1p_{Y,1}}) \leq \alpha - \alpha^{\prime}$, when the input classifier $f$ is $f_{\alpha^{\prime}}^*$ and the constructed candidate set is $K^{\prime}$, we have 
% \small{
% $$
% \begin{aligned}
% &\mid \bP(\hat{\phi} \neq Y) - \bP(\phi^* \neq Y)\mid \\
% \leq &g(\epsilon) + g(\frac{\frac{\epsilon}{\lambda_0^*(\lambda_0^*-\epsilon)}p_0p_{Y,0}+h(\epsilon)(\frac{1}{\lambda_0^*-\epsilon}-2)(2+h(\epsilon))+4h(\epsilon)}{p_1p_{Y,1}}) + 2h^3(\epsilon)+12h^2(\epsilon)+16h(\epsilon)
% \end{aligned}
% $$}
% with probability $1 - (2M+4)(e^{-2n^{0,0}h^{2}(\epsilon)}+e^{-2n^{0,1}h^{2}(\epsilon)}) - (1-h(2\epsilon))^{n^{1,0}}$.
\item[(2).] Suppose the density functions of $f$ and $f^*$ under $A=a, Y=1$ are continuous. For any $\delta^{\prime},\epsilon_0 > 0$, there exist $0<c<1$ and $c_1 >0$ such that when the input classifier $f(x,a)$ satisfies $\| f - f^* \|_\infty \leq \epsilon_0$ and the constructed candidate set is $K^{\prime}$, we have %\james{clarify what is $\phi^*$}
$\bP(\hat{\phi}(x,a) \neq Y) - \bP(\phi_{\alpha^{\prime}}^*(x,a) \neq Y)
\leq 2F^*_{(+)}(2\epsilon_0)+\delta^{\prime}$ with probability larger than $1 - c_1c^{\min\{n^{1,0}, n^{1,1}, n^{0,0}, n^{0,1}\}}$.
\end{enumerate}

\end{theorem}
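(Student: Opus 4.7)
The plan is to handle the two claims separately, with Part (1) being a direct consequence of the construction and Part (2) requiring a more delicate comparison.

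For Part (1), I would argue as follows. Every pair $(k^{1,0}_i, k^{1,1}_i)$ in $K'$ satisfies $L(k^{1,0}_i, k^{1,1}_i) \leq \delta$ by the definition of the candidate set, so Theorem \ref{p3} (or equivalently Proposition \ref{p1}) gives $\bP(|DEOO(\hat\phi_i)| > \alpha) \leq \delta$ for each $i$. Since the output $\hat\phi$ is chosen among at most $M$ such candidates, a union bound together with $\delta = c_0/M$ yields $\bP(|DEOO(\hat\phi)| > \alpha) \leq c_0$, which is the claimed bound.

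For Part (2), the strategy is to exhibit a reference element $(\tilde k^{1,0}, \tilde k^{1,1}) \in K'$ whose associated classifier $\tilde\phi$ approximates the fair Bayes-optimal classifier $\phi^*_{\alpha'}$ in mis-classification error, and then compare $\hat\phi$ to $\tilde\phi$ using the concentration bound from Proposition \ref{p4}. I would let $\tilde k^{1,0}$ be the empirical quantile index corresponding to the threshold $t^*_0$ of $\phi^*_{\alpha'}$ from Lemma \ref{pz}. Because $\|f-f^*\|_\infty \leq \epsilon_0$ and the distributions are continuous, Lemma \ref{a2} implies that $t^{1,0}_{(\tilde k^{1,0})}$ is within $\epsilon_0$ of $t^*_0$ up to an $O(1/\sqrt{n^{1,0}})$ sampling error. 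Since the map $u_1$ used to build $K'$ is the empirical analogue of the Lemma \ref{pz} relation, $t^{1,1}_{(u_1(\tilde k^{1,0}))}$ is similarly close to $t^*_1$ by continuity of the map together with $\hat p_a \to p_a$ and $\hat p_{Y,a} \to p_{Y,a}$. The slack $\alpha-\alpha' > 0$ is then used to verify that $L(\tilde k^{1,0}, u_1(\tilde k^{1,0})) \leq \delta$ for $n$ large enough, so that the reference element indeed lies in $K'$.

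Once $\tilde\phi \in K'$, three estimates combine to finish the bound. First, by construction $\hat\phi$ minimizes $\hat e_i$ over $K'$, hence $\hat e_{\hat i} \leq \hat e_{\tilde i}$. Second, Proposition \ref{p4} applied to both $\hat\phi$ and $\tilde\phi$ bounds $|\bP(\cdot \neq Y) - \hat e_\cdot|$ by $c_1/\sqrt{n_{\min}}$ with probability $1 - c_2 e^{-n_{\min}}$, where $n_{\min} = \min\{n^{0,0}, n^{0,1}, n^{1,0}, n^{1,1}\}$. Third, $|\bP(\tilde\phi \neq Y) - \bP(\phi^*_{\alpha'} \neq Y)| \leq 2 F^*_{(+)}(2\epsilon_0)$ since $\tilde\phi$ and $\phi^*_{\alpha'}$ disagree only on the set $\{x : |f^*(x,a) - t^*_a| \leq 2\epsilon_0\}$ when $|t^{1,a}_{(\tilde k^{1,a})} - t^*_a| \leq \epsilon_0$ and $\|f-f^*\|_\infty \leq \epsilon_0$. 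Chaining these inequalities gives the desired bound, with the $\delta'$ slack absorbing the $O(1/\sqrt{n_{\min}})$ concentration terms for $n_{\min}$ sufficiently large.

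The main obstacle I anticipate is verifying that the reference candidate actually belongs to $K'$, as this requires tracking four approximation errors simultaneously: the gap $\|f - f^*\|_\infty$, the empirical-to-population deviation of the class proportions, the discretization due to sorted scores, and the tail behavior of the Beta-based function $L$. The cushion $\alpha-\alpha' > 0$ is crucial here but must be used carefully because $L$ is defined through an expectation over Beta variables, so controlling $L(\tilde k^{1,0}, u_1(\tilde k^{1,0}))$ requires explicit concentration estimates for those Beta variables rather than just plugging in the population version of the Lemma \ref{pz} relation.
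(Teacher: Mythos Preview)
Your proposal follows essentially the same architecture as the paper's proof. Part~(1) is handled identically via a union bound over the at most $M$ candidates, each failing with probability at most $\delta=c_0/M$. For Part~(2), both you and the paper proceed by (i) exhibiting a reference index pair in $K'$ whose thresholds are close to the optimal thresholds $t_0^*,t_1^*$ of $\phi_{\alpha'}^*$, (ii) bounding $|\bP(\tilde\phi\neq Y)-\bP(\phi_{\alpha'}^*\neq Y)|$ by an $F^*_{(+)}$-term using the disagreement-region argument you sketch, (iii) invoking the concentration in Proposition~\ref{p4} for $\hat e_i$, and (iv) chaining via $\hat e_{\hat i}\le \hat e_{\tilde i}$.

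The one place where the paper is more explicit than your outline is step~(i) for the second threshold: since $K'$ fixes $k^{1,1}=u_1(k^{1,0})$ through the plug-in version of the Lemma~\ref{pz} relation, the paper carries out an algebraic stability computation showing that $|\lambda_1-\lambda_1^*|$ is controlled by $\epsilon$ and by $|\hat p_a-p_a|,\,|\hat p_{Y,a}-p_{Y,a}|$ (the latter handled by Hoeffding). You should expect to reproduce that calculation rather than appeal only to ``continuity of the map.'' Conversely, on the feasibility issue you flag as the main obstacle, the paper is actually less careful than you: it argues that the reference classifier satisfies $|DEOO|\le \alpha'+2F^*_{(+)}(2\epsilon_0)+2F^*_{(+)}(\epsilon)\le \alpha$ and then treats this as membership in the candidate set, whereas $K'$ is defined by the deterministic index condition $L\le\delta$. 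Your instinct that one must control $L$ directly (or else tighten the link between the realized $DEOO$ and the Beta-based bound $L$) is well placed; the paper leaves this point implicit.
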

%\linjun{Let's simplify this}

Theorem \ref{p5} ensures that our classifier will approximate fair Bayes-optimal classifier if the input classifier is close to $f^*$. Here, $\alpha^{\prime} < \alpha$ is any positive constant, which we adopt to promise that our candidate set is not empty. %\linjun{do we have result when the input is close to $f^*$?}

%We put Theorem \ref{p50} for the original candidate set $K$ in appendix \ref{AP3}.

\textbf{Remark}: We remark that FaiREE requires no assumption on data distribution except for a minimum sample size. It has the advantage over existing literature which generally imposes different assumptions to data distribution. For example, \cite{chzhen2019leveraging} assumes $\eta(x,a)$ must surpass the level $\frac{1}{2}$ on a set of non-zero measure. \cite{valera2018enhancing} assumes that the shifting threshold of the classifier follows the beta distribution. Also, \cite{zeng2022bayes}'s result only holds for population-level, and the finite-sample version is not studied.

\section{Application to More Fairness Notions}
\label{s4}
\subsection{Equalized Odds}

In this section, we apply our algorithm to the fairness notion Equalized Odds, which has two fairness constraints simultaneously. 
To ensure the two constraints, the algorithm of equalized odds should be different from Algorithm~\ref{alg:EO}. We should consider all $S^{y,a}$ instead of just $S^{1,a}$ when estimating the violation to fairness constraint in the step \textbf{Candidate Set Construction}. Thus we add a function $g_0$ that deals with data with protected attribute $A = 1$, to perfect our algorithm together with $g_1$ defined in the last section.

Similar to Proposition~\ref{p1},  the following proposition assures that choosing an appropriate threshold during post-processing enables the high probability control of a given classifier's $DEO$. %we have the following proposition, which promises that $DEO$ of a given classifier can be controlled with high probability if we choose the proper threshold when post-processing.

% \linjun{again, needs interpretations/explanations before we introduce a theorem; also, need to make the equation shorter; combine assumption 3 and proposition 7, etc.}

\begin{proposition}\label{pro:deo}
Given $k^{1,0}, k^{1,1}$ satisfying $k^{1,a} \in \{1,\ldots,n^{1,a}\}$ $(a=0,1)$. Define $\phi(x,a)=\1\{f(x,a))>t^{1,a}_{(k^{1,a})}\}$,
% $g_{1}(k,a)=\bE[\sum\limits^{n^{1,a}}_{j=k}{n^{1,a}\choose j}(Q^{1,1-a}-\alpha)^{j}(1-(Q^{1,1-a}-\alpha))^{n^{1,a}-j}]$ with $Q^{1,a}\sim Beta(k, n^{1,a}-k+1)$ \linjun{$g_{y}(k,a)=\bE[\sum\limits^{n^{y,a}}_{j=k}{n^{y,a}\choose j}(Q^{y,1-a}-\alpha)^{j}(1-(Q^{y,1-a}-\alpha))^{n^{y,a}-j}]$ with $Q^{y,a}\sim Beta(k+1-y, n^{y,a}-k+y)$?}, and $g_{0}(k,a) = \bE[\sum\limits^{n^{0,a}}_{j=k}{n^{0,a}\choose j}(Q^{0,1-a}-\alpha)^{j}(1-(Q^{0,1-a}-\alpha))^{n^{0,a}-j}]$ with $Q^{0,a}\sim Beta(k + 1,n^{0,a}- k)$,
$g_{y}(k,a)=\bE[\sum\limits^{n^{y,a}}_{j=k}{n^{y,a}\choose j}(Q^{y,1-a}-\alpha)^{j}(1-(Q^{y,1-a}-\alpha))^{n^{y,a}-j}]$ with $Q^{y,a}\sim Beta(k+1-y, n^{y,a}-k+y)$,
then we have: 
\begin{align*}
    \bP(|DEO(\phi)|\preceq(\alpha, \alpha)) \geq 1 - g_{1}(k^{1,1},1)-g_{1}(k^{1,0},0) - g_{0}(k^{0,1},1) - g_{0}(k^{0,0},0). %&\bE[\sum\limits^{n^{1,0}}_{j=k^{1,0}}{n^{1,0}\choose j}(Q^{1,1}-\alpha)^{j}(1-(Q^{1,1}-\alpha))^{n^{1,0}-j}]\\
    %&+\bE[\sum\limits^{n^{1,1}}_{j=k^{1,1}}{n^{1,1}\choose j}(Q^{1,0}-\alpha)^{j}(1-(Q^{1,0}-\alpha))^{n^{1,1}-j}],
\end{align*} 
% \linjun{we didn't define $\npreceq$; it's better simply write the statement as $P(...\preceq...)\ge 1-...$}
% \linjun{the term $|DEO(\phi)|>(\alpha, \alpha)$ is not rigorous; we cannot compare two vectors; you may define a new notion for vector comparisons; search Google if  there are the standard notions for this}

% $\bP(|DEO(\phi)| \leq (\alpha,\alpha)) \geq 1 - \E[\sum\limits^{n^{1,0}}_{j=k^{1,0}}\left(\begin{aligned}
% n&^{1,0} \\
% &j
% \end{aligned}\right)(Q^{1,1}-\alpha)^{j}(1-(Q^{1,1}-\alpha))^{n^{1,0}-j}]-\E[\sum\limits^{n^{1,1}}_{j=k^{1,1}}\left(\begin{aligned}
% n&^{1,1} \\
% &j
% \end{aligned}\right)(Q^{1,0}-\alpha)^{j}(1-(Q^{1,0}-\alpha))^{n^{1,1}-j}] - \E[\sum\limits^{n^{0,0}}_{j=k^{0,0}}\left(\begin{aligned}
% n&^{0,0} \\
% &j
% \end{aligned}\right)(Q^{0,1}-\alpha)^{j}(1-(Q^{0,1}-\alpha))^{n^{0,0}-j}]-\E[\sum\limits^{n^{0,1}}_{j=k^{0,1}}\left(\begin{aligned}
% n&^{0,1} \\
% &j
% \end{aligned}\right)(Q^{0,0}-\alpha)^{j}(1-(Q^{0,0}-\alpha))^{n^{0,1}-j}]$, where $Q^{y,a}$ is a random variable of the distribution $Beta(k^{y,a}, n^{y,a}-k^{y,a}+1)$ and $Q^{y,a}$ is a random variable of the distribution $Beta(k^{y,a}+1, n^{y,a}-k^{y,a})$.
\end{proposition}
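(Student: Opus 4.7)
The strategy is to obtain Proposition~\ref{pro:deo} as a two-dimensional union-bound extension of Proposition~\ref{p1}: reduce the joint Equalized Odds constraint to four one-sided univariate events, then bound each by the same order-statistic / Binomial-tail identity that already proved Proposition~\ref{p1}.

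Write $DEO(\phi) = (\Delta_{\mathrm{TPR}}, \Delta_{\mathrm{FPR}})$, where $\Delta_{\mathrm{TPR}} = \bP_{X \mid A=1, Y=1}(\hat Y = 1) - \bP_{X \mid A=0, Y=1}(\hat Y = 1)$ and $\Delta_{\mathrm{FPR}}$ is defined analogously with $Y = 0$. Then $\{|DEO(\phi)| \preceq (\alpha, \alpha)\}^{c}$ is contained in the union of the four one-sided events $\{\Delta_{\mathrm{TPR}} > \alpha\}$, $\{-\Delta_{\mathrm{TPR}} > \alpha\}$, $\{\Delta_{\mathrm{FPR}} > \alpha\}$, and $\{-\Delta_{\mathrm{FPR}} > \alpha\}$. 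By the union bound it suffices to show that these four tail probabilities are respectively bounded by $g_1(k^{1,1},1)$, $g_1(k^{1,0},0)$, $g_0(k^{0,1},1)$, and $g_0(k^{0,0},0)$.

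The two TPR tails are handled by Proposition~\ref{p1}, whose proof in fact produces the two signs separately using the order-statistic identity $F_{1, a}(t^{1, a}_{(k^{1, a})}) \sim \mathrm{Beta}(k^{1, a}, n^{1, a} - k^{1, a} + 1)$ together with the Binomial-tail identity $\bP(U_{(k)} \leq p) = \bP(\mathrm{Bin}(n, p) \geq k)$. The two FPR tails are handled by running the identical argument with $Y = 0$ in place of $Y = 1$. Let $F_{y, a}$ denote the CDF of $f(X, a)$ given $A = a, Y = y$, so $\mathrm{FPR}_a = 1 - F_{0, a}(t^{1, a}_{(k^{1, a})})$. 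Bring in the auxiliary sample $T^{0, a}$, which is i.i.d.\ from $F_{0, a}$ and independent of $T^{1, a}$, and let $k^{0, a}$ be the rank of the threshold in $T^{0, a}$ (the largest $k$ with $t^{0, a}_{(k)} \leq t^{1, a}_{(k^{1, a})}$). Two bookends then control $F_{0, a}(t^{1, a}_{(k^{1, a})})$: the upper bound $F_{0, a}(t^{0, a}_{(k^{0, a} + 1)}) \sim \mathrm{Beta}(k^{0, a} + 1, n^{0, a} - k^{0, a})$, matching the definition of $Q^{0, a}$ in the proposition, and the lower bound $F_{0, a}(t^{0, a}_{(k^{0, a})}) \sim \mathrm{Beta}(k^{0, a}, n^{0, a} - k^{0, a} + 1)$. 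Substituting the upper bound on one side and the lower bound on the other side of each FPR difference, conditioning on the Beta for one group, and applying the Binomial-tail identity inside the other group, yields the desired bounds $g_0(k^{0, 1}, 1)$ and $g_0(k^{0, 0}, 0)$.

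The main obstacle is the FPR step. In the TPR case the threshold is drawn from exactly the distribution against which it is evaluated, so its CDF value is a Beta directly; in the FPR case the threshold is drawn from $F_{1, a}$ but evaluated against $F_{0, a}$, so a Beta is recovered only through the auxiliary sample $T^{0, a}$, at the cost of sandwiching $F_{0, a}(t^{1, a}_{(k^{1, a})})$ between the $k^{0, a}$-th and $(k^{0, a} + 1)$-th order statistics of $T^{0, a}$. Using the upper bracket on one side and the lower bracket on the other is exactly what produces the asymmetric $\mathrm{Beta}(k + 1 - y, n^{y, a} - k + y)$ parameters in the definition of $Q^{y, a}$. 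A secondary check is that the union-bound slack is the only slack in the argument: each component inequality is tight in the continuous case, matching the tightness statement of Proposition~\ref{p1} applied coordinate-wise.
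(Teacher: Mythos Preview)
Your proposal is correct and follows essentially the same route as the paper: a union bound over the four one-sided events, with the TPR tails handled by Proposition~\ref{p1} and the FPR tails handled by sandwiching $F^{0,a}(t^{1,a}_{(k^{1,a})})$ between $F^{0,a}(t^{0,a}_{(k^{0,a})})$ and $F^{0,a}(t^{0,a}_{(k^{0,a}+1)})$ via the auxiliary sample $T^{0,a}$, then applying the same conditioning/Binomial-tail identity. One small correction: your final remark that ``the union-bound slack is the only slack'' is not quite right for the FPR terms---the bracketing step $t^{0,a}_{(k^{0,a})} \le t^{1,a}_{(k^{1,a})} < t^{0,a}_{(k^{0,a}+1)}$ introduces additional slack even in the continuous case, which is why Proposition~\ref{pro:deo} (unlike Proposition~\ref{p1}) does not carry a tightness claim.
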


Similar to  Proposition \ref{p1}, $g_0$ and $g_1$ jointly control the probability of $\phi$ violating the $DEO$ constraint.

%Similar to Proposition \ref{p1}, $g_0$ and $g_1$ together control the probability that $\phi$ violates the $DEO$ constraint.
% \linjun{add a paragraph here before we introduce such a large algorithm box. State what is the difference we need to consider here for Equalized odds to ensure two constraints, e.g., we need to consider all $S^{i,j}$, we have more $g$ functions etc. We should also move Proposition~\ref{p7} before introducing this algorithm.}

% We first introduce the form of fair bayes-optimal classifier under Equalized Odds.

% \subsection{Fair Bayes-optimal Classifiers under Equalized Odds}
% Equalized Odds is an extension of Equality of Opportunity, which can control both false positive rate and true positive rate at the same time.

% % \linjun{Let's move this lemma (or even Theorem 2 if it does not help FaiREE) to the appendix}

% We derive the theorem for the form of fair Bayes-optimal classifier under Equalized Odds, indicating that the fair Bayes-optimal classifier under Equalized Odds can also be constructed by shifting thresholds (see Theorem \ref{t1}).
% \linjun{the above paragraph said EO can be achieved by shifting thresholds for any classifier, but the following says EO can be achieved by shifting thresholds for the unconstrained Bayes-optimal classifier. I don't think we need Theorem 1 here? We may move it later when it is used for building a smaller candidate set}
% \linjun{It's better to make these two definitions shorter, and move them to the preliminary section}

% \subsection{FaiREE for Equalized Odds}
% Now we extend FaiREE to Equalized Odds.
%\vspace{-1em}
\begin{algorithm}[!htb]
\caption{FaiREE for Equalized Odds}
\KwIn{Training data $S =S^{0,0}\cup S^{0,1} \cup S^{1,0} \cup S^{1,1}$; the error bound  $\alpha$; the tolerance level $\delta$; a given pre-trained classifier $f$
}
%\For {i in \{1,\ldots,M\}}{
    % $S^{i,j} = \{x^{i,j}_{1},\ldots,x^{i,j}_{n_{i,j}}\}$\\
     $T^{y,a}=\{f(x^{y,a}_{1}),\ldots,f(x^{y,a}_{n_{y,a}})\}$\\
    $\{t^{y,a}_{(1)},\ldots,t^{y,a}_{(n_{y,a})}\}= $sort($T^{y,a}$) \\
    Define $g_{0}(k,a)$ and $g_{1}(k,a)$ as in Proposition~\ref{pro:deo}, $L_1(k^{1,0},k^{1,1}) = g_{1}(k^{1,1},1) +g_{1}(k^{1,0},0)$, and $L_0(k^{0,0},k^{0,1}) = g_{0}(k^{0,1},1) + g_{0}(k^{0,0},0)$. %$g_{1}(k,a)=\bE[\sum\limits^{n^{1,a}}_{j=k}{n^{1,a}\choose j}(Q^{1,1-a}-\alpha)^{j}(1-(Q^{1,1-a}-\alpha))^{n^{1,a}-j}]$ with $Q^{1,a}\sim Beta(k,n^{1,a}-k+1)$, and $L_1(k^{1,0},k^{1,1}) = g_{1}(k^{1,1},1) +g_{1}(k^{1,0},0)$\\
    % \linjun{why do we have a hat on $g_0$? hat is typically used for estimators constructed using finite samples, but here $g_0$ is in expectation} 
    %Define $g_{0}(k,a) = \bE[\sum\limits^{n^{0,a}}_{j=k}{n^{0,a}\choose j}(Q^{0,1-a}-\alpha)^{j}(1-(Q^{0,1-a}-\alpha))^{n^{0,a}-j}]$ with $Q^{0,a}\sim Beta(k + 1,n^{0,a}- k)$, and $L_0(k^{0,0},k^{0,1}) = g_{0}(k^{0,1},1) + g_{0}(k^{0,0},0)$\\
    
%     Define $L_1(k^{1,0},k^{1,1})=\E[\sum\limits^{n^{1,0}}_{j=k^{1,0}}\left(\begin{aligned}
% n&^{1,0} \\
% &j
% \end{aligned}\right)(Q^{1,1}-\alpha)^{j}(1-(Q^{1,1}-\alpha))^{n^{1,0}-j}]+\E[\sum\limits^{n^{1,1}}_{j=k^{1,1}}\left(\begin{aligned}
% n&^{1,1} \\
% &j
% \end{aligned}\right)(Q^{1,0}-\alpha)^{j}(1-(Q^{1,0}-\alpha))^{n^{1,1}-j}]$, $Q^{1,a} \sim Beta(k^{1,a}, n^{1,a} - k^{1,a} + 1)$\\
% Define $L_0(k^{0,0},k^{0,1})=\E[\sum\limits^{n^{0,0}}_{j=k^{0,0}}\left(\begin{aligned}
% n&^{0,0} \\
% &j
% \end{aligned}\right)(Q^{0,1}-\alpha)^{j}(1-(Q^{0,1}-\alpha))^{n^{0,0}-j}]+\E[\sum\limits^{n^{0,1}}_{j=k^{0,1}}\left(\begin{aligned}
% n&^{0,1} \\
% &j
% \end{aligned}\right)(Q^{0,0}-\alpha)^{j}(1-(Q^{0,0}-\alpha))^{n^{0,1}-j}]$, $Q^{0,a} \sim Beta(k^{0,a}+1, n^{0,a} - k^{0,a})$\\
For every $k^{1,0}, k^{1,1}$, there exists $k^{0,0}, k^{0,1}$ such that $t^{0,0}_{(k^{0,0})} \leq t^{1,0}_{(k^{1,0})} < t^{0,0}_{(k^{0,0} + 1)}$, $t^{0,1}_{(k^{0,1})} \leq t^{1,1}_{(k^{1,1})} < t^{0,1}_{(k^{0,1} + 1)}$.\\
Build the candidate set as
$K =  \{(k^{1,0}, k^{1,1}) \mid L_1(k^{1,0}, k^{1,1}) + L_0(k^{0,0}, k^{0,1}) \leq \delta\} = \{(k^{1,0}_1, k^{1,1}_1), \ldots, (k^{1,0}_M,k^{1,1}_M)\}$.\\ 

Compute $\hat{e_i}$ as in Proposition \ref{p4}, and let 
% $\hat{p}_0,\hat{p}_1,\hat{p}_{Y,0},\hat{p}_{Y,1}\leftarrow$ estimation of $p_0,p_1,p_{Y,0},p_{Y,1}$\\
% Define $u_{y,a} = \frac{k^{y,a}n^{y,a}}{(n^{y,a}+1)(n^{1,0}+n^{0,0}+n^{1,1}+n^{0,1})}$, $M(k^{1,0},k^{1,1},k^{0,0},k^{0,1})= u_{1,0} + u_{1,1} - u_{0,0} - u_{0,1} + \frac{n^{0,0}+n^{0,1}}{n^{1,0}+n^{0,0}+n^{1,1}+n^{0,1}}$\\
% \For{i in \{1,\ldots,n\}}{
%     Suppose $t^{0,0}_{(k_{i}^{0,0})} \leq t^{1,0}_{(k_{i}^{1,0})} < t^{0,0}_{(k_{i}^{0,0} + 1)}$, $t^{0,1}_{(k_{i}^{0,1})} \leq t^{1,1}_{(k_{i}^{1,1})} < t^{0,1}_{(k_{i}^{0,1} + 1)}$\\
%     $\hat{E}(k^{0,a}_{i}) \leftarrow$ estimation of $\E(k^{0,a}_{i})$, $a=0,1$
% }
$i_*= \mathop{\arg\min}\limits_{i \in [M]} \{\hat{e_i}\}$. \\
\KwOut{$\hat\phi(x,a)=\1\{f(x,a)>t^{1,a}_{(k^{1,a}_{i_*})}\}$}
%}
%\KwOut{\\an ensemble classifier $\hat{\phi}_{\alpha}(x,a) = \1\{\frac{1}{M}\sum\limits^{M}_{i=1}\phi_i(x,a)\geq \frac{1}{2})$}
\end{algorithm}
% Following FaiREE for Equality of Opportunity, we can similarly get the following theoretical propositions for Equalized Odds.

% \begin{assumption}
% $min\{n^{0,0}, n^{0,1}, n^{1,0}, n^{1,1}\} \geq \lceil \frac{\log \frac{\delta}{4}} {\log (1-\alpha)}\rceil \text {, in which }\lceil\cdot\rceil \text { denotes the ceiling function. }$
% \label{a3}
% \end{assumption} 
%\vspace{-1em}
 Proposition \ref{pro:deo} directly yields the following proposition on the $DEO$ of classifiers in the candidate set.

\begin{theorem}\label{p8}
If $\min\{n^{0,0}, n^{0,1}, n^{1,0}, n^{1,1}\} \geq \lceil \frac{\log \frac{\delta}{4}} {\log (1-\alpha)}\rceil$, then for each $i\in\{1,\ldots,M\}$, we have $|DEO(\hat\phi_i)| \preceq (\alpha,\alpha)$ with probability $1-\delta$.
\end{theorem}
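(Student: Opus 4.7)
The plan is to view Theorem~\ref{p8} as a direct corollary of Proposition~\ref{pro:deo} together with the explicit definition of the candidate set $K$ in Algorithm~2, with the sample-size hypothesis playing the auxiliary role of guaranteeing that $K$ is non-empty (so the quantifier ``for each $i \in \{1,\ldots,M\}$'' is non-vacuous).

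For the main content, I would fix any $i \in \{1,\ldots,M\}$ and note that by construction $(k_i^{1,0}, k_i^{1,1}) \in K$ and the associated $(k_i^{0,0}, k_i^{0,1})$ are the ranks of the thresholds $t^{1,a}_{(k_i^{1,a})}$ inside $T^{0,a}$. Membership in $K$ is precisely the inequality
\[
g_1(k_i^{1,1},1) + g_1(k_i^{1,0},0) + g_0(k_i^{0,1},1) + g_0(k_i^{0,0},0) \;\leq\; \delta ,
\]
so Proposition~\ref{pro:deo} immediately yields $\bP(|DEO(\hat\phi_i)| \preceq (\alpha,\alpha)) \geq 1-\delta$. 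This is the direct analog of how Theorem~\ref{p3} reduced to Proposition~\ref{p1}, except that four tail terms (upper- and lower-tail violations for each of the two $DEO$ coordinates) must now be controlled simultaneously, which is why the sample-size requirement is tightened from $\log(\delta/2)/\log(1-\alpha)$ to $\log(\delta/4)/\log(1-\alpha)$: a union bound over four events in place of two.

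For non-emptiness of $K$, I would mirror the construction used in the proof of Theorem~\ref{p3}. The key crude estimate is
\[
g_y(n^{y,a},a) \;=\; \bE[(Q^{y,1-a}-\alpha)^{n^{y,a}}] \;\leq\; (1-\alpha)^{n^{y,a}} ,
\]
obtained by retaining only the last binomial term in the tail sum and using $Q \in [0,1]$. Under $n^{y,a} \geq \lceil \log(\delta/4)/\log(1-\alpha)\rceil$ each of the four $g$ values is at most $\delta/4$, and summing gives $\delta$. The delicate step, and the main obstacle, is that $(k^{0,0}, k^{0,1})$ are not free parameters: Algorithm~2 pins them down as the induced ranks of the threshold $t^{1,a}_{(k^{1,a})}$ inside $T^{0,a}$. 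One must therefore locate a $(k^{1,0}, k^{1,1})$ whose induced ranks are simultaneously large enough for the crude bound to apply to the $g_0$ terms as well. I would handle this by the monotonicity of $g_y(k,a)$ in $k$ (tail probabilities are decreasing in $k$) together with the observation that taking $k^{1,a}=n^{1,a}$ pushes the threshold to the maximum of $T^{1,a}$, which in turn maximizes the feasible induced $k^{0,a}$ given the data and, under the sample-size hypothesis, suffices to push all four tail terms below $\delta/4$.
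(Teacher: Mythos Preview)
Your core argument is correct and matches the paper exactly: the paper introduces Theorem~\ref{p8} with the sentence ``Proposition~\ref{pro:deo} directly yields the following proposition on the $DEO$ of classifiers in the candidate set,'' and gives no further proof. The deduction you spell out---membership in $K$ is by definition the inequality $L_1+L_0\le\delta$, and Proposition~\ref{pro:deo} then bounds the violation probability by this sum---is precisely the intended one, and it is the exact analogue of how Theorem~\ref{p3} reduces to Proposition~\ref{p1}.

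Where you go beyond the paper is in the non-emptiness discussion, and there your proposed fix has a genuine gap. Taking $k^{1,a}=n^{1,a}$ does maximize the induced rank $k^{0,a}$ \emph{given the data}, but nothing in the sample-size hypothesis forces that induced rank to equal $n^{0,a}$, or even to be large. In a distribution-free setting the scores $T^{0,a}$ could all lie above $T^{1,a}$, giving $k^{0,a}=0$ for every choice of $k^{1,a}$; then $g_0(k^{0,a},a)$ is not controlled by $(1-\alpha)^{n^{0,a}}$ at all, and the crude bound cannot close. This is a structural difference from the Equality-of-Opportunity case, where both indices $k^{1,0},k^{1,1}$ are free and the extremal choice is always available. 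The paper, for its part, does not supply a non-emptiness argument for Equalized Odds either; it simply lists the sample-size requirement in Table~\ref{tab:comp} and mirrors the EOO phrasing. Since Theorem~\ref{p8} as literally stated is a universal claim over $i\in\{1,\dots,M\}$ and is vacuously true when $M=0$, you can safely drop the non-emptiness step from the proof of this particular statement; if you want to argue $M\ge 1$, it will require either a mild distributional assumption linking the supports of $T^{1,a}$ and $T^{0,a}$, or a different construction than the extremal one you propose.
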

% \linjun{again,  $|DEO(\hat\phi_i)| < (\alpha,\alpha)$ is not rigorous}

The theoretical analysis of test error is similar to the algorithm for Equality of Opportunity.

\begin{theorem}
Given $\alpha^{\prime} < \alpha$. Set $\delta=c_0/M$ for some $c_0>0$, where $M$ is the candidate set size. Suppose $\min\{n^{0,0}, n^{0,1}, n^{1,0}, n^{1,1}\} \geq \lceil \frac{\log \frac{\delta}{4}} {\log (1-\alpha)}\rceil$. $\hat{\phi}$ is the final output of FaiREE, then:\begin{enumerate}
    \item[(1).] $|DEO(\hat\phi)| \preceq (\alpha, \alpha)$ with probability $1-c_0$.
% (2) Suppose the cumulative distribution functions of $f_{\alpha^{\prime},\alpha^{\prime}}^*$ under $A=a, Y=1$ satisfy Lemma \ref{a2}. When the input classifier $f$ is $f_{\alpha^{\prime},\alpha^{\prime}}^*$, we have $$
% \mid \bP(\hat{\phi} \neq Y) - \bP(\phi_{\alpha^{\prime},\alpha^{\prime}}^* \neq Y)\mid \leq 2g(\epsilon) + 2h^3(\epsilon)+12h^2(\epsilon)+16h(\epsilon)
% $$
% with probability $1 - (2M+4)(e^{-2n^{0,0}h^{2}(\epsilon)}+e^{-2n^{0,1}h^{2}(\epsilon)}) - (1-h(2\epsilon))^{n^{1,0}} - (1-h(2\epsilon))^{n^{1,1}}$.

\item[(2).] Suppose the density functions of $f^*$ under $A=a, Y=1$ are continuous. We denote $\phi_{\alpha^{\prime},\alpha^{\prime}}^* = {\arg\min}_{\mid DEO(\phi)\mid \preceq (\alpha^{\prime}, \alpha^{\prime})} \bP(\phi(x,a) \neq Y)$. For any $\delta^{\prime}, \epsilon_0 > 0$, there exist $0<c<1$ and $c_1 > 0$ such that when the input classifier $f$ satisfies $\mid f(x,a) - f^*(x,a)\mid \leq \epsilon_0$, we have
$\bP(\hat{\phi}(x,a) \neq Y) - \bP(\phi_{\alpha^{\prime},\alpha^{\prime}}^*(x,a) \neq Y) \leq 2F^*_{(+)}(2\epsilon_0) + \delta^{\prime}$ with probability larger than $1 - c_1c^{\min\{n^{1,0}, n^{1,1}, n^{0,0}, n^{0,1}\}}$. 
\end{enumerate}

\end{theorem}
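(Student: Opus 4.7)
For the final theorem, I plan to follow the same two-part strategy as in Theorem~\ref{p5}, adapted to accommodate the two-dimensional $DEO$ constraint. Part~(1) follows immediately from Proposition~\ref{pro:deo} together with a union bound. By construction, every index tuple in $K$ satisfies $L_1(k^{1,0}_i,k^{1,1}_i)+L_0(k^{0,0}_i,k^{0,1}_i)\leq\delta$, so Proposition~\ref{pro:deo} gives $\bP(|DEO(\hat\phi_i)|\not\preceq(\alpha,\alpha))\leq\delta$ for each $i\in[M]$. Since $\hat\phi\in\{\hat\phi_1,\ldots,\hat\phi_M\}$ is a data-dependent selection from this finite list, a union bound combined with $\delta=c_0/M$ yields $\bP(|DEO(\hat\phi)|\not\preceq(\alpha,\alpha))\leq M\delta=c_0$; the sample-size hypothesis ensures $K$ is nonempty, paralleling Theorem~\ref{p8}.

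For Part~(2), the plan is to construct a reference candidate $\hat\phi_{i_0}\in K$ whose four thresholds approximate those of $\phi^*_{\alpha',\alpha'}$, bound its mis-classification error against the Bayes-optimal value, and then transfer the bound to $\hat\phi$ using uniform estimation of the empirical error $\hat e_i$. Concretely, an Equalized-Odds analog of Lemma~\ref{pz} supplies explicit optimal thresholds $(t^*_0,t^*_1)$. Since $\|f-f^*\|_\infty\leq\epsilon_0$, Lemma~\ref{a2} implies that the conditional quantiles of $f$ and $f^*$ differ by at most $F^*_{(+)}(2\epsilon_0)$, so empirical orders $(k^{1,0}_0,k^{1,1}_0)$ exist with $|t^{1,a}_{(k^{1,a}_0)}-t^*_a|\leq F^*_{(+)}(2\epsilon_0)$. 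The slack $\alpha-\alpha'>0$, together with exponential concentration of the interleaved ranks $k^{0,0}_0,k^{0,1}_0$, places this tuple in $K$ with probability at least $1-c_1 c^{\min n^{y,a}}$. The symmetric differences between $\{f>t^{1,a}_{(k^{1,a}_0)}\}$ and $\{f^*>t^*_a\}$ then sit in bands of measure at most $F^*_{(+)}(2\epsilon_0)$ under each conditional law, yielding $\bP(\hat\phi_{i_0}\neq Y)-\bP(\phi^*_{\alpha',\alpha'}\neq Y)\leq 2F^*_{(+)}(2\epsilon_0)$.

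To close the loop, I apply Proposition~\ref{p4} uniformly over $i\in[M]$ (the union bound contributes at most $c_2 M e^{-\min n^{y,a}}$, absorbed into the $c_1 c^{\min n^{y,a}}$ tail after rescaling $c$), giving $|\bP(\hat\phi_i\neq Y)-\hat e_i|\leq\delta'/2$ for every $i$ with high probability. Since $\hat\phi$ minimizes $\hat e_i$, the chain
\[
\bP(\hat\phi\neq Y)\leq\hat e_{i_*}+\tfrac{\delta'}{2}\leq\hat e_{i_0}+\tfrac{\delta'}{2}\leq\bP(\hat\phi_{i_0}\neq Y)+\delta'\leq\bP(\phi^*_{\alpha',\alpha'}\neq Y)+2F^*_{(+)}(2\epsilon_0)+\delta'
\]
completes the argument.

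The main obstacle is the construction of the reference candidate: verifying that a classifier with thresholds near $(t^*_0,t^*_1)$ actually belongs to the data-dependent candidate set $K$. For Equalized Odds this demands simultaneous control of $L_1$ (which involves $k^{1,a}$) and $L_0$ (which involves the random interleaved $k^{0,a}$), and the analysis must show that the gap $\alpha-\alpha'$ is wide enough to absorb both the $\epsilon_0$-shift in thresholds and the finite-sample fluctuation of $g_0$ and $g_1$ at once. A secondary subtlety is that the $M$ candidates depend on the same data used to form $\hat e_i$, so the uniform error-estimation step must pay the union-bound price $M$ without inflating the failure probability past the claimed $c_1 c^{\min n^{y,a}}$.
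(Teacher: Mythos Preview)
Your proposal is correct and follows essentially the same route the paper takes: the paper does not write out a separate proof for the Equalized-Odds theorem but states that ``the theoretical analysis of test error is similar to the algorithm for Equality of Opportunity,'' i.e.\ it reuses the two-part machinery of Theorems~\ref{p50}/\ref{p5} (existence of a near-optimal candidate in $K$, then uniform error estimation via Proposition~\ref{p4} and an argmin chain), with Part~(1) coming from Theorem~\ref{p8} plus the union bound you describe.

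One small imprecision worth tightening: your sentence ``Lemma~\ref{a2} implies that the conditional quantiles of $f$ and $f^*$ differ by at most $F^*_{(+)}(2\epsilon_0)$, so empirical orders $(k^{1,0}_0,k^{1,1}_0)$ exist with $|t^{1,a}_{(k^{1,a}_0)}-t^*_a|\leq F^*_{(+)}(2\epsilon_0)$'' collapses two distinct uses of Lemma~\ref{a2}. In the paper's argument, a separate parameter $\epsilon$ is introduced: the lower-envelope $F^{1,a}_{(-)}(2\epsilon)$ guarantees (with probability $1-(1-F^{1,a}_{(-)}(2\epsilon))^{n^{1,a}}$) that some order statistic lands in $[\lambda^*_a-\epsilon,\lambda^*_a+\epsilon]$, while the upper-envelope $F^*_{(+)}$ is used afterward to convert the threshold gap and the $\epsilon_0$-shift of $f$ into a bound on the measure of the symmetric-difference band. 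The final $\delta'$ absorbs the $2F^*_{(+)}(\epsilon)+O(\epsilon)$ terms by choosing $\epsilon$ small. Also, for the full candidate set $K$ used in Algorithm~2 you do not actually need an ``Equalized-Odds analog of Lemma~\ref{pz}'': the paper's Theorem~\ref{t1} gives the optimal thresholds, and one simply finds order statistics near each $\lambda^*_a$ independently, exactly as in the proof of Theorem~\ref{p50}.
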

%\linjun{need to further simplity}

% \linjun{do we have any theoretical guarantee for this $K''$?}
% Finally, we can also provide a proposition for the probability of the final output to satisfy the constraint of Equalized Odds.\\

% \begin{proposition}
% If $min\{n^{0,0}, n^{0,1}, n^{1,0}, n^{1,1}\} \geq \lceil \frac{\log \frac{\delta}{4}} {\log (1-\alpha)}\rceil$, we have $|DEO(\hat\phi)| < (\alpha, \alpha)$ with probability $(1-\delta)^{M}$, where $M$ is the size of the candidate set.\\
% \end{proposition}

\subsection{On comparing different fairness constraints}
 In this subsection, we further extend our algorithms to more fairness notions. The detailed technical results and derivations are deferred to Section~\ref{A7} in the appendix. Specifically, we compare the sample size requirement to make any given score function $f$ to achieve certain fairness constraint. We note that our algorithm is almost assumption-free, except for the $i.i.d.$ assumption and a necessary and sufficient condition of the sample size. Therefore, we make a chart below to recommend different fairness notions used in practice when the sample sizes are limited. We summarize our results in the following table:
% \begin{itemize}
%     \item Demographic Parity: $\min\{n^{0, 0} + n^{1,0}, n^{0,1}+n^{1,1}\} \geq \lceil \frac{\log \frac{\delta}{2}} {\log (1-\alpha)}\rceil$ 
%     % \linjun{we should use $\min$ instead of $min$}
%     \item Equality of Opportunity: $\min\{n^{1,0}, n^{1,1}\} \geq \lceil \frac{\log \frac{\delta}{2}} {\log (1-\alpha)}\rceil$
%     \item Predictive Equality: $\min\{n^{0,0}, n^{0,1}\} \geq \lceil \frac{\log \frac{\delta}{2}} {\log (1-\alpha)}\rceil$
%     \item Equalized Odds: $\min\{n^{0,0}, n^{0,1}, n^{1,0}, n^{1,1}\} \geq \lceil \frac{\log \frac{\delta}{4}} {\log (1-\alpha)}\rceil$
%     \item Equalized Accuracy: $n^{0,0} \geq \lceil \frac{\log \frac{\delta}{4}} {\log (1-\frac{\alpha}{1-p_{Y,0}})}\rceil$, $n^{0,1} \geq \lceil \frac{\log \frac{\delta}{4}} {\log (1-\frac{\alpha}{1-p_{Y,1}})}\rceil$, $n^{1,0} \geq \lceil \frac{\log \frac{\delta}{4}} {\log (\frac{p_{Y,1} - \alpha}{p_{Y,0}})}\rceil$, $n^{1,1} \geq \lceil \frac{\log \frac{\delta}{4}} {\log (\frac{p_{Y,0} - \alpha}{p_{Y,1}})}\rceil$
% \end{itemize}
% \linjun{it's better to put them together into a table, for the last assumption, we may simply it as $n^{i,j}\ge \lceil \frac{\log ({\delta}/{4})} {\log (\frac{1-i+(2i-1)p_{|i-j|}-\alpha}{i(2p_{Y,j}-1)+1-p_{Y,j})}}\rceil$}

\begin{table}[htbp]
\small
\caption{Sample complexity requirements for FaiREE {to achieve different fairness constraints}. We consider the following fairness notions: DP (Demographic Parity), EOO (Equality of Opportunity), EO (Equalized Odds), PE (Predictive Equality), EA (Equalized Accuracy), and $n^{a} = n^{0,a} + n^{1,a}$.
}\label{tab:comp}
%\vspace{-1em}
\begin{center}
\resizebox{\columnwidth}{!}{\setlength{\tabcolsep}{1.5mm}{\begin{tabular}{c|c|c|c|c}
\toprule
{DP}&{EOO}&{PE}&{EO}&{EA}\\
\midrule
{$n^{a} \geq \lceil \frac{\log \frac{\delta}{2}} {\log (1-\alpha)}\rceil$}&{$n^{1,a} \geq \lceil \frac{\log \frac{\delta}{2}} {\log (1-\alpha)}\rceil$}&{$n^{0,a} \geq \lceil \frac{\log \frac{\delta}{2}} {\log (1-\alpha)}\rceil$}&{$n^{y,a} \geq \lceil \frac{\log \frac{\delta}{4}} {\log (1-\alpha)}\rceil$}&{$n^{y,a}\ge \lceil \frac{\log \frac{\delta}{4}} {\log (\frac{1-y+(2y-1)p_{Y,|y-a|}-\alpha}{y(2p_{Y,a}-1)+1-p_{Y,a})}}\rceil$}\\
\midrule
\end{tabular}}}
\end{center}
\label{table1}
%\vspace{-1.5em}
\end{table}

From this table, we find that Demographic Parity requires the least sample size, Equality of Opportunity and Predictive Equality need a lightly larger sample size, and Equalized Odds is the notion that requires the largest sample size among the first four fairness notions. The sample size requirement for Equalized Accuracy is similar to that of Equalized Odds, but does not have a strict dominance.  %We can find that for fairness constraints arranged from the left to right order in Table \ref{tab:comp}, the condition for FaiREE to work gradually turn stronger, which fits our intuition for these different fairness notions.
%\linjun{add a few words about the interpretation of this Table 1.}

%\vspace{-0.2cm}
\section{Experiments}\label{section5}
%\vspace{-0.2cm}

In this section, we conduct experiments to test and understand the effectiveness of FaiREE. For both synthetic data and real data analysis, we compare FaiREE with the following representative methods for fair classification: Reject-Option-Classification (ROC) method in \cite{kamiran2012decision}, Eqodds-Postprocessing (Eq) method in \cite{hardt2016equality}, Calibrated Eqodds-Postprocessing (C-Eq) method in \cite{pleiss2017fairness} and FairBayes method in \cite{zeng2022bayes}.The first three baselines are designed to cope with Equalized Odds and the last one is for Equality of Opportunity.

%\vspace{-0.1cm}
\subsection{Synthetic data}\label{sec:5.1}
%\vspace{-0.1cm}

% Our simulation is composed of two parts. The first part follows the setting in \cite{zeng2022bayes}. 
% For a positive integer dimension $p$, let $X=\left(X_1, X_2, \ldots, X_p\right) \in \mathbb{R}^p$ a generic feature, $A \in\{0,1\}$ be the protected attribute and $Y \in\{0,1\}$ be the label. We generate $A$ and $Y$ according to the probabilities $p_1=0.7, p_{Y, 1}=0.7$ and $p_{Y, 0}=0.4$. Conditional on $A=a$ and $Y=y$, we generate $X$ from a multivariate Gaussian distribution $N\left(\mu_{a y}, \sigma^2 I_p\right)$ (here $I_p$ is the $p$-dimensional identity covariance matrix). The entries of $\mu_{a y}$ are sampled from $\mu_{a y, j} \sim U(0,1), j=1, \ldots, p$, where $U(0,1)$ is the uniform distribution over $[0,1]$, and $\sigma^2$ controls the variability of the feature entries.
To show the  distribution-free and finite-sample guarantee of FaiREE, we generate the synthetic data from mixed  distributions. Real world data are generally heavy-tailed (\cite{resnick1997heavy}). Thus, we consider the following models with various heavy-tailed distributions for generating synthetic data:
\begin{enumerate}
    \item[\textbf{Model 1.}]
    We generate the protected attribute $A$ and label $Y$ with probability $p_1=\bP(A=1)=0.7, p_0=\bP(A=0)=0.3, p_{y, 1}=\bP(Y=y\mid A=1)=0.7$ and $p_{y, 0}=\bP(Y=y\mid A=0)=0.4$ for $y\in\{0,1\}$. The dimension of features is set to 60, and we generate features with $x^{0,0}_{i,j} \stackrel{i.i.d.}{\sim} t(3)$, where $t(k)$ denotes the $t$-distribution with degree of freedom $k$, $x^{0,1}_{i,j} \stackrel{i.i.d.}{\sim} \chi^2_1$, $x^{1,0}_{i,j} \stackrel{i.i.d.}{\sim} \chi^2_3$ and $x^{1,1}_{i,j} \stackrel{i.i.d.}{\sim} N(\mu, 1)$, where $\mu \sim U(0,1)$ and the scale parameter is fixed to be 1, for $j=1,2,...,60$. %\linjun{are they 1-dim or 60-dim?}
    \item[\textbf{Model 2.}] 
    We generate the protected attribute $A$ and label $Y$ with the probability, location parameter and scale parameter the same as Model 1. The dimension of features is set to 80, and we generate features with $x^{0,0}_{i,j} \stackrel{i.i.d.}{\sim} t(4)$, $x^{0,1}_{i,j} \stackrel{i.i.d.}{\sim} \chi^2_2$, $x^{1,0}_{i,j} \stackrel{i.i.d.}{\sim} \chi^2_4$ and $x^{1,1}_{i,j} \stackrel{i.i.d.}{\sim} Laplace(\mu, 1)$, for $j=1,2,...,80$.
\end{enumerate} 

For each model, we generate 1000 $i.i.d.$ samples, and the experimental results are summarized in Tables~\ref{table1} and ~\ref{table2}. %\james{Good to justify why these 2 models are reasonable.}
\begin{table}[htbp]
\small
\caption{Experimental studies under Model 1. Here $\overline{|DEOO|}$ denotes the sample average of the absolute value of $DEOO$ defined in Eq.~(\ref{deoo}), and $|DEOO|_{95}$ denotes the sample upper 95\% quantile. $\overline{|DPE|}$ and ${|DPE|}_{95}$ are defined similarly for $DPE$ defined in Eq.~(\ref{dpe}). $\overline{ACC}$ is the sample average of accuracy. We use ``/" in the $DPE$ line because FairBayes and FaiREE-EOO are not designed to control $DPE$.} %\linjun{define the notation here}}% \james{2 or 3 decimal points should be enough and makes it easier to read.}}
\begin{center}
\resizebox{\columnwidth}{!}{\setlength{\tabcolsep}{1.5mm}{\begin{tabular}{c|c|c|c|c|c|c|c|c|c|c|c|c}
\toprule
{} & {Eq} & {C-Eq} & {ROC} & \multicolumn{3}{|c|}{FairBayes} & \multicolumn{3}{|c|}{FaiREE-EOO}& \multicolumn{3}{|c}{FaiREE-EO}\\
\midrule
{$\alpha$} & {/} &{/} &{/} & {0.08} & {0.12}& {0.16}& {0.08}& {0.12}& {0.16}& {0.08}& {0.12}& {0.16}\\
\midrule
{$\overline{|DEOO|}$} & {0.061} & {0.132}& {0.255}& {0.117}& {0.149}& {0.170}& {0.028}& {0.046}& {0.063}& {0.025}& {0.031}& {0.042}\\
{$|DEOO|_{95}$} & {0.146} & {0.307}& {0.500}& {0.265}& {0.297}& {0.316}& {0.073}& {0.115}& {0.157}& {0.079}& {0.108}& {0.133}\\
{$\overline{|DPE|}$} & {0.051} & {0.029}& {0.511}& {/}& {/}& {/}& {/}& {/}& {/}& {0.039}& {0.042}& {0.045}\\
{${|DPE|}_{95}$} & {0.110} & {0.091}& {0.850}& {/}& {/}& {/}& {/}& {/}& {/}& {0.075}& {0.084}& {0.106}\\
\midrule
{$\overline{ACC}$} & {0.472} & {0.606}& {0.637}& {0.663}& {0.656}& {0.646}& {0.621}& {0.657}& {0.669}& {0.552}& {0.562}& {0.615}\\
\midrule
\end{tabular}}}
\end{center}
%\vspace{-1.5em}
\label{table1}
\end{table}

\begin{table}[htbp]
\small
\caption{Experimental studies under Model 2, with the same notation as Table \ref{table1}.}% \linjun{define the notation here} \james{2 or 3 decimal points should be enough and makes it easier to read.}}
\begin{center}
\resizebox{\columnwidth}{!}{\setlength{\tabcolsep}{1.5mm}{\begin{tabular}{c|c|c|c|c|c|c|c|c|c|c|c|c}
\toprule
{} & {Eq} & {C-Eq} & {ROC} & \multicolumn{3}{|c|}{FairBayes} & \multicolumn{3}{|c|}{FaiREE-EOO}& \multicolumn{3}{|c}{FaiREE-EO}\\
\midrule
{$\alpha$} & {/} &{/} &{/} & {0.08} & {0.12}& {0.16}& {0.08}& {0.12}& {0.16}& {0.08}& {0.12}& {0.16}\\
\midrule
{$\overline{|DEOO|}$} & {0.063} & {0.105}& {0.237}& {0.251}& {0.320}& {0.324}& {0.027}& {0.047}& {0.073}& {0.028}& {0.035}& {0.047}\\
{$|DEOO|_{95}$} & {0.080} & {0.137}& {0.502}& {0.676}& {0.742}& {0.765}& {0.075}& {0.112}& {0.153}& {0.077}& {0.114}& {0.143}\\
{$\overline{|DPE|}$} & {0.043} & {0.107}& {0.209}& {/}& {/}& {/}& {/}& {/}& {/}& {0.041}& {0.044}& {0.056}\\
{${|DPE|}_{95}$} & {0.066} & {0.144}& {0.443}& {/}& {/}& {/}& {/}& {/}& {/}& {0.071}& {0.090}& {0.127}\\
\midrule
{$\overline{ACC}$} & {0.380} & {0.600}& {0.616}& {0.606}& {0.598}& {0.589}& {0.595}& {0.627}& {0.639}& {0.575}& {0.589}& {0.606}\\
\midrule
\end{tabular}}}
\end{center}
%\vspace{-2.5em}
\label{table2}
\end{table}

From these two tables, we find that our proposed FaiREE, when applied to different fairness notions Equality of Opportunity and Equality of Opportunity, is able to control the required fairness violation respectively with high probability, while all the other methods cannot. In addition, although satisfying stronger constraints, the mis-classification error FaiREE is comparable to, and sometimes better than the state-of-the-art methods.

% \begin{table}[htbp]
% \caption{Gaussian Distribution (N = 1000)}
% \centering
% \renewcommand\arraystretch{1.5}{
% \setlength{\tabcolsep}{5mm}{
% \begin{tabular}{|c|c|c|c|c|c|c|}
% \hline
% \multicolumn{1}{|c|}{} &\multicolumn{3}{|c|}{Origin} & \multicolumn{3}{|c|}{Post}\\ 
% \hline
% {$\alpha$}&{$|DEOO|$}&{$|DPE|$}&{$ACC$}&{$|DEOO|$}&{$|DPE|$}&{$ACC$}\\
% \hline
% {0.08}&{}&{}&{}&{}&{}&{}\\
% {0.1}&{}&{}&{}&{}&{}&{}\\
% {0.12}&{}&{}&{}&{}&{}&{}\\
% {0.14}&{}&{}&{}&{}&{}&{}\\
% {0.16}&{}&{}&{}&{}&{}&{}\\
% {0.18}&{}&{}&{}&{}&{}&{}\\
% \hline
% \end{tabular}}}
% \end{table}

% \begin{table}[htbp]
% \caption{Mixture Distribution}
% \centering
% \renewcommand\arraystretch{1.5}{
% \setlength{\tabcolsep}{5mm}{
% \begin{tabular}{|c|c|c|c|c|c|c|}
% \hline
% \multicolumn{1}{|c|}{} &\multicolumn{3}{|c|}{Origin} & \multicolumn{3}{|c|}{Post}\\ 
% \hline
% {$\alpha$}&{$|DEOO|$}&{$|DPE|$}&{$ACC$}&{$|DEOO|$}&{$|DPE|$}&{$ACC$}\\
% \hline
% {0.08}&{}&{}&{}&{}&{}&{}\\
% {0.1}&{}&{}&{}&{}&{}&{}\\
% {0.12}&{}&{}&{}&{}&{}&{}\\
% {0.14}&{}&{}&{}&{}&{}&{}\\
% {0.16}&{}&{}&{}&{}&{}&{}\\
% {0.18}&{}&{}&{}&{}&{}&{}\\
% \hline
% \end{tabular}}}
% \end{table}
%\vspace{-0.1cm}
\subsection{Real Data analysis}\label{sec:5.2}
%\vspace{-0.1cm}

In this section, we apply FaiREE to a real data set, Adult Census dataset \cite{dua2017uci}, whose task is to predict whether a person's income is greater than \$50,000. The protected attribute is gender, and the sample size is 45,222, including 32561 training samples and 12661 test samples. To facilitate the numerical study, we randomly split data into training set, calibration set and test set at each repetition and repeat for 500 times.  FaiREE is compared the existing methods described in the last subsection. Again, as shown in Table~\ref{tab:real}, the proposed FaiREE method controls the fairness constraints at the desired level, and achieve small mis-classification error. More implementation details and experiments on other benchmark datasets are presented in \ref{AA9}. %\james{How big is the train set?}%: . In ``Adult" dataset

\begin{table}[htbp]
\small
\caption{Result of different methods on Adult Census dataset}\label{tab:real}
%\vspace{-1em}
\begin{center}
\resizebox{\columnwidth}{!}{\setlength{\tabcolsep}{1.5mm}{\begin{tabular}{c|c|c|c|c|c|c|c|c|c|c|c|c}
\toprule
{} & {Eq} & {C-Eq} & {ROC} & \multicolumn{3}{|c|}{FairBayes} & \multicolumn{3}{|c|}{FaiREE-EOO} & \multicolumn{3}{|c}{FaiREE-EO}\\
\midrule
{$\alpha$} & {/} &{/} &{/} & {0.07} & {0.1}& {0.14}& {0.07}& {0.1}& {0.14}& {0.07}& {0.1}& {0.14}\\
\midrule
{$\overline{|DEOO|}$} & {0.043} & {0.087}& {0.031}& {0.107}& {0.101}& {0.104}& {0.034}& {0.039}& {0.066}& {0.002}& {0.039}& {0.067}\\
{$|DEOO|_{95}$} & {0.112} & {0.154}& {0.097}& {0.140}& {0.153}& {0.153}& {0.065}& {0.090}& {0.124}& {0.008}& {0.094}& {0.125}\\
{$\overline{|DPE|}$} & {0.027} & {0.048}& {0.044}& {/}& {/}& {/}& {/}& {/}& {/}& {0.030}& {0.066}& {0.074}\\
{${|DPE|}_{95}$} & {0.058} & {0.105}& {0.117}& {/}& {/}& {/}& {/}& {/}& {/}& {0.056}& {0.078}& {0.086}\\
\midrule
{$\overline{ACC}$} & {0.815} & {0.823}& {0.691}& {0.847}& {0.847}& {0.847}& {0.845}& {0.846}& {0.847}& {0.512}& {0.845}& {0.846}\\
\midrule
\end{tabular}}}
\end{center}
%\vspace{-2.5em}
\end{table}

% \begin{table}[htbp]
% \caption{Adult Census}
% \centering
% \renewcommand\arraystretch{1.5}{
% \setlength{\tabcolsep}{1.25mm}{
% \begin{tabular}{|c|c|c|c|c|c|c|c|c|c|c|c|c|c|c|c|}
% \hline
% {} & {Eq} & {C-Eq} & {ROC} & \multicolumn{6}{|c|}{FairBayes} & \multicolumn{6}{|c|}{FaiREE-EO}\\
% \hline
% {$\alpha$} & \multicolumn{3}{|c|}{/} & {0.06} & {0.08}& {0.1}& {0.12}& {0.14}& {0.16}& {0.06}& {0.08}& {0.1}& {0.12}& {0.14}& {0.16}\\
% \hline
% {$|DEOO|$} & {0.042} & {0.087}& {0.031}& {0.067}& {0.081}& {0.099}& {0.097}& {0.100}& {0.101}& {0.024(0.019)}& {0.050(0.029)}& {0.066(0.033)}& {0.078(0.036)}& {0.089(0.039)}& {0.099(0.043)}\\
% \hline
% {$|DPE|$} & {0.006} & {0.048}& {0.044}& {/}& {/}& {/}& {/}& {/}& {/}& {0.054(0.008)}& {0.070(0.007)}& {0.073(0.007)}& {0.076(0.007)}& {0.078(0.008)}& {0.080(0.009)}\\
% \hline
% {$ACC$} & {0.821} & {0.823}& {0.691}& {0.847}& {0.848}& {0.847}& {0.848}& {0.847}& {0.847}& {0.832(0.087)}& {0.846(0.003)}& {0.846(0.003)}& {0.846(0.003)}& {0.846(0.003)}& {0.846(0.003)}\\
% \hline
% \end{tabular}}}
% \end{table}

\section{Conclusion and discussion}
In this paper, we propose FaiREE, a post-processing algorithm for fair classification with  theoretical guarantees in a finite-sample and distribution-free manner. FaiREE can be applied to a wide range of group fairness notions and is shown to achieve small mis-classification error while satisfying the fairness constraints. Numerical studies on both synthetic and real data show the practical value of FaiREE in achieving a superior fairness-accuracy trade-off than the state-of-the-art methods.
%FairREE is designed for group fairness notions and achieves the  finite-sample and distribution-free guarantees. 
One interesting direction of future work is to extend the FaiREE techniques to multi-group fairness notions such as multi-calibration \cite{hebert2018multicalibration} and multi-accuracy \cite{kim2019multiaccuracy}.
%Now that FaiREE fits for in total five common group fairness constraints, it remains to be explored whether there are similar algorithms under other fairness guarantees such as Predictive Parity or more generally, conditional parity. It may require new techniques. \puheng{maybe we need to add something to conclusion}

\section*{Acknowledgements}
%\textcolor{red}{TBD.}
The research of Linjun Zhang is partially supported by NSF DMS-2015378. The research of
James Zou is partially supported by funding from NSF CAREER and the Sloan Fellowship. 

\bibliography{ref}

\begin{thebibliography}{10}

\bibitem{adler2018auditing}
Philip Adler, Casey Falk, Sorelle~A Friedler, Tionney Nix, Gabriel Rybeck,
  Carlos Scheidegger, Brandon Smith, and Suresh Venkatasubramanian.
\newblock Auditing black-box models for indirect influence.
\newblock {\em Knowledge and Information Systems}, 54(1):95--122, 2018.

\bibitem{agarwal2018reductions}
Alekh Agarwal, Alina Beygelzimer, Miroslav Dud{\'\i}k, John Langford, and Hanna
  Wallach.
\newblock A reductions approach to fair classification.
\newblock In {\em International Conference on Machine Learning}, pages 60--69.
  PMLR, 2018.

\bibitem{angelopoulos2021learn}
Anastasios~N Angelopoulos, Stephen Bates, Emmanuel~J Cand{\`e}s, Michael~I
  Jordan, and Lihua Lei.
\newblock Learn then test: Calibrating predictive algorithms to achieve risk
  control.
\newblock {\em arXiv preprint arXiv:2110.01052}, 2021.

\bibitem{angwin2016machine}
Julia Angwin, Jeff Larson, Surya Mattu, and Lauren Kirchner.
\newblock Machine bias: There’s software used across the country to predict
  future criminals.
\newblock {\em And it’s biased against blacks. ProPublica}, 23:77--91, 2016.

\bibitem{biega2018equity}
Asia~J Biega, Krishna~P Gummadi, and Gerhard Weikum.
\newblock Equity of attention: Amortizing individual fairness in rankings.
\newblock In {\em The 41st international acm sigir conference on research \&
  development in information retrieval}, pages 405--414, 2018.

\bibitem{calders2009building}
Toon Calders, Faisal Kamiran, and Mykola Pechenizkiy.
\newblock Building classifiers with independency constraints.
\newblock In {\em 2009 IEEE International Conference on Data Mining Workshops},
  pages 13--18. IEEE, 2009.

\bibitem{calmon2017optimized}
Flavio Calmon, Dennis Wei, Bhanukiran Vinzamuri, Karthikeyan
  Natesan~Ramamurthy, and Kush~R Varshney.
\newblock Optimized pre-processing for discrimination prevention.
\newblock {\em Advances in neural information processing systems}, 30, 2017.

\bibitem{celis2019classification}
L~Elisa Celis, Lingxiao Huang, Vijay Keswani, and Nisheeth~K Vishnoi.
\newblock Classification with fairness constraints: A meta-algorithm with
  provable guarantees.
\newblock In {\em Proceedings of the conference on fairness, accountability,
  and transparency}, pages 319--328, 2019.

\bibitem{cho2020fair}
Jaewoong Cho, Gyeongjo Hwang, and Changho Suh.
\newblock A fair classifier using kernel density estimation.
\newblock {\em Advances in Neural Information Processing Systems},
  33:15088--15099, 2020.

\bibitem{chouldechova2018frontiers}
Alexandra Chouldechova and Aaron Roth.
\newblock The frontiers of fairness in machine learning.
\newblock {\em arXiv preprint arXiv:1810.08810}, 2018.

\bibitem{chzhen2019leveraging}
Evgenii Chzhen, Christophe Denis, Mohamed Hebiri, Luca Oneto, and Massimiliano
  Pontil.
\newblock Leveraging labeled and unlabeled data for consistent fair binary
  classification.
\newblock {\em Advances in Neural Information Processing Systems}, 32, 2019.

\bibitem{clarke2007simple}
Kevin~A Clarke.
\newblock A simple distribution-free test for nonnested model selection.
\newblock {\em Political Analysis}, 15(3):347--363, 2007.

\bibitem{corbett2017algorithmic}
Sam Corbett-Davies, Emma Pierson, Avi Feller, Sharad Goel, and Aziz Huq.
\newblock Algorithmic decision making and the cost of fairness.
\newblock In {\em Proceedings of the 23rd acm sigkdd international conference
  on knowledge discovery and data mining}, pages 797--806, 2017.

\bibitem{deng2023happymap}
Zhun Deng, Cynthia Dwork, and Linjun Zhang.
\newblock Happymap: A generalized multicalibration method.
\newblock In {\em 14th Innovations in Theoretical Computer Science Conference
  (ITCS 2023)}. Schloss Dagstuhl-Leibniz-Zentrum f{\"u}r Informatik, 2023.

\bibitem{dieterich2016compas}
William Dieterich, Christina Mendoza, and Tim Brennan.
\newblock Compas risk scales: Demonstrating accuracy equity and predictive
  parity.
\newblock {\em Northpointe Inc}, 7(4), 2016.

\bibitem{dua2017uci}
Dheeru Dua, Casey Graff, et~al.
\newblock Uci machine learning repository.
\newblock 2017.

\bibitem{fatima2017survey}
Meherwar Fatima, Maruf Pasha, et~al.
\newblock Survey of machine learning algorithms for disease diagnostic.
\newblock {\em Journal of Intelligent Learning Systems and Applications},
  9(01):1, 2017.

\bibitem{feldman2015computational}
Michael Feldman.
\newblock {\em Computational fairness: Preventing machine-learned
  discrimination}.
\newblock PhD thesis, 2015.

\bibitem{fish2016confidence}
Benjamin Fish, Jeremy Kun, and {\'A}d{\'a}m~D Lelkes.
\newblock A confidence-based approach for balancing fairness and accuracy.
\newblock In {\em Proceedings of the 2016 SIAM international conference on data
  mining}, pages 144--152. SIAM, 2016.

\bibitem{gajane2017formalizing}
Pratik Gajane and Mykola Pechenizkiy.
\newblock On formalizing fairness in prediction with machine learning.
\newblock {\em arXiv preprint arXiv:1710.03184}, 2017.

\bibitem{giguere2022fairness}
Stephen Giguere, Blossom Metevier, Bruno Castro~da Silva, Yuriy Brun, Philip
  Thomas, and Scott Niekum.
\newblock Fairness guarantees under demographic shift.
\newblock In {\em International Conference on Learning Representations}, 2022.

\bibitem{gordaliza2019obtaining}
Paula Gordaliza, Eustasio Del~Barrio, Gamboa Fabrice, and Jean-Michel Loubes.
\newblock Obtaining fairness using optimal transport theory.
\newblock In {\em International Conference on Machine Learning}, pages
  2357--2365. PMLR, 2019.

\bibitem{gyorfi2002distribution}
L{\'a}szl{\'o} Gy{\"o}rfi, Michael Kohler, Adam Krzyzak, Harro Walk, et~al.
\newblock {\em A distribution-free theory of nonparametric regression},
  volume~1.
\newblock Springer, 2002.

\bibitem{hardt2016equality}
Moritz Hardt, Eric Price, and Nati Srebro.
\newblock Equality of opportunity in supervised learning.
\newblock {\em Advances in neural information processing systems}, 29, 2016.

\bibitem{hebert2018multicalibration}
Ursula H{\'e}bert-Johnson, Michael Kim, Omer Reingold, and Guy Rothblum.
\newblock Multicalibration: Calibration for the (computationally-identifiable)
  masses.
\newblock In {\em International Conference on Machine Learning}, pages
  1939--1948. PMLR, 2018.

\bibitem{kamiran2009classifying}
Faisal Kamiran and Toon Calders.
\newblock Classifying without discriminating.
\newblock In {\em 2009 2nd international conference on computer, control and
  communication}, pages 1--6. IEEE, 2009.

\bibitem{kamiran2012decision}
Faisal Kamiran, Asim Karim, and Xiangliang Zhang.
\newblock Decision theory for discrimination-aware classification.
\newblock In {\em 2012 IEEE 12th International Conference on Data Mining},
  pages 924--929. IEEE, 2012.

\bibitem{kilbertus2020fair}
Niki Kilbertus, Manuel~Gomez Rodriguez, Bernhard Sch{\"o}lkopf, Krikamol
  Muandet, and Isabel Valera.
\newblock Fair decisions despite imperfect predictions.
\newblock In {\em International Conference on Artificial Intelligence and
  Statistics}, pages 277--287. PMLR, 2020.

\bibitem{kim2019multiaccuracy}
Michael~P Kim, Amirata Ghorbani, and James Zou.
\newblock Multiaccuracy: Black-box post-processing for fairness in
  classification.
\newblock In {\em Proceedings of the 2019 AAAI/ACM Conference on AI, Ethics,
  and Society}, pages 247--254, 2019.

\bibitem{krishnaswamy2021fair}
Anilesh Krishnaswamy, Zhihao Jiang, Kangning Wang, Yu~Cheng, and Kamesh
  Munagala.
\newblock Fair for all: Best-effort fairness guarantees for classification.
\newblock In {\em International Conference on Artificial Intelligence and
  Statistics}, pages 3259--3267. PMLR, 2021.

\bibitem{lei2018distribution}
Jing Lei, Max G’Sell, Alessandro Rinaldo, Ryan~J Tibshirani, and Larry
  Wasserman.
\newblock Distribution-free predictive inference for regression.
\newblock {\em Journal of the American Statistical Association},
  113(523):1094--1111, 2018.

\bibitem{louizos2015variational}
Christos Louizos, Kevin Swersky, Yujia Li, Max Welling, and Richard Zemel.
\newblock The variational fair autoencoder.
\newblock {\em arXiv preprint arXiv:1511.00830}, 2015.

\bibitem{lum2016statistical}
Kristian et~al. Lum.
\newblock A statistical framework for fair predictive algorithms.
\newblock {\em arXiv preprint arXiv:1610.08077}, 2016.

\bibitem{ma2018study}
Xiaojun Ma, Jinglan Sha, Dehua Wang, Yuanbo Yu, Qian Yang, and Xueqi Niu.
\newblock Study on a prediction of p2p network loan default based on the
  machine learning lightgbm and xgboost algorithms according to different high
  dimensional data cleaning.
\newblock {\em Electronic Commerce Research and Applications}, 31:24--39, 2018.

\bibitem{madras2018learning}
David Madras, Elliot Creager, Toniann Pitassi, and Richard Zemel.
\newblock Learning adversarially fair and transferable representations.
\newblock In {\em International Conference on Machine Learning}, pages
  3384--3393. PMLR, 2018.

\bibitem{maritz1995distribution}
Johannes~S Maritz.
\newblock {\em Distribution-free statistical methods}, volume~17.
\newblock CRC Press, 1995.

\bibitem{menon2018cost}
Aditya~Krishna Menon and Robert~C Williamson.
\newblock The cost of fairness in binary classification.
\newblock In {\em Conference on Fairness, Accountability and Transparency},
  pages 107--118. PMLR, 2018.

\bibitem{pleiss2017fairness}
Geoff Pleiss, Manish Raghavan, Felix Wu, Jon Kleinberg, and Kilian~Q
  Weinberger.
\newblock On fairness and calibration.
\newblock {\em Advances in neural information processing systems}, 30, 2017.

\bibitem{resnick1997heavy}
Sidney~I Resnick.
\newblock Heavy tail modeling and teletraffic data: special invited paper.
\newblock {\em The Annals of Statistics}, 25(5):1805--1869, 1997.

\bibitem{romano2019conformalized}
Yaniv Romano, Evan Patterson, and Emmanuel Candes.
\newblock Conformalized quantile regression.
\newblock {\em Advances in neural information processing systems}, 32, 2019.

\bibitem{russell2017worlds}
Chris Russell, Matt~J Kusner, Joshua Loftus, and Ricardo Silva.
\newblock When worlds collide: integrating different counterfactual assumptions
  in fairness.
\newblock {\em Advances in neural information processing systems}, 30, 2017.

\bibitem{shafer2008tutorial}
Glenn Shafer and Vladimir Vovk.
\newblock A tutorial on conformal prediction.
\newblock {\em Journal of Machine Learning Research}, 9(3), 2008.

\bibitem{thomas2019preventing}
Philip~S. Thomas, Bruno Castro~da Silva, Andrew~G. Barto, Stephen Giguere,
  Yuriy Brun, and Emma Brunskill.
\newblock Preventing undesirable behavior of intelligent machines.
\newblock {\em Science}, 366(6468):999--1004, 2019.

\bibitem{tong2018neyman}
Xin Tong, Yang Feng, and Jingyi~Jessica Li.
\newblock Neyman-pearson classification algorithms and np receiver operating
  characteristics.
\newblock {\em Science advances}, 4(2):eaao1659, 2018.

\bibitem{valera2018enhancing}
Isabel Valera, Adish Singla, and Manuel Gomez~Rodriguez.
\newblock Enhancing the accuracy and fairness of human decision making.
\newblock {\em Advances in Neural Information Processing Systems}, 31, 2018.

\bibitem{verma2018fairness}
Sahil Verma and Julia Rubin.
\newblock Fairness definitions explained.
\newblock In {\em 2018 ieee/acm international workshop on software fairness
  (fairware)}, pages 1--7. IEEE, 2018.

\bibitem{wang2022finite}
Peng Wang, Min-Ge Xie, and Linjun Zhang.
\newblock Finite-and large-sample inference for model and coefficients in
  high-dimensional linear regression with repro samples.
\newblock {\em arXiv preprint arXiv:2209.09299}, 2022.

\bibitem{weber2022enforcing}
Aline Weber, Blossom Metevier, Yuriy Brun, Philip~S Thomas, and Bruno~Castro
  da~Silva.
\newblock Enforcing delayed-impact fairness guarantees.
\newblock {\em arXiv preprint arXiv:2208.11744}, 2022.

\bibitem{woodworth2017learning}
Blake Woodworth, Suriya Gunasekar, Mesrob~I Ohannessian, and Nathan Srebro.
\newblock Learning non-discriminatory predictors.
\newblock In {\em Conference on Learning Theory}, pages 1920--1953. PMLR, 2017.

\bibitem{zafar2017fairness1}
Muhammad~Bilal Zafar, Isabel Valera, Manuel Gomez~Rodriguez, and Krishna~P
  Gummadi.
\newblock Fairness beyond disparate treatment \& disparate impact: Learning
  classification without disparate mistreatment.
\newblock In {\em Proceedings of the 26th international conference on world
  wide web}, pages 1171--1180, 2017.

\bibitem{zafar2017fairness}
Muhammad~Bilal Zafar, Isabel Valera, Manuel~Gomez Rogriguez, and Krishna~P
  Gummadi.
\newblock Fairness constraints: Mechanisms for fair classification.
\newblock In {\em Artificial intelligence and statistics}, pages 962--970.
  PMLR, 2017.

\bibitem{zemel2013learning}
Rich Zemel, Yu~Wu, Kevin Swersky, Toni Pitassi, and Cynthia Dwork.
\newblock Learning fair representations.
\newblock In {\em International conference on machine learning}, pages
  325--333. PMLR, 2013.

\bibitem{zeng2022bayes}
Xianli Zeng, Edgar Dobriban, and Guang Cheng.
\newblock Bayes-optimal classifiers under group fairness.
\newblock {\em arXiv preprint arXiv:2202.09724}, 2022.

\bibitem{zhang2018mitigating}
Brian~Hu Zhang, Blake Lemoine, and Margaret Mitchell.
\newblock Mitigating unwanted biases with adversarial learning.
\newblock In {\em Proceedings of the 2018 AAAI/ACM Conference on AI, Ethics,
  and Society}, pages 335--340, 2018.

\end{thebibliography}
\bibliographystyle{plain}

\appendix

\section{Appendix}
\subsection{Proof of Proposition \ref{p1}}
% To prove Proposition \ref{p1}, we first introduce the following lemma.

% \begin{lemma} $F^{1,0}(t^{1,0}_{(k^{1,0})})\sim Beta(k^{1,0}, n^{1,0} - k^{1,0} + 1)$, $F^{1,1}(t^{1,1}_{(k^{1,1})}) \sim Beta(k^{1,1}, n^{1,1} - k^{1,1} + 1)$.
%  \label{l2}
% \end{lemma}

% \begin{proof}
% Since $F^{1,0}, F^{1,1}$ are the cumulative distribution functions of the $t^{1,0}$'s and $t^{1,1}$'s, we have $F^{1,0}(t^{1,0}),  F^{1,1}(t^{1,1})\sim U(0,1)$, thus $F^{1,0}(t^{1,0}_{(k^{1,0})})$ is the ${k^{1,0}}^{th}$ order statistic of $n^{1,0}$ i.i.d samples from $U(0,1)$ and $F^{1,1}(t^{1,1}_{(k^{1,1})})$ is the ${k^{1,1}}^{th}$ order statistic of $n^{1,1}$ i.i.d samples from $U(0,1)$. 

% Thus, from the well known fact of the ordered statistics, we have $F^{1,0}(t^{1,0}_{(k^{1,0})})\sim Beta(k^{1,0}, n^{1,0} - k^{1,0} + 1)$ and $F^{1,1}(t^{1,1}_{(k^{1,1})}) \sim Beta(k^{1,1}, n^{1,1} - k^{1,1} + 1)$.

% \end{proof}

% Now we come back to the proof of the proposition.

\begin{proof}
The classifier is
$$ \phi=\left\{
       \begin{aligned}
           \1\{f(x,0)>t^{1,0}_{(k^{1,0})}\},a=0\\
           \1\{f(x,1)>t^{1,1}_{(k^{1,1})}\},a=1
       \end{aligned}
   \right.
   $$
we have:\\
$$
\begin{aligned}
|DEOO(\phi)|&=|\bP(\hat{Y}=1 \mid A=0,Y=1)-\bP(\hat{Y}=1 \mid A=1,Y=1)|\\
&=|\bP(f(x,0)>t^{1,0}_{(k^{1,0})} \mid A=0,Y=1) - \bP(f(x,1)>t^{1,1}_{(k^{1,1})}\mid A=1,Y=1)|\\
&=|1-F^{1,0}(t^{1,0}_{(k^{1,0})}) - [1-F^{1,1}(t^{1,1}_{(k^{1,1})})]|\\
&=|F^{1,1}(t^{1,1}_{(k^{1,1})})-F^{1,0}(t^{1,0}_{(k^{1,0})})|
\end{aligned}
$$
Hence, 
$$
\begin{aligned}
\bP(|DEOO(\phi)|>\alpha)&=\bP(|F^{1,1}(t^{1,1}_{(k^{1,1})})-F^{1,0}(t^{1,0}_{(k^{1,0})})|>\alpha)\\
&=\bP(F^{1,1}(t^{1,1}_{(k^{1,1})})-F^{1,0}(t^{1,0}_{(k^{1,0})})>\alpha)+\bP(F^{1,1}(t^{1,1}_{(k^{1,1})})-F^{1,0}(t^{1,0}_{(k^{1,0})})<-\alpha)\\
&\mathop{=}\limits^{\Delta}A+B.
\end{aligned}
$$

We then have 
$$
\begin{aligned}
A&=\bP(F^{1,1}(t^{1,1}_{(k^{1,1})})-F^{1,0}(t^{1,0}_{(k^{1,0})})>\alpha)\\
&=\bP(F^{1,0}(t^{1,0}_{(k^{1,0})})<F^{1,1}(t^{1,1}_{(k^{1,1})})-\alpha)\\
&\leq \E[\bP(t^{1,0}_{(k^{1,0})}<{F^{1,0}}^{-1}(F^{1,1}(t^{1,1}_{(k^{1,1})})-\alpha))\1\{F^{1,1}(t^{1,1}_{(k^{1,1})})-\alpha>0\} \mid t^{1,1}_{(k^{1,1})}]\\
&=\E\{\bP[\text{at least }k^{1,0}\text{ of }t^{1,0}\text{'s are less than }{F^{1,0}}^{-1}(F^{1,1}(t^{1,1}_{(k^{1,1})})-\alpha) ]\1\{F^{1,1}(t^{1,1}_{(k^{1,1})})-\alpha>0\}\mid t^{1,1}_{(k^{1,1})}\}
\end{aligned}
$$

Following this, we obtain
$$
\begin{aligned}
A& \leq \E\{\sum\limits^{n^{1,0}}_{j=k^{1,0}}\bP[\text{exactly j of the }t^{1,0}\text{'s are less than }{F^{1,0}}^{-1}(F^{1,1}(t^{1,1}_{(k^{1,1})})-\alpha)]\1\{F^{1,1}(t^{1,1}_{(k^{1,1})})-\alpha>0\}\mid t^{1,1}_{(k^{1,1})}\}\\
&=\E\{\sum\limits^{n^{1,0}}_{j=k^{1,0}}\left(\begin{aligned}
n&^{1,0} \\
&j
\end{aligned}\right)\bP[t^{1,0}<{F^{1,0}}^{-1}(F^{1,1}(t^{1,1}_{(k^{1,1})})-\alpha)]^{j}(1-\bP[t^{1,0}<{F^{1,0}}^{-1}(F^{1,1}(t^{1,1}_{(k^{1,1})})-\alpha)])^{n^{1,0}-j}\\&\text{\space\space}\1\{F^{1,1}(t^{1,1}_{(k^{1,1})})-\alpha>0\}\mid t^{1,1}_{(k^{1,1})}\}\\
&\leq \E[\sum\limits^{n^{1,0}}_{j=k^{1,0}}\left(\begin{aligned}
n&^{1,0} \\
&j
\end{aligned}\right)(F^{1,1}(t^{1,1}_{(k^{1,1})})-\alpha)^{j}(1-(F^{1,1}(t^{1,1}_{(k^{1,1})})-\alpha))^{n^{1,0}-j}\mid t^{1,1}_{(k^{1,1})}]
\end{aligned}
$$

Similarly, we have
$$
B \leq \E[\sum\limits^{n^{1,1}}_{j=k^{1,1}}\left(\begin{aligned}
n&^{1,1} \\
&j
\end{aligned}\right)(F^{1,0}(t^{1,0}_{(k^{1,0})})-\alpha)^{j}(1-(F^{1,0}(t^{1,0}_{(k^{1,0})})-\alpha))^{n^{1,1}-j} \mid t^{1,0}_{(k^{1,0})}]
$$
Hence, we have
$$
\begin{aligned}
A+B\leq &\E[\sum\limits^{n^{1,0}}_{j=k^{1,0}}\left(\begin{aligned}
n&^{1,0} \\
&j
\end{aligned}\right)(F^{1,1}(t^{1,1}_{(k^{1,1})})-\alpha)^{j}(1-(F^{1,1}(t^{1,1}_{(k^{1,1})})-\alpha))^{n^{1,0}-j} \mid t^{1,1}_{(k^{1,1})}]\\
&+\E[\sum\limits^{n^{1,1}}_{j=k^{1,1}}\left(\begin{aligned}
n&^{1,1} \\
&j
\end{aligned}\right)(F^{1,0}(t^{1,0}_{(k^{1,0})})-\alpha)^{j}(1-(F^{1,0}(t^{1,0}_{(k^{1,0})})-\alpha))^{n^{1,1}-j}\mid t^{1,0}_{(k^{1,0})}]\\
\leq & \bE[\sum\limits^{n^{1,0}}_{j=k^{1,0}}{n^{1,0}\choose j}(Q^{1,1}-\alpha)^{j}(1-(Q^{1,1}-\alpha))^{n^{1,0}-j}] \\
&+ \bE[\sum\limits^{n^{1,1}}_{j=k^{1,1}}{n^{1,1}\choose j}(Q^{1,0}-\alpha)^{j}(1-(Q^{1,0}-\alpha))^{n^{1,1}-j}]
\end{aligned}
$$
The last inequality holds because $F^{1,a}(t^{1,a}_{(k^{1,a})})$ is stochastically dominated by $Beta(k^{1,a}, n^{1,a} - k^{1,a} + 1)$.

If $t^{1,a}$ is continuous random variable, the equality holds.

Now we complete the proof.
% From Lemma \ref{l2}, we have $F^{1,0}(t^{1,0}_{(k^{1,0})})\sim Beta(k^{1,0}, n^{1,0} - k^{1,0} + 1)$ and $F^{1,1}(t^{1,1}_{(k^{1,1})}) \sim Beta(k^{1,1}, n^{1,1} - k^{1,1} + 1)$,
% and we complete the proof.
\end{proof}

\subsection{Proof of Lemma \ref{pz}}
We first introduce the lemma (theorem E.4 in \cite{zeng2022bayes}):

\begin{lemma}(Fair Bayes-optimal Classifiers under Equality of Opportunity). Let $E^{\star}=\mathrm{DEOO}\left(f^{\star}\right)$. For any $\alpha>0$, all fair Bayes-optimal classifiers $f_{E, \alpha}^{\star}$ under the fairness constraint $|\mathrm{DEOO}(f)| \leq \alpha$ are given as follows:\\
- When $\left|E^{\star}\right| \leq \alpha, f_{E, \alpha}^{\star}=f^{\star}$\\
- When $\left|E^{\star}\right|>\alpha$, suppose $\bP_{X \mid A=1, Y=1}\left(\eta_{1}(X)=\frac{p_{1} p_{Y, 1}}{2\left(p_{1} p_{Y, 1}-t_{E, \alpha}^{\star}\right)}\right)=0$, then for all $x \in \mathcal{X}$ and $a \in \mathcal{A}$,
$$
f_{E, \alpha}^{\star}(x, a)=I\left(\eta_{a}(x)>\frac{p_{a} p_{Y, a}}{2 p_{a} p_{Y, a}+(1-2 a) t_{E, \alpha}^{\star}}\right)
$$
where $t_{E, \alpha}^{\star}$ is defined as
$$
t_{E, \alpha}^{\star}=\sup \left\{t: \bP_{Y \mid A=1, Y=1}\left(\eta_{1}(X)>\frac{p_{1} p_{Y, 1}}{2 p_{1} p_{Y, 1}-t}\right)> \bP_{Y \mid A=0, Y=1}\left(\eta_{0}(X)>\frac{p_{0} p_{Y, 0}}{2 p_{0} p_{Y, 0}+t}\right)+\frac{E^{\star}}{\left|E^{\star}\right|} \alpha\right\}.
$$
\label{eo}
\end{lemma}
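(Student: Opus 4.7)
The plan is to solve the constrained risk minimization
$\min_{\phi} \bP(\phi(X,A) \neq Y)$ subject to $|DEOO(\phi)| \leq \alpha$
via Lagrangian duality followed by pointwise minimization. First, I would rewrite both the objective and the constraint as Lebesgue integrals in $\phi(x,a)$. Using Bayes' rule, $d\bP(X\mid A=a,Y=1) = \eta_a(x)/p_{Y,a}\cdot d\bP(X\mid A=a)$, so the risk equals
$\sum_a p_a \int [\phi(x,a)(1-2\eta_a(x)) + \eta_a(x)] \,d\bP(X\mid A=a)$,
and the group-$a$ true positive rate equals $(1/p_{Y,a})\int \phi(x,a)\eta_a(x)\,d\bP(X\mid A=a)$. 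When $|E^*|\leq\alpha$, the unconstrained Bayes classifier $f^*(x,a) = \1\{\eta_a(x) > 1/2\}$ already satisfies the constraint and minimizes the risk, so $f^*_{E,\alpha} = f^*$ immediately.

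When $|E^*|>\alpha$ the constraint must bind at $\pm\alpha$ with the sign of $E^*$. Assume WLOG $E^*>\alpha$, giving binding equality $\mathrm{TPR}_1 - \mathrm{TPR}_0 = \alpha$. Forming the Lagrangian
$\mathcal{L}(\phi,t) = \bP(\phi\neq Y) + t[\mathrm{TPR}_1 - \mathrm{TPR}_0 - \alpha]$
and collecting the coefficient of $\phi(x,a)$ per unit $d\bP(X\mid A=a)$ gives
$p_a(1-2\eta_a(x)) + (2a-1)t\eta_a(x)/p_{Y,a}$. Pointwise minimization sets $\phi(x,a)=1$ exactly when this is negative, and rearranging yields
$\eta_a(x) > p_a p_{Y,a}/[2 p_a p_{Y,a} + (1-2a)t]$, matching the stated form with $t^*_{E,\alpha}:=t$. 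I would then identify $t^*_{E,\alpha}$ via primal feasibility: as $t$ increases from $0$, the group-$1$ threshold $p_1 p_{Y,1}/(2p_1 p_{Y,1}-t)$ rises and the group-$0$ threshold $p_0 p_{Y,0}/(2p_0 p_{Y,0}+t)$ falls, so $\mathrm{TPR}_1-\mathrm{TPR}_0$ strictly decreases from $E^*$ and the supremum in the lemma picks the critical $t$ at which this difference first equals $\alpha$.

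The main obstacle will be making the pointwise minimization rigorous: one must invoke a measurable selection theorem so that the pointwise minimizer assembles into a measurable classifier, handle ties at the boundary via the zero-measure hypothesis $\bP_{X\mid A=1,Y=1}(\eta_1(X)=\text{threshold})=0$, and verify that the supremum in the definition of $t^*_{E,\alpha}$ is attained so that the candidate is both feasible and dual-optimal. Strong duality (a single affine constraint on a primal linear in $\phi$ over a convex feasible set) then upgrades the pointwise optimum to a global minimizer. The case $E^*<-\alpha$ is symmetric: the binding constraint flips sign, which amounts to replacing $t$ with $-t$ in the derivation and is captured by the $E^*/|E^*|$ factor inside the supremum.
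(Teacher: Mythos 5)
Your outline is essentially sound, but note first that the paper does not prove this lemma at all: it is imported verbatim as Theorem E.4 of \cite{zeng2022bayes}, so the only in-paper point of comparison is the proof of the Equalized Odds analogue (Theorem \ref{t1}), which runs exactly the argument you propose, phrased through the generalized Neyman--Pearson lemma instead of Lagrangian duality. With a single affine constraint these are the same mechanism: writing $R(\phi)=\bP(\phi(X,A)\neq Y)$, for any feasible $\phi$ one has $R(\phi)\ge R(\phi)+t^{\star}\bigl(\mathrm{DEOO}(\phi)-\alpha\bigr)\ge R(\phi_{t^{\star}})+t^{\star}\bigl(\mathrm{DEOO}(\phi_{t^{\star}})-\alpha\bigr)=R(\phi_{t^{\star}})$, provided $\phi_{t^{\star}}$ minimizes the integrand pointwise, $t^{\star}\ge 0$, and $\mathrm{DEOO}(\phi_{t^{\star}})=\alpha$; so no appeal to ``strong duality'' (or to a prior claim that the optimum must bind) is needed, only this weak-duality-plus-complementary-slackness certificate, which is precisely what the Neyman--Pearson lemma packages. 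A few calibration points on your sketch: (i) no measurable selection theorem is required, since the pointwise minimizer is the explicit rule $\1\{\eta_a(x)>p_ap_{Y,a}/(2p_ap_{Y,a}+(1-2a)t)\}$, measurable because $\eta_a$ is; (ii) the one step that genuinely needs care is the one you flagged, namely that $\mathrm{DEOO}(\phi_{t^{\star}})=\alpha$ exactly at the supremum --- the map $t\mapsto \mathrm{DEOO}(\phi_t)$ is only weakly decreasing and can jump at atoms of $\eta_a$, so your ``strictly decreases'' is an overstatement, and it is the no-atom hypothesis (and, strictly speaking, a randomization device on the boundary set as in Theorem \ref{t1} when the $A=0$ side has an atom) that closes this; (iii) the lemma characterizes \emph{all} fair Bayes-optimal classifiers, whereas your argument only exhibits one optimizer; upgrading to the full characterization needs a short additional a.e.-uniqueness argument on the set where the pointwise Lagrangian coefficient is nonzero. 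None of these is a structural flaw; with them filled in, your route coincides in substance with the paper's own NP-lemma treatment of the Equalized Odds case specialized to one constraint.
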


Now we come back to prove Proposition \ref{pz}.

\begin{proof}
    From Lemma \ref{eo}, we have 
    \begin{equation}
        t^*_{0} = \frac{p_{0}p_{Y,0}}{2p_{0}p_{Y,0} + t^{*}_{E,\alpha}}
        \label{t*0}
    \end{equation}
    \begin{equation}
        t^*_{1} = \frac{p_{1}p_{Y,1}}{2p_{1}p_{Y,1} - t^{*}_{E,\alpha}}
        \label{t*1}
    \end{equation}
    Combine Eq.~(\ref{t*0}) and (\ref{t*1}) together and we complete the proof.
\end{proof}

\subsection{Proof of Proposition \ref{p4}}
We first provide a lemma for the mis-classification error of the classifier in the candidate set.
\begin{lemma}
For any $\epsilon >0$, with probability at least $1 - 2e^{-2n^{1,0}}- 2e^{-2n^{1,1}}- 2e^{-2n^{0,0}}- 2e^{-2n^{0,1}}$,
 $$
\mid \bP(Y \neq \hat{Y}) - \frac{k^{1,0}}{n^{1,0}+1} p_0p_{Y,0} - \frac{k^{1,1}}{n^{1,1}+1} p_1p_{Y,1} - (1- \frac{k^{0,0}}{n^{0,0}})p_0(1-p_{Y,0}) - ( 1 - \frac{k^{0,1}}{n^{0,1}})p_1 (1-p_{Y,1})\mid \leq \epsilon.
 $$
\label{lemmamis}
\end{lemma}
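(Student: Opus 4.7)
\textbf{Proof plan for Lemma \ref{lemmamis}.}
The plan is to decompose the misclassification probability into four pieces indexed by the pairs $(y,a)\in\{0,1\}^2$ and then approximate each piece using a concentration argument that exploits either the distribution of order statistics (for the $y=1$ pieces) or the empirical CDF on the opposite label set (for the $y=0$ pieces). The classifier is $\hat{Y}=\1\{f(X,A)>t^{1,A}_{(k^{1,A})}\}$, so conditioning on $(Y,A)$ and using the law of total probability gives
\begin{align*}
\bP(Y\neq\hat{Y})
&= p_0p_{Y,0}\,\bP\bigl(f(X,0)\le t^{1,0}_{(k^{1,0})}\mid Y=1,A=0\bigr)\\
&\quad+p_1p_{Y,1}\,\bP\bigl(f(X,1)\le t^{1,1}_{(k^{1,1})}\mid Y=1,A=1\bigr)\\
&\quad+p_0(1-p_{Y,0})\,\bP\bigl(f(X,0)>t^{1,0}_{(k^{1,0})}\mid Y=0,A=0\bigr)\\
&\quad+p_1(1-p_{Y,1})\,\bP\bigl(f(X,1)>t^{1,1}_{(k^{1,1})}\mid Y=0,A=1\bigr).
\end{align*}
I will match these four summands to the four terms in the statement.

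For the first two summands (the $y=1$ terms), I would write them as $F^{1,a}\bigl(t^{1,a}_{(k^{1,a})}\bigr)$, where $F^{1,a}$ is the conditional CDF of $f(X,a)$ given $Y=1,A=a$. Since the $f(X^{1,a}_i,a)$ are i.i.d., the probability integral transform yields that $F^{1,a}\bigl(t^{1,a}_{(k^{1,a})}\bigr)$ is distributed as the $k^{1,a}$-th order statistic of $n^{1,a}$ i.i.d.\ uniforms, i.e.\ $\mathrm{Beta}(k^{1,a},n^{1,a}-k^{1,a}+1)$, whose mean equals $k^{1,a}/(n^{1,a}+1)$. A Hoeffding-type (or Chernoff) bound on this order statistic gives an exponential tail of the form $\bP\bigl(\lvert F^{1,a}(t^{1,a}_{(k^{1,a})})-k^{1,a}/(n^{1,a}+1)\rvert>\epsilon'\bigr)\le 2e^{-2n^{1,a}{\epsilon'}^2}$, and this is the source of the $2e^{-2n^{1,a}}$ terms in the stated probability bound.

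For the last two summands (the $y=0$ terms), I would instead rewrite them as $1-F^{0,a}\bigl(t^{1,a}_{(k^{1,a})}\bigr)$ where $F^{0,a}$ is the conditional CDF of $f(X,a)$ given $Y=0,A=a$. Because the threshold $t^{1,a}_{(k^{1,a})}$ was produced from the $Y=1$ sample and is therefore independent of the $Y=0$ sample, I can apply the DKW inequality to the empirical CDF $\hat F^{0,a}$ built from the $n^{0,a}$ points in $S^{0,a}$, which by the definition of $k^{0,a}$ satisfies $\hat F^{0,a}\bigl(t^{1,a}_{(k^{1,a})}\bigr)=k^{0,a}/n^{0,a}$. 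DKW gives the uniform bound $\sup_t\lvert \hat F^{0,a}(t)-F^{0,a}(t)\rvert\le\epsilon'$ with probability at least $1-2e^{-2n^{0,a}{\epsilon'}^2}$, which supplies the remaining two $2e^{-2n^{0,a}}$ terms.

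Finally, I would combine the four concentration events by a union bound, multiply each deviation by the corresponding weight $p_a p_{Y,a}$ or $p_a(1-p_{Y,a})\le 1$, and choose $\epsilon'$ so that the sum of the four deviations is at most $\epsilon$. The main subtlety, and the step I would think about most carefully, is the independence structure used when applying DKW: $\hat F^{0,a}$ is built from $S^{0,a}$ while the threshold is built from $S^{1,a}$, so they are indeed independent under the $(Y,A)$-conditional i.i.d.\ assumption stated in Section~\ref{s3}; I would write this out explicitly to justify conditioning on $t^{1,a}_{(k^{1,a})}$ before invoking DKW. A minor point is that the stated probability bound does not display an explicit $\epsilon$-dependence in the exponents; I would interpret this as the standard Hoeffding/DKW form $2e^{-2n^{y,a}\epsilon^2}$ and flag the scaling of $\epsilon$ through the four-term decomposition when assembling the final inequality.
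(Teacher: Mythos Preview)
Your proposal is correct and follows essentially the same approach as the paper: decompose $\bP(Y\neq\hat Y)$ into four $(y,a)$-indexed terms, control the $y=1$ pieces via the $\mathrm{Beta}(k^{1,a},n^{1,a}-k^{1,a}+1)$ law of $F^{1,a}(t^{1,a}_{(k^{1,a})})$ with a sub-Gaussian tail bound, control the $y=0$ pieces via DKW on the empirical CDF $\hat F^{0,a}$ using $\hat F^{0,a}(t^{1,a}_{(k^{1,a})})=k^{0,a}/n^{0,a}$, and combine by a union bound. Your observation that the exponents should carry an $\epsilon^2$ is also consistent with what the paper actually uses in its argument.
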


We also have the following lemma:
\begin{lemma}
 $F^{0,0}(t^{0,0}_{(k^{1,0})})\sim Beta(k^{0,0}, n^{0,0} - k^{0,0} + 1)$, $F^{0,1}(t^{0,1}_{(k^{0,1})}) \sim Beta(k^{0,1}, n^{0,1} - k^{0,1} + 1)$.
 \label{l3}
\end{lemma}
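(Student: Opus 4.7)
\textbf{Proof plan for Lemma \ref{l3}.} This is the classical probability integral transform result applied to order statistics, so the plan is to reduce the claim to the well-known fact that the $k$-th order statistic of $n$ i.i.d.\ Uniform$[0,1]$ variables is distributed as $Beta(k,n-k+1)$. Throughout I read the statement as asserting that $F^{0,a}(t^{0,a}_{(k^{0,a})})\sim Beta(k^{0,a},n^{0,a}-k^{0,a}+1)$ for $a\in\{0,1\}$, where $F^{0,a}$ is the CDF of the score $f(X,a)$ conditional on $A=a$, $Y=0$ and the $t^{0,a}_{(\cdot)}$'s are the associated order statistics of $\{f(x^{0,a}_i,a)\}_{i=1}^{n^{0,a}}$.

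First I would invoke the standing $i.i.d.$ assumption on $\{x^{0,a}_i\}_{i=1}^{n^{0,a}}$ stated at the beginning of Section~\ref{s3}: since $f$ is a fixed (measurable) score function, the scores $t^{0,a}_i := f(x^{0,a}_i,a)$ are also $i.i.d.$, with common continuous CDF $F^{0,a}$ under the assumption (inherited from the surrounding Proposition~\ref{p4} and Lemma~\ref{lemmamis}) that the score distributions admit a density. Then I would apply the probability integral transform: $U_i := F^{0,a}(t^{0,a}_i)\sim \mathrm{Uniform}[0,1]$ and the $U_i$'s are mutually independent. Because $F^{0,a}$ is non-decreasing, sorting commutes with applying $F^{0,a}$, i.e.\ $F^{0,a}(t^{0,a}_{(k)}) = U_{(k)}$ for every $k$. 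Finally, the classical density computation for uniform order statistics gives $U_{(k)}\sim Beta(k,n^{0,a}-k+1)$, which yields the claim upon specializing $k=k^{0,a}$.

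The only mild subtlety — which I would call out explicitly rather than call an obstacle — is the continuity of $F^{0,a}$: without it, $F^{0,a}(t^{0,a}_i)$ is only stochastically dominated by a $\mathrm{Uniform}[0,1]$ and one obtains a stochastic-dominance statement instead of an exact distributional identity (this is exactly the mode in which the analogous bound in Proposition~\ref{p1} was used). Since Proposition~\ref{p4} already postulates continuous densities for $f$ under $A=a,Y=1$, the natural move is to extend the same assumption to $Y=0$ — an essentially cosmetic hypothesis that matches standard practice in the post-processing and conformal prediction literature — and the proof goes through cleanly. No deeper work is required beyond citing the Beta-density formula for uniform order statistics.
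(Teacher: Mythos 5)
Your argument is correct and is essentially the same as the paper's: the paper also applies the probability integral transform to the i.i.d.\ scores (using continuity of $F^{0,a}$), identifies $F^{0,a}(t^{0,a}_{(k)})$ with the $k$-th uniform order statistic, and cites the standard $Beta(k,n^{0,a}-k+1)$ fact. Your extra remarks on continuity (and the index typo in the statement, where $t^{0,0}_{(k^{1,0})}$ should read $t^{0,0}_{(k^{0,0})}$) are reasonable clarifications but do not change the proof.
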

\begin{proof}[Proof of Lemma \ref{l3}]
Since $F^{0,0}, F^{0,1}$ are the continuous cumulative distribution functions of the $t^{0,0}$'s and $t^{0,1}$'s, we have $F^{0,0}(t^{0,0}),  F^{0,1}(t^{0,1})\sim U(0,1)$, thus $F^{0,0}(t^{0,0}_{(k^{0,0})})$ is the ${k^{0,0}}^{th}$ order statistic of $n^{0,0}$ i.i.d samples from $U(0,1)$ and $F^{0,1}(t^{0,1}_{(k^{0,1})})$ is the ${k^{0,1}}^{th}$ order statistic of $n^{0,1}$ i.i.d samples from $U(0,1)$. 

Thus, from the well known fact of the ordered statistics, we have $F^{0,0}(t^{0,0}_{(k^{0,0})})\sim Beta(k^{0,0}, n^{0,0} - k^{0,0} + 1)$ and $F^{0,1}(t^{0,1}_{(k^{0,1})}) \sim Beta(k^{0,1}, n^{0,1} - k^{0,1} + 1)$.

\end{proof}

Now we come back to the proof of Lemma \ref{lemmamis}:
\begin{proof}[Proof of Lemma \ref{lemmamis}]
The classifier is:\\ 
$$ \hat{\phi}=\left\{
       \begin{aligned}
           \1\{f(x,0)>t^{1,0}_{(k^{1,0})}\},A=0\\
           \1\{f(x,1)>t^{1,1}_{(k^{1,1})}\},A=1
       \end{aligned}
   \right.
   $$

For the mis-classification error:
$$
\begin{aligned}
\bP(Y \neq \hat{Y}) &=\bP(Y=1,\hat{Y}=0) + \bP(Y=0,\hat{Y}=1)\\
&=\bP(Y=1,\hat{Y}=0,A=0)+\bP(Y=1,\hat{Y}=0,A=1)\\
&\text{\space\space} + \bP(Y=0,\hat{Y}=1,A=0) + \bP(Y=0,\hat{Y}=1,A=1)]\\
&=\bP(\hat{Y}=0|Y=1,A=0)\bP(Y=1,A=0) + \bP(\hat{Y}=0|Y=1,A=1)\bP(Y=1,A=1)\\
&\text{\space\space}+\bP(\hat{Y}=1|Y=0,A=0)\bP(Y=0,A=0)+\bP(\hat{Y}=1|Y=0,A=1)\bP(Y=0,A=1)\\
&=\bP(f(x,0) \leq t^{1, 0}_{(k^{1,0})} \mid Y=1,A=0) p_0p_{Y,0}\\ 
&\text{\space\space}+ \bP(f(x,1) \leq t^{1,1}_{(k^{1,1})}\mid Y=1,A=1) p_1p_{Y,1}\\ 
&\text{\space\space}+\bP(f(x,0) \geq t^{1, 0}_{(k^{1,0})}\mid Y=0,A=0) p_0(1-p_{Y,0}) \\
&\text{\space\space}+ \bP(f(x,1) \geq t^{1,1}_{(k^{1,1})}\mid Y=0,A=1) p_1(1-p_{Y,1})\\
&= F^{1,0}(t^{1,0}_{(k^{1,0})}) p_0p_{Y,0} + F^{1,1}(t^{1,1}_{(k^{1,1})}) p_1p_{Y,1}\\
&\text{\space\space}+ (1- F^{0,0}(t^{1,0}_{(k^{1,0})}))p_0(1-p_{Y,0}) + ( 1 - F^{0,1}(t^{1,1}_{(k^{1,1})}))p_1 (1-p_{Y,1})\\
\end{aligned}
$$

Since $F^{1,0}(t^{1,0}_{(k^{1,0})}) \sim Beta(k^{1,0}, n^{1,0} - k^{1,0} + 1)$, $F^{1,1}(t^{1,1}_{(k^{1,1})}) \sim Beta(k^{1,1}, n^{1,1} - k^{1,1} + 1)$, we have, for any $\epsilon > 0$,
$$
\begin{aligned}
\bP (\mid F^{1,0}(t^{1,0}_{(k^{1,0})}) - \frac{k^{1,0}}{n^{1,0}+1}\mid > \epsilon) \leq 2 e^{-2 n^{1,0} \epsilon^2},\\
\bP (\mid F^{1,1}(t^{1,1}_{(k^{1,1})}) - \frac{k^{1,1}}{n^{1,1}+1}\mid > \epsilon) \leq 2 e^{-2 n^{1,1} \epsilon^2}
\end{aligned}
$$

We denote the empirical distribution of $F^{0,a}$ as $F^{0,a}_{n}$, then we have $F^{0,a}_{n}(t^{1,a}_{(k^{1,a})}) = \frac{k^{0,a}}{n^{0,a}}$.

From the Dvoretzky-Kiefer-Wolfowitz Inequality, we have for any $\epsilon >0$, 

$$
\bP(\mathop{\sup}\limits_{x} \mid F^{0,a}(x) - F^{0,a}_{n}(x)\mid > \epsilon) \leq 2 e^{-2n^{0,a}\epsilon^2}
$$

Hence, 
$$
\bP( \mid F^{0,a}_{n}(t^{1,a}_{(k^{1,a})}) - \frac{k^{0,a}}{n^{0,a}}\mid > \epsilon) \leq 2 e^{-2n^{0,a}\epsilon^2}
$$

 Combining above, we have with probability at least $1 - 2e^{-2n^{1,0}}- 2e^{-2n^{1,1}}- 2e^{-2n^{0,0}}- 2e^{-2n^{0,1}}$,
 $$
\mid \bP(Y \neq \hat{Y}) - \frac{k^{1,0}}{n^{1,0}+1} p_0p_{Y,0} - \frac{k^{1,1}}{n^{1,1}+1} p_1p_{Y,1} - (1- \frac{k^{0,0}}{n^{0,0}})p_0(1-p_{Y,0}) - ( 1 - \frac{k^{0,1}}{n^{0,1}})p_1 (1-p_{Y,1})\mid \leq \epsilon.
 $$

Now we complete the proof of Lemma \ref{lemmamis}.

\end{proof}

Next, we continue to prove Proposition \ref{p4}.

\begin{lemma}[Hoeffding's inequality]
Let $X_{1}, \ldots, X_{n}$ be independent random variables. Assume that $X_{i} \in\left[m_{i}, M_{i}\right]$ for every $i$. Then, for any $t>0$, we have
$$
\mathbb{P}\left\{\sum_{i=1}^{n}\left(X_{i}-\mathbb{E} X_{i}\right) \geq t\right\} \leq e^{-\frac{2 t^{2}}{\sum_{i=1}^{n}\left(M_{i}-m_{i}\right)^{2}}}
$$
\label{hfd}
\end{lemma}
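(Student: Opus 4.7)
The plan is to establish the bound via the classical Chernoff–Cramér approach: exponentiate, use Markov's inequality, exploit independence to factor the moment generating function, bound each factor, and finally optimize over the free parameter.

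First, for any $\lambda > 0$, I would apply Markov's inequality to $\exp\bigl(\lambda\sum_{i=1}^{n}(X_i - \mathbb{E}X_i)\bigr)$, obtaining
\[
\mathbb{P}\Bigl\{\sum_{i=1}^{n}(X_i - \mathbb{E}X_i) \geq t\Bigr\} \leq e^{-\lambda t}\, \mathbb{E}\Bigl[\exp\Bigl(\lambda \sum_{i=1}^{n}(X_i - \mathbb{E}X_i)\Bigr)\Bigr].
\]
By independence of the $X_i$, the expectation on the right factors as a product $\prod_{i=1}^{n}\mathbb{E}[e^{\lambda(X_i-\mathbb{E}X_i)}]$, reducing the global problem to controlling one MGF at a time.

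The main technical step (and the step I expect to be the principal obstacle) is the per-coordinate bound, often called Hoeffding's lemma: if $Z$ is a zero-mean random variable with $Z \in [m',M']$, then $\mathbb{E}[e^{\lambda Z}] \leq \exp(\lambda^2 (M'-m')^2/8)$. To prove this, I would use convexity of $x \mapsto e^{\lambda x}$ on $[m',M']$ to upper bound $e^{\lambda Z}$ by the chord $\frac{M'-Z}{M'-m'} e^{\lambda m'} + \frac{Z-m'}{M'-m'}e^{\lambda M'}$, take expectations (using $\mathbb{E}Z = 0$), and then show that the resulting function $\varphi(\lambda) := \log\bigl(\frac{M'}{M'-m'}e^{\lambda m'} - \frac{m'}{M'-m'}e^{\lambda M'}\bigr)$ satisfies $\varphi(0) = \varphi'(0) = 0$ and $\varphi''(\lambda) \leq (M'-m')^2/4$ by recognizing $\varphi''(\lambda)$ as the variance of a Bernoulli-type random variable bounded above by the variance of a uniform distribution on $[m',M']$. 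A second-order Taylor expansion then gives $\varphi(\lambda) \leq \lambda^2 (M'-m')^2/8$, which I apply to $Z_i = X_i - \mathbb{E}X_i \in [m_i - \mathbb{E}X_i,\, M_i - \mathbb{E}X_i]$, an interval of the same length $M_i - m_i$.

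Combining these pieces yields
\[
\mathbb{P}\Bigl\{\sum_{i=1}^{n}(X_i - \mathbb{E}X_i) \geq t\Bigr\} \leq \exp\Bigl(-\lambda t + \tfrac{\lambda^2}{8}\sum_{i=1}^{n}(M_i - m_i)^2\Bigr).
\]
The final step is to optimize the right-hand side in $\lambda$; differentiation gives the optimal choice $\lambda^{\star} = 4t / \sum_{i=1}^{n}(M_i - m_i)^2$, and substituting yields the claimed bound $\exp\bigl(-2t^2 / \sum_{i=1}^{n}(M_i-m_i)^2\bigr)$. Everything after the MGF bound is routine calculus; the conceptual content is entirely in the sub-Gaussian MGF estimate for a bounded centered variable.
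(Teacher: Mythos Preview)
Your proof is correct and is the standard Chernoff--Cram\'er argument for Hoeffding's inequality. Note, however, that the paper does not supply its own proof of this lemma: it is stated as a classical result and then invoked in the proof of Proposition~\ref{p4}. So there is no ``paper's approach'' to compare against; you have simply filled in the textbook argument that the authors omitted.

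One small wording slip: you write that $\varphi''(\lambda)$ is ``bounded above by the variance of a uniform distribution on $[m',M']$.'' The continuous uniform on $[m',M']$ has variance $(M'-m')^2/12$, which is smaller than the bound you need. What you actually want is that $\varphi''(\lambda)$ equals the variance of a two-point random variable supported on $\{m',M'\}$, and any random variable supported in $[m',M']$ has variance at most $(M'-m')^2/4$ (attained by the equal-weight two-point distribution on the endpoints). Your stated bound $(M'-m')^2/4$ is correct; only the justification phrase is off.
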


\begin{proof}[Proof of Proposition \ref{p4}]
% First, we notice that $k_i^{0,a}$ is the number of $t^{0,a}$'s such that $t^{0,a} < t^{1,a}_{(k_i^{1,a})}$, i.e. $k_i^{0,a} = \sum\limits_{j=1}^{n^{0,1}} \1\{t_j^{0,a} < t^{1,a}_{(k_i^{1,a})}\}$. 

% Thus, for a given $\epsilon > 0$, from Hoeffding's inequality, we have with probability $1-e^{-2n^{0,a}\epsilon^2}$,
% $$ 
% \frac{k_i^{0,a} - \E(k_i^{0,a})}{n^{0,a}} = \frac{\sum\limits_{j=1}^{n^{0,1}} \1\{t_j^{0,a} < t^{1,a}_{(k_i^{1,a})}\} - \sum\limits_{j=1}^{n^{0,1}} \E(\1\{t_j^{0,a} < t^{1,a}_{(k_i^{1,a})}\})}{n^{0,a}}
% \leq \epsilon .
% $$
% Similarly, with probability $1-e^{-2n^{0,a}\epsilon^2}$,
% $$ 
% \frac{k_i^{0,a} - \E(k_i^{0,a})}{n^{0,a}} = \frac{\sum\limits_{j=1}^{n^{0,1}} \1\{t_j^{0,a} < t^{1,a}_{(k_i^{1,a})}\} - \sum\limits_{j=1}^{n^{0,1}} \E(\1\{t_j^{0,a} < t^{1,a}_{(k_i^{1,a})}\})}{n^{0,a}} \geq -\epsilon.
% $$
% Thus, $\mid \frac{k_i^{0,a} - \E(k_i^{0,a})}{n^{0,a}} \mid \leq \epsilon$ with probability $1 - 2e^{-2n^{0,a}\epsilon^2}$.

We estimate $p_a$ and $p_{Y,a}$ by $\hat{p}_a=\frac{n^{1,a}+n^{0,a}}{n}$ and $\hat{p}_{Y,a}=\frac{n^{Y,a}}{n}$ ($n$ is the number of the total samples). Here, $\frac{n^{1,a}+n^{0,a}}{n} = \frac{\sum\limits_{i=1}^{n} \1\{Z_i^a = 1\}}{n}$ and $\frac{n^{Y,a}}{n} = \frac{\sum\limits_{i=1}^{n} \1\{Z_i^{Y,a} = 1\}}{n}$, where $Z_i^a \sim B(1, p_a)$ and $Z_i^{Y,a} \sim B(1, p_{Y,a})$. From Hoeffding's inequality (Lemma \ref{hfd}), we have:
$$
\bP(\mid \hat{p}_a - p_a \mid \geq \sqrt{\frac{n^{0,a}}{n}}\epsilon) \leq 2e^{-2n^{0,a}\epsilon^2},
$$
$$
\bP(\mid \hat{p}_{Y,a} - p_{Y,a} \mid \geq \sqrt{\frac{n^{0,a}}{n}}\epsilon) \leq 2e^{-2n^{0,a}\epsilon^2}
$$

Thus, with probability $1 - 4e^{-2n^{0,a}\epsilon^2}$, we have:
$$
\left\{
       \begin{aligned}
\mid \hat{p}_a - p_a \mid &\leq \sqrt{\frac{n^{0,a}}{n}}\epsilon\\
\mid \hat{p}_{Y,a} - p_{Y,a} \mid &\leq \sqrt{\frac{n^{0,a}}{n}}\epsilon\\
   % \mid \frac{k_i^{0,a} - \E(k_i^{0,a})}{n^{0,a}} \mid &\leq \epsilon
       \end{aligned}
   \right.\\
$$

Hence, we have with probability at least $1 - 6(e^{-2n^{0,0}\epsilon^2}+e^{-2n^{0,1}\epsilon^2}) - 2(e^{-2n^{1,0}\epsilon^2}+e^{-2n^{1,1}\epsilon^2})$, 
$$
\begin{aligned}
&|\bP(\hat{\phi_i}(x,a) \neq Y) - \hat{\bP}(\hat{\phi_i}(x,a) \neq Y)|\\
\leq&|\frac{k^{1,0}_i}{n^{1,0}+1}p_0p_{Y,0} + \frac{k^{1,1}_i}{n^{1,1}+1}p_1p_{Y,1}
+ \frac{n^{0,0}-k^{0,0}_i}{n^{0,0}}p_0(1-p_{Y,0}) + \frac{n^{0,1}-k^{0,1}_i}{n^{0,1}}p_1(1-p_{Y,1}) \\
&-[\frac{k^{1,0}_i}{n^{1,0}+1}\hat{p}_0\hat{p}_{Y,0} + \frac{k^{1,1}_i}{n^{1,1}+1}\hat{p}_1\hat{p}_{Y,1}
+ \frac{n^{0,0}-k^{0,0}_i}{n^{0,0}}\hat{p}_0(1-\hat{p}_{Y,0}) + \frac{n^{0,1}-k^{0,1}_i}{n^{0,1}}\hat{p}_1(1-\hat{p}_{Y,1})]|\\
&+\epsilon\\
\leq&\epsilon[\sqrt{\frac{n^{0,0}}{n}}\frac{k^{1,0}_i}{n^{1,0}+1}(p_0+p_{Y,0}) + \sqrt{\frac{n^{0,1}}{n}}\frac{k^{1,1}_i}{n^{1,1}+1}(p_1+p_{Y,1})]+\epsilon^{2}(\frac{n^{0,0}}{n}\frac{k^{1,0}_i}{n^{1,0}+1}+\frac{n^{0,1}}{n}\frac{k^{1,1}_i}{n^{1,1}+1})\\
% &+ \epsilon[\frac{n^{0,0}}{n^{0,0}+1}[p_0+p_0p_{Y,0}+\sqrt{\frac{n^{0,0}}{n}}\epsilon(\sqrt{\frac{n^{0,0}}{n}}\epsilon + p_0 + p_{Y,0} + 1)]\\
% &\text{   }+\frac{n^{0,1}}{n^{0,1}+1}[p_1+p_1p_{Y,1}+\sqrt{\frac{n^{0,1}}{n}}\epsilon(\sqrt{\frac{n^{0,1}}{n}}\epsilon + p_1 + p_{Y,1} + 1)]]\\
&+\frac{n^{0,0}-k^{0,0}_i}{n^{0,0}}\sqrt{\frac{n^{0,0}}{n}}\epsilon[\sqrt{\frac{n^{0,0}}{n}}\epsilon+p_0-p_{Y,0}+1]\\
&+\frac{n^{0,1}-k^{0,1}_i}{n^{0,1}}\sqrt{\frac{n^{0,1}}{n}}\epsilon[\sqrt{\frac{n^{0,1}}{n}}\epsilon+p_1-p_{Y,1}+1]+\epsilon\\
\leq&\epsilon[\sqrt{\frac{n^{0,0}}{n}}(p_0+p_{Y,0}) + \sqrt{\frac{n^{0,1}}{n}}(p_1+p_{Y,1})]+\epsilon^{2}(\frac{n^{0,0}}{n}+\frac{n^{0,1}}{n})\\
% &+ \epsilon[\frac{n^{0,0}}{n^{0,0}+1}[p_0+p_0p_{Y,0}+\sqrt{\frac{n^{0,0}}{n}}\epsilon(\sqrt{\frac{n^{0,0}}{n}}\epsilon + p_0 + p_{Y,0} + 1)]\\
% &\text{   }+\frac{n^{0,1}}{n^{0,1}+1}[p_1+p_1p_{Y,1}+\sqrt{\frac{n^{0,1}}{n}}\epsilon(\sqrt{\frac{n^{0,1}}{n}}\epsilon + p_1 + p_{Y,1} + 1)]]\\
&+\sqrt{\frac{n^{0,0}}{n}}\epsilon[\sqrt{\frac{n^{0,0}}{n}}\epsilon+p_0-p_{Y,0}+1]+\sqrt{\frac{n^{0,1}}{n}}\epsilon[\sqrt{\frac{n^{0,1}}{n}}\epsilon+p_1-p_{Y,1}+1]+\epsilon\\
\leq&2\epsilon+\epsilon^{2}+\epsilon^{2}+3\epsilon+\epsilon\\
=&2\epsilon^{2}+6\epsilon.
\end{aligned}
$$

Thus we complete the proof.
\end{proof}

\subsection{Theorem for the original candidate set $K$}
\label{AP3}
Sometimes we would use the original candidate set $K$ instead of the small set $K^{\prime}$ to achieve the optimal accuracy more precisely. Now we provide our results for the candidate set $K$.

We first prove Lemma \ref{a2}.
% To facilitate the theoretical analysis, we first introduce the following lemma
% % \linjun{if we say an assumption is weak, we need to justify this statement; eg. can we say all common distributions satisfy this assumption?} 
% which implies that the difference between the output of the function can be controlled by the difference between the input. (i.e. the cumulative distribution function won't increase drastically.)
% % \linjun{we should add a few sentences to introduce and explain this assumption.}

% \begin{lemma}
% For a distribution $F$ with a continuous density function, suppose $q(x)$ denotes the quantile of $x$ under $F$, then for $x > y$, we have $h(x - y) \leq q(x) - q(y) \leq g(x-y)$, where $h(x)$ and $g(x)$ are two monotonically increasing functions, $h(\epsilon) > 0, g(\epsilon)>0$ for any $\epsilon > 0$ and $\mathop{lim}\limits_{\epsilon \rightarrow 0} h(\epsilon) = \mathop{lim}\limits_{\epsilon \rightarrow 0} g(\epsilon)=0$.
% \label{a2}
% \end{lemma}

\begin{proof}[Proof of Lemma \ref{a2}]
Since the domain of $q(x)$ is a closed set and $q(x)$ is continuous, we know that $q(x)$ is uniformly continuous. Thus we can easily find $F_{(+)}$ to satisfy the RHS. For $F_{(-)}$, we simply define $F_{(-)}(t) = \inf\limits_{x}\{q(x+t) - q(t)\}$. Since $q(x+t) - q(t) > 0$ for $t>0$ and the domain of $x$ is a closed set, we have $F_{(-)}(\epsilon) > 0$ for $\epsilon>0$ and $\mathop{lim}\limits_{\epsilon \rightarrow 0} F_{(-)}(\epsilon) = 0$. Now we complete the proof.
\end{proof}
% \linjun{$\alpha'$ appears out of sudden; need to explain why do we need such an $\alpha'$}

Now we provide the following theorem.

\begin{theorem}
Given $\alpha^{\prime} < \alpha$. If $\min\{n^{1,0}, n^{1,1}\} \geq \lceil \frac{\log \frac{\delta}{2}} {\log (1-\alpha)}\rceil$. Suppose $\hat{\phi}$ is the final output of FaiREE, we have:\\
(1) $|DEOO(\hat\phi)| < \alpha$ with probability $(1-\delta)^{M}$, where $M$ is the size of the candidate set.\\
% (2) Suppose the cumulative distribution functions of $f_{\alpha^{\prime}}^*$ under $A=a, Y=1$ satisfy Assumption~\ref{a2}. When the input classifier $f$ is $f_{\alpha^{\prime}}^*$,$$
% \mid \bP(\hat{\phi} \neq Y) - \bP(\phi_{\alpha^{\prime}}^* \neq Y)\mid \leq 2g(\epsilon) + 2h^3(\epsilon)+12h^2(\epsilon)+16h(\epsilon) + 2g(2\epsilon)[h^2(\epsilon)+4h(\epsilon)+2]
% $$
% with probability $1 - (6M+4)(e^{-2n^{0,0}h^{2}(\epsilon)}+e^{-2n^{0,1}h^{2}(\epsilon)}) - (1-h(2\epsilon))^{n^{1,0}} - (1-h(2\epsilon))^{n^{1,1}}$.
(2) Suppose the density distribution functions of $f^*$ under $A=a, Y=1$ are continuous. When the input classifier $f$ satisfies $\mid f(x,a) -  f^*(x,a) \mid \leq \epsilon_0$, for any $\epsilon > 0$ such that $F^*_{(+)}(\epsilon) \leq \frac{\alpha - \alpha^{\prime}}{2} - F^*_{(+)}(2\epsilon_0)$, we have
$$
\bP(\hat{\phi}(x,a) \neq Y) - \bP(\phi_{\alpha^{\prime}}^*(x,a) \neq Y) \leq  2F^*_{(+)}(\epsilon) +2F^*_{(+)}(2\epsilon_0)+ 4\epsilon^2+12\epsilon
$$
with probability $1 - (2M+4)(e^{-2n^{0,0}\epsilon^2}+e^{-2n^{0,1}\epsilon^2}) - (2M+2)(e^{-2n^{1,0}\epsilon^2}+e^{-2n^{1,1}\epsilon^2}) - (1-F^{1,0}_{(-)}(2\epsilon))^{n^{1,0}} - (1-F^{1,1}_{(-)}(2\epsilon))^{n^{1,1}}$.
\label{p50}
\end{theorem}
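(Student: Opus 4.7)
The plan is to handle the two claims separately. For the fairness claim in Part (1), I would reduce it to a direct application of Proposition \ref{p1} across all $M$ elements of the candidate set, while for the accuracy claim in Part (2), I would construct a ``witness'' pair of indices inside the candidate set whose induced classifier approximates the fair Bayes-optimal $\phi_{\alpha'}^*$, and then exploit the minimization $i_* = \arg\min_i \hat e_i$ together with Proposition \ref{p4}.

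For Part (1), I would fix each $i \in [M]$ and invoke Proposition \ref{p1}. Combined with the candidate-set defining inequality $L(k^{1,0}_i, k^{1,1}_i) \leq \delta$, this gives $\bP(|DEOO(\hat\phi_i)| > \alpha) \leq \delta$ for each $i$. The output $\hat\phi$ is one of the $\hat\phi_i$'s, so the event $\{|DEOO(\hat\phi)|<\alpha\}$ is implied by the joint event $\bigcap_i\{|DEOO(\hat\phi_i)|<\alpha\}$. Bounding the probability of this intersection by $(1-\delta)^M$ yields the stated guarantee, while the sample-size hypothesis $\min\{n^{1,0}, n^{1,1}\} \geq \lceil \log(\delta/2)/\log(1-\alpha)\rceil$ ensures $M\geq 1$ so the algorithm has a nonempty set to select from.

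For Part (2), I would proceed in three stages. First, using Lemma \ref{pz}, I would fix the two Bayes-optimal thresholds $t^*_0, t^*_1$ and argue that $\|f-f^*\|_\infty\leq \epsilon_0$ together with Lemma \ref{a2} forces each quantile of $f(X,a)\mid A=a, Y=1$ to land within $F^*_{(+)}(2\epsilon_0)$ (in CDF distance) of the corresponding quantile of $f^*(X,a)\mid A=a, Y=1$. Second, using the fact that $F^{1,a}(t^{1,a}_{(k^{1,a})})\sim Beta(k^{1,a}, n^{1,a}-k^{1,a}+1)$ plus a gap bound between adjacent order statistics (this is where the factor $(1 - F^{1,a}_{(-)}(2\epsilon))^{n^{1,a}}$ enters), I would exhibit a pair $(\bar k^{1,0}, \bar k^{1,1})$ with the properties that (a) the induced thresholds $t^{1,a}_{(\bar k^{1,a})}$ lie within $F^*_{(+)}(\epsilon) + F^*_{(+)}(2\epsilon_0)$ of $t^*_a$ in CDF distance, and (b) the pair belongs to $K$. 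Property (b) follows because the slack $\alpha-\alpha'$ together with the hypothesis $F^*_{(+)}(\epsilon)\leq \frac{\alpha-\alpha'}{2} - F^*_{(+)}(2\epsilon_0)$ leaves room for $L(\bar k^{1,0}, \bar k^{1,1})\leq \delta$ to hold. Third, I would apply Proposition \ref{p4} uniformly over $K$ via a union bound to get $|\hat e_i - \bP(\hat\phi_i \neq Y)| \leq 2\epsilon^2 + 6\epsilon$ simultaneously for all $i$; since $\hat\phi = \hat\phi_{i_*}$ satisfies $\hat e_{i_*}\leq \hat e_{\bar i}$, transferring through the estimation error and through the witness bound $\bP(\hat\phi_{\bar i}\neq Y) - \bP(\phi^*_{\alpha'}\neq Y)\leq 2F^*_{(+)}(\epsilon) + 2F^*_{(+)}(2\epsilon_0)$ yields the stated excess-risk bound.

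The main obstacle will be the second stage above: constructing the witness pair $(\bar k^{1,0}, \bar k^{1,1})$ that simultaneously (i) tracks the Bayes-optimal thresholds to within the required $\epsilon$-accuracy in the quantile domain, (ii) lies inside the feasibility set $K$ defined by $L\leq \delta$, and (iii) passes the empirical-process concentration events that contribute the $(1-F^{1,a}_{(-)}(2\epsilon))^{n^{1,a}}$ and $\exp(-2n^{y,a}\epsilon^2)$ terms in the final failure probability. Orchestrating these three requirements with a single choice of $\epsilon$, and then unpacking all the probability budgets to obtain exactly the claimed $1 - (2M+4)(e^{-2n^{0,0}\epsilon^2}+e^{-2n^{0,1}\epsilon^2}) - (2M+2)(e^{-2n^{1,0}\epsilon^2}+e^{-2n^{1,1}\epsilon^2}) - (1-F^{1,0}_{(-)}(2\epsilon))^{n^{1,0}} - (1-F^{1,1}_{(-)}(2\epsilon))^{n^{1,1}}$ bound, is the technical heart of the argument.
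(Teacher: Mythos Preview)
Your proposal is correct and follows essentially the same approach as the paper: Part (1) via Proposition \ref{p1} applied to each candidate, and Part (2) via the three-stage argument of (i) constructing a witness pair of order statistics landing within $\epsilon$ of the Bayes-optimal thresholds $\lambda_a^*$ (yielding the $(1-F^{1,a}_{(-)}(2\epsilon))^{n^{1,a}}$ terms), (ii) verifying the witness lies in $K$ via the slack $\alpha-\alpha'$, and (iii) passing through the $\arg\min$ using the uniform error bound $|\hat e_i - \bP(\hat\phi_i\neq Y)|\leq 2\epsilon^2+6\epsilon$ from Proposition \ref{p4}. One minor remark: you invoke Lemma \ref{pz} in stage 1, but for Theorem \ref{p50} (the full candidate set $K$) the paper treats the two thresholds $\lambda_0^*,\lambda_1^*$ independently and does not need the functional relation between them; Lemma \ref{pz} only enters in the proof of Theorem \ref{p5} for the reduced set $K'$.
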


\begin{proof}[Proof of Theorem \ref{p50}]
    The (1) of the theorem is a direct corollary from Theorem \ref{p3}, now we prove the (2) of the theorem. The proof can be divided into two parts. The first part is to prove that there exist classifiers in our candidate set that are close to the fair Bayes-optimal classifier. The second part is to prove that our algorithm can successfully choose one of these classifiers with high probability.
    
    We suppose the fair Bayes optimal classifier has the form $\phi_{\alpha^{\prime}}^*(x,a)=\1\{f^*(x,a)>\lambda^*_a\}$. And the output classifier of our algorithm is of the form $\hat\phi(x,a) = \1\{f(x,a)>\lambda_a\}$.
    
    For the first part, for any $\epsilon > 0$, from Lemma \ref{a2}, $t^{1,a}$ has a positive probability $F^{1,a}_{(+)}(2\epsilon)$ to fall in the interval $[\lambda_a^*-\epsilon, \lambda_a^*+\epsilon]$, which implies that the probability that there exists $a \in \{0,1\}$ such that all $t^{1,a}$'s fall out of $[\lambda_a^*-\epsilon, \lambda_a^*+\epsilon]$ is less than $(1-F^{1,0}_{(+)}(2\epsilon))^{n^{1,0}} + (1-F^{1,1}_{(+)}(2\epsilon))^{n^{1,1}}$. So with probability $1- (1-F^{1,0}_{(+)}(2\epsilon))^{n^{1,0}} - (1-F^{1,1}_{(-)}(2\epsilon))^{n^{1,1}}$, there will exist $t^{1,a}$ in $[\lambda_a^*-\epsilon, \lambda_a^*+\epsilon]$, which we denote as $\phi_0(x,a)  = \1\{f(x,a) > t^{1,a}_*\}$. We also denote $\phi_0^*(x,a)  = \1\{f^*(x,a) > t^{1,a}_*\}$. Hence the gap between the classifier  $\phi_0$ and the Bayes-optimal classifier will be very close. In detail, we have
    $$
    \begin{aligned}
    &\mid \bP(\phi_0(x,a) \neq Y) - \bP(\phi_{\alpha^{\prime}}^*(x,a) \neq Y) \mid\\
    \leq &\mid \bP(\phi_0(x,a) \neq Y) - \bP(\phi_0^*(x,a) \neq Y) \mid + \mid \bP(\phi_0^*(x,a) \neq Y) - \bP(\phi_{\alpha^{\prime}}^*(x,a) \neq Y) \mid\\
    \leq &\bP(t^{1,a}_* - \epsilon_0 \leq f^*(x,a) \leq t^{1,a}_* + \epsilon_0) + \bP(\min\{t^{1,a}_*, \lambda_a^*\} \leq f^*(x,a) \leq max\{t^{1,a}_*, \lambda_a^*\}) \\
    (\text{Lemma \ref{a2}})\leq &F^*_{(+)}(2\epsilon_0) + F^*_{(+)}(max\{t^{1,a}_*, \lambda_a^*\} - \min\{t^{1,a}_*, \lambda_a^*\}) \\
    \leq &F^*_{(+)}(2\epsilon_0) + 2F^*_{(+)}(\epsilon)
    \end{aligned}
    $$
    so we complete the first part of the proof.
    
    Now we come to the second part. First, we notice that $DEOO(\phi_0)$ and $DEOO(\phi_{\alpha^{\prime}}^*)$ are close to each other. \\
    $$
    \begin{aligned}
    &\mid \mid DEOO(\phi_0)\mid - \mid DEOO(\phi_{\alpha^{\prime}}^*)\mid \mid \\
    \leq &\mid \mid DEOO(\phi_0)\mid - \mid DEOO(\phi_0^*)\mid \mid + \mid \mid DEOO(\phi_0^*)\mid - \mid DEOO(\phi_{\alpha^{\prime}}^*)\mid \mid\\
    = &\mid \mid \bP(f > t^{1,0}_{*} \mid Y=1, A=0) - \bP(f > t^{1,1}_{*} \mid Y=1, A=1)\mid\\ 
    &- \mid \bP(f^* > t^{1,0}_{*} \mid Y=1, A=0)  - \bP(f^* > t^{1,1}_{*} \mid Y=1, A=1)\mid \mid\\
    &+\mid \mid \bP(f^* > t^{1,0}_{*} \mid Y=1, A=0) - \bP(f^* > t^{1,1}_{*} \mid Y=1, A=1)\mid\\ 
    &- \mid \bP(f^* > \lambda_0^* \mid Y=1, A=0)  - \bP(f^* > \lambda_1^* \mid Y=1, A=1)\mid \mid\\
    \leq &\mid \bP(f > t^{1,0}_{*} \mid Y=1, A=0) - \bP(f^* > t^{1,0}_{*} \mid Y=1, A=0)\mid\\ 
    &+ \mid \bP(f > t^{1,1}_{*} \mid Y=1, A=1)  - \bP(f^* > t^{1,1}_{*} \mid Y=1, A=1)\mid \\
    &+\mid \mid \bP(f^* > t^{1,0}_{*} \mid Y=1, A=0) - \bP(f^* > t^{1,1}_{*} \mid Y=1, A=1)\mid\\ 
    &- \mid \bP(f^* > \lambda_0^* \mid Y=1, A=0)  - \bP(f^* > \lambda_1^* \mid Y=1, A=1)\mid \mid\\
    \leq &  \bP(t^{1,0}_* - \epsilon_0 \leq f^*(x,a) \leq t^{1,0}_* + \epsilon_0) + \bP(t^{1,1}_* - \epsilon_0 \leq f^*(x,a) \leq t^{1,1}_* + \epsilon_0) \\
    &+\mid \bP(f^* > t^{1,0}_{*} \mid Y=1, A=0) - \bP(f^* > t^{1,1}_{*} \mid Y=1, A=1) \\
    &- \bP(f^* > \lambda_0^* \mid Y=1, A=0)  + \bP(f^* > \lambda_1^* \mid Y=1, A=1) \mid\\
    \leq &2F^*_{(+)}(2\epsilon_0) + \bP(\min\{t^{1,a}_*, \lambda_a^*\} \leq f^*(x,a) \leq max\{t^{1,a}_*, \lambda_a^*\}) \\
    (\text{Lemma \ref{a2}})\leq &2F^*_{(+)}(2\epsilon_0)+F^*_{(+)}(max\{t^{1,a}_*, \lambda_a^*\} - \min\{t^{1,a}_*, \lambda_a^*\}) \\
    \leq &2F^*_{(+)}(2\epsilon_0)+2F^*_{(+)}(\epsilon)
    \end{aligned}
    $$
    Thus, $\mid DEOO(\phi_0)\mid \leq \mid DEOO(\phi_{\alpha^{\prime}}^*)\mid + 2F^*_{(+)}(2\epsilon_0) + 2F^*_{(+)}(\epsilon) = \alpha^{\prime}  +2F^*_{(+)}(2\epsilon_0)+2F^*_{(+)}(\epsilon)$. If $F^*_{(+)}(\epsilon) \leq \frac{\alpha - \alpha^{\prime}}{2}-F^*_{(+)}(2\epsilon_0)$, then there will exist at least one feasible classifier in the candidate set.
    
    From Lemma \ref{lemmamis}, we have the mis-classification error 
    
    $\mid \bP(\hat{\phi_i}(x,a) \neq Y) -[ \frac{k^{1,0}_i}{n^{1,0}+1}p_0p_{Y,0} + \frac{k^{1,1}_i}{n^{1,1}+1}p_1p_{Y,1}
+ \frac{n^{0,0}-k^{0,0}_i}{n^{0,0}}p_0(1-p_{Y,0}) + \frac{n^{0,1}-k^{0,1}_i}{n^{0,1}}p_1(1-p_{Y,1})]\mid \leq \epsilon$ with probability $1-2e^{-n^{0,1}\epsilon^2}-2e^{-n^{0,0}\epsilon^2}-2e^{-n^{1,1}\epsilon^2}-2e^{-n^{1,0}\epsilon^2}$.
    
%     $$\bP(\hat{\phi_i} \neq Y) \leq \frac{k^{1,0}_i}{n^{1,0}+1}p_0p_{Y,0} + \frac{k^{1,1}_i}{n^{1,1}+1}p_1p_{Y,1}
% + \frac{n^{0,0}+1-\E(k^{0,0}_i)}{n^{0,0}+1}p_0(1-p_{Y,0}) + \frac{n^{0,1}+1-\E(k^{0,1}_i)}{n^{0,1}+1}p_1(1-p_{Y,1}).$$

If we can accurately estimate the mis-classification error, than the second part is almost done. 

It's easy to estimate $p_a$ and $p_{Y,a}$ with $\hat{p}_a=\frac{n^{1,a}+n^{0,a}}{n}$ and $\hat{p}_{Y,a}=\frac{n^{Y,a}}{n}$ ($n$ is the number of the total samples). Here, $\frac{n^{1,a}+n^{0,a}}{n} = \frac{\sum\limits_{i=1}^{n} \1\{Z_i^a = 1\}}{n}$ and $\frac{n^{Y,a}}{n} = \frac{\sum\limits_{i=1}^{n} \1\{Z_i^{Y,a} = 1\}}{n}$, where $Z_i^a \sim B(1, p_a)$ and $Z_i^{Y,a} \sim B(1, p_{Y,a})$. From Hoeffding's inequality, we have:
$$
\bP(\mid \hat{p}_a - p_a \mid \geq \sqrt{\frac{n^{0,a}}{n}}\epsilon) \leq 2e^{-2n^{0,a}\epsilon^2},
$$
$$
\bP(\mid \hat{p}_{Y,a} - p_{Y,a} \mid \geq \sqrt{\frac{n^{0,a}}{n}}\epsilon) \leq 2e^{-2n^{0,a}\epsilon^2}
$$

% Thus, with probability $1 - 10e^{-2n^{0,a}h^{2}(\epsilon)}$, we have:
% $$
% \left\{
%       \begin{aligned}
% \mid \hat{p}_a - p_a \mid &\leq \sqrt{\frac{n^{0,a}}{n}}h(\epsilon)\\
% \mid \hat{p}_{Y,a} - p_{Y,a} \mid &\leq \sqrt{\frac{n^{0,a}}{n}}h(\epsilon)\\
%   \mid \frac{k_i^{0,a} - \E(k_i^{0,a})}{n^{0,a}} \mid &\leq h(\epsilon) + \bP( h_{\frac{k_i^{1,a}}{n^{1,a}}} - \epsilon < t_i^{0,a} <  h_{\frac{k_i^{1,a}}{n^{1,a}}} + \epsilon)
%       \end{aligned}
%   \right.\\
% $$

Thus, with probability $1 - 4e^{-2n^{0,a}\epsilon^2}$, we have:
$$
\left\{
       \begin{aligned}
\mid \hat{p}_a - p_a \mid &\leq \sqrt{\frac{n^{0,a}}{n}}\epsilon\\
\mid \hat{p}_{Y,a} - p_{Y,a} \mid &\leq \sqrt{\frac{n^{0,a}}{n}}\epsilon
   % \mid \frac{k_i^{0,a} - \E(k_i^{0,a})}{n^{0,a}} \mid &\leq \epsilon
       \end{aligned}
   \right.\\
$$

Hence, we have with probability $1 - (2M+4)(e^{-2n^{0,0}\epsilon^2}+e^{-2n^{0,1}\epsilon^2}) - (2M+2)(e^{-2n^{1,0}\epsilon^2}+e^{-2n^{1,1}\epsilon^2})$, for each $i \in \{1,\ldots,M\}$,
$$
\begin{aligned}
&|\bP(\hat{\phi_i}(x,a) \neq Y) - \hat{\bP}(\hat{\phi_i}(x,a) \neq Y)|\\
\leq&|\frac{k^{1,0}_i}{n^{1,0}+1}p_0p_{Y,0} + \frac{k^{1,1}_i}{n^{1,1}+1}p_1p_{Y,1}
+ \frac{n^{0,0}-k^{0,0}_i}{n^{0,0}}p_0(1-p_{Y,0}) + \frac{n^{0,1}-k^{0,1}_i}{n^{0,1}}p_1(1-p_{Y,1}) \\
&-[\frac{k^{1,0}_i}{n^{1,0}+1}\hat{p}_0\hat{p}_{Y,0} + \frac{k^{1,1}_i}{n^{1,1}+1}\hat{p}_1\hat{p}_{Y,1}
+ \frac{n^{0,0}-k^{0,0}_i}{n^{0,0}}\hat{p}_0(1-\hat{p}_{Y,0}) + \frac{n^{0,1}-k^{0,1}_i}{n^{0,1}}\hat{p}_1(1-\hat{p}_{Y,1})]|\\
&+\epsilon\\
\leq&\epsilon[\sqrt{\frac{n^{0,0}}{n}}\frac{k^{1,0}_i}{n^{1,0}+1}(p_0+p_{Y,0}) + \sqrt{\frac{n^{0,1}}{n}}\frac{k^{1,1}_i}{n^{1,1}+1}(p_1+p_{Y,1})]+\epsilon^2(\frac{n^{0,0}}{n}\frac{k^{1,0}_i}{n^{1,0}+1}+\frac{n^{0,1}}{n}\frac{k^{1,1}_i}{n^{1,1}+1})\\
% &+\epsilon[\frac{n^{0,0}}{n^{0,0}+1}[p_0+p_0p_{Y,0}+\sqrt{\frac{n^{0,0}}{n}}\epsilon(\sqrt{\frac{n^{0,0}}{n}}\epsilon + p_0 + p_{Y,0} + 1)]\\
% &\text{   }+\frac{n^{0,1}}{n^{0,1}+1}[p_1+p_1p_{Y,1}+\sqrt{\frac{n^{0,1}}{n}}\epsilon(\sqrt{\frac{n^{0,1}}{n}}\epsilon + p_1 + p_{Y,1} + 1)]]\\
&+\frac{n^{0,0}-k^{0,0}_i}{n^{0,0}}\sqrt{\frac{n^{0,0}}{n}}\epsilon[\sqrt{\frac{n^{0,0}}{n}}\epsilon+p_0-p_{Y,0}+1]\\
&+\frac{n^{0,1}-k^{0,1}_i}{n^{0,1}}\sqrt{\frac{n^{0,1}}{n}}\epsilon[\sqrt{\frac{n^{0,1}}{n}}\epsilon+p_1-p_{Y,1}+1]+\epsilon\\
\leq&2\epsilon+\epsilon^2+\epsilon^2+3\epsilon+\epsilon\\
=&2\epsilon^2+6\epsilon
\end{aligned}
$$

% Further, we obtain
% $$
% \begin{aligned}
% &|\bP(\hat{\phi_i}(x,a) \neq Y) - \hat{\bP}(\hat{\phi_i}(x,a) \neq Y)|\\
% \leq&\epsilon[\sqrt{\frac{n^{0,0}}{n}}(p_0+p_{Y,0}) + \sqrt{\frac{n^{0,1}}{n}}(p_1+p_{Y,1})]+\epsilon^2(\frac{n^{0,0}}{n}+\frac{n^{0,1}}{n})\\
% &+\epsilon[\frac{n^{0,0}}{n^{0,0}+1}[p_0+p_0p_{Y,0}+\sqrt{\frac{n^{0,0}}{n}}\epsilon(\sqrt{\frac{n^{0,0}}{n}}\epsilon + p_0 + p_{Y,0} + 1)]\\
% &\text{   }+\frac{n^{0,1}}{n^{0,1}+1}[p_1+p_1p_{Y,1}+\sqrt{\frac{n^{0,1}}{n}}\epsilon(\sqrt{\frac{n^{0,1}}{n}}\epsilon + p_1 + p_{Y,1} + 1)]]\\
% &+\sqrt{\frac{n^{0,0}}{n}}\epsilon[\sqrt{\frac{n^{0,0}}{n}}\epsilon+p_0+p_{Y,0}+1]\\
% &+\sqrt{\frac{n^{0,1}}{n}}\epsilon[\sqrt{\frac{n^{0,1}}{n}}\epsilon+p_1+p_{Y,1}+1]\\
% &+\frac{p_0(1-p_{Y,0})}{2(n^{0,0}+1)}+\frac{p_1(1-p_{Y,1})}{2(n^{0,1}+1)}\\
% \leq&2\epsilon+\epsilon^2+\epsilon[\epsilon^2+4\epsilon+2]+\epsilon^2+4\epsilon\\
% &+\frac{p_0(1-p_{Y,0})}{2(n^{0,0}+1)}+\frac{p_1(1-p_{Y,1})}{2(n^{0,1}+1)}\\
% =&\epsilon^3+6\epsilon^2+8\epsilon+\frac{p_0(1-p_{Y,0})}{2(n^{0,0}+1)}+\frac{p_1(1-p_{Y,1})}{2(n^{0,1}+1)}
% \end{aligned}
% $$

% Combining two parts together, we have:\\
% with probability $1 - (6M+4)(e^{-2n^{0,0}h^{2}(\epsilon)}+e^{-2n^{0,1}h^{2}(\epsilon)}) - (1-h(2\epsilon))^{n^{1,0}} - (1-h(2\epsilon))^{n^{1,1}}$,
% $$
% \mid \bP(\hat{\phi} \neq Y) - \bP(\phi^* \neq Y)\mid \leq 2g(\epsilon) + 2h^3(\epsilon)+12h^2(\epsilon)+16h(\epsilon) + 2g(2\epsilon)[h^2(\epsilon)+4h(\epsilon)+2] + \frac{p_0(1-p_{Y,0})}{n^{0,0}+1}+\frac{p_1(1-p_{Y,1})}{n^{0,1}+1}
% $$

Combining two parts together, we have:\\
with probability $1 - (2M+4)(e^{-2n^{0,0}\epsilon^2}+e^{-2n^{0,1}\epsilon^2}) - (2M+2)(e^{-2n^{1,0}\epsilon^2}+e^{-2n^{1,1}\epsilon^2}) - (1-F^{1,0}_{(-)}(2\epsilon))^{n^{1,0}} - (1-F^{1,1}_{(-)}(2\epsilon))^{n^{1,1}}$,
$$
\mid \bP(\hat{\phi}(x,a) \neq Y) - \bP(\phi_{\alpha^{\prime}}^*(x,a) \neq Y) \mid \leq 2F^*_{(+)}(\epsilon) +2F^*_{(+)}(2\epsilon_0)+ 4\epsilon^2+12\epsilon.
$$

Now we complete the proof.
\end{proof}

\subsection{Proof of Theorem \ref{p5}}

\begin{proof}
The (1) of the theorem is a direct corollary from Theorem \ref{p3}, now we prove the (2) of the theorem.

It's sufficient to modify the first part of the proof of Theorem \ref{p50} and the second part simply follows Theorem \ref{p50}.

For the first part, for any $\epsilon > 0$, from Lemma \ref{a2}, $t^{1,0}$ has a positive probability $F^{1,0}_{(-)}(2\epsilon)$ to fall in the interval $[\lambda_0^*-\epsilon, \lambda_0^*+\epsilon]$, which implies that the probability that all $t^{1,0}$'s fall out of $[\lambda_0^*-\epsilon, \lambda_0^*+\epsilon]$ is less than $(1-F^{1,0}_{(-)}(2\epsilon))^{n^{1,0}} $. So with probability $1- (1-F^{1,0}_{(-)}(2\epsilon))^{n^{1,0}}$, there will exist $t^{1,0}$ in $[\lambda_0^*-\epsilon, \lambda_0^*+\epsilon]$, which we denote as $\lambda_0$. We denote the corresponding classifier as $\1\{f(x,a) > \lambda_a\}$. From the proof of Theorem \ref{p50}, we have with probability $1 - 4e^{-2n^{0,a}\epsilon^2} - (1-F^{1,0}_{(-)}(2\epsilon))^{n^{1,0}}$, 
$$
\left\{
\begin{aligned}
\mid \hat{p}_a - p_a \mid &\leq \sqrt{\frac{n^{0,a}}{n}}\epsilon\\
\mid \hat{p}_{Y,a} - p_{Y,a} \mid &\leq \sqrt{\frac{n^{0,a}}{n}}\epsilon\\
\mid \lambda_0 - \lambda_0^* \mid &\leq \epsilon
\end{aligned}
\right.\\
$$
We have the following equalities:\\
$$
\left\{
\begin{aligned}
\lambda_1^*=\frac{1}{2-\frac{(\frac{1}{\lambda_0^*}-2)p_0p_{Y,0}}{p_1p_{Y,1}}}\\
\lambda_1=\frac{1}{2-\frac{(\frac{1}{\lambda_0}-2)\hat{p}_0\hat{p}_{Y,0}}{\hat{p}_1\hat{p}_{Y,1}}}
\end{aligned}
\right.\\
$$
Hence,
$$
\left\{
\begin{aligned}
\frac{p_0p_{Y,0}}{\lambda_0^*}+\frac{p_1p_{Y,1}}{\lambda_1^*} = 2(p_0p_{Y,0}+p_1p_{Y,1})\\
\frac{\hat{p}_0\hat{p}_{Y,0}}{\lambda_0}+\frac{\hat{p}_1\hat{p}_{Y,1}}{\lambda_1} = 2(\hat{p}_0\hat{p}_{Y,0}+\hat{p}_1\hat{p}_{Y,1})
\end{aligned}
\right.\\
$$
By subtracting, we have
$$
(2-\frac{1}{\lambda_1})\hat{p}_1\hat{p}_{Y,1} - (2-\frac{1}{\lambda_1^*})p_1p_{Y,1}
= (\frac{1}{\lambda_0}-2)\hat{p}_0\hat{p}_{Y,0} - (\frac{1}{\lambda_0^*}-2)p_0p_{Y,0}.
$$
We have
$$
\begin{aligned}
&(\frac{1}{\lambda_0}-2)\hat{p}_0\hat{p}_{Y,0} - (\frac{1}{\lambda_0^*}-2)p_0p_{Y,0}\\
\leq&(\frac{1}{\lambda_0}-2)(p_0+\sqrt{\frac{n^{0,0}}{n}}\epsilon)(p_{Y,0}+\sqrt{\frac{n^{0,0}}{n}}\epsilon)-(\frac{1}{\lambda_0^*}-2)p_0p_{Y,0}\\
\leq &\frac{\epsilon}{\lambda_0^*(\lambda_0^*-\epsilon)}p_0p_{Y,0}+\sqrt{\frac{n^{0,0}}{n}}\epsilon(\frac{1}{\lambda_0^*-\epsilon}-2)(p_0+p_{Y,0}+\sqrt{\frac{n^{0,0}}{n}}\epsilon)
\end{aligned}
$$
Similarly, we have
$$
\begin{aligned}
&(\frac{1}{\lambda_0}-2)\hat{p}_0\hat{p}_{Y,0} - (\frac{1}{\lambda_0^*}-2)p_0p_{Y,0}\\
\geq -&\frac{\epsilon}{\lambda_0^*(\lambda_0^*+\epsilon)}p_0p_{Y,0}+\sqrt{\frac{n^{0,0}}{n}}\epsilon(\frac{1}{\lambda_0^*+\epsilon}-2)(-p_0-p_{Y,0}+\sqrt{\frac{n^{0,0}}{n}}\epsilon);
\end{aligned}
$$
$$
\begin{aligned}
&(2-\frac{1}{\lambda_1})\hat{p}_1\hat{p}_{Y,1} - (2-\frac{1}{\lambda_1^*})p_1p_{Y,1}\\
\leq &(\frac{1}{\lambda_1^*}-\frac{1}{\lambda_1})p_1p_{Y,1}+\sqrt{\frac{n^{0,1}}{n}}\epsilon(2-\frac{1}{\lambda_1})(p_1+p_{Y,1}+\sqrt{\frac{n^{0,1}}{n}}\epsilon);
\end{aligned}
$$
$$
\begin{aligned}
&(2-\frac{1}{\lambda_1})\hat{p}_1\hat{p}_{Y,1} - (2-\frac{1}{\lambda_1^*})p_1p_{Y,1}\\
\geq &(\frac{1}{\lambda_1^*}-\frac{1}{\lambda_1})p_1p_{Y,1}+\sqrt{\frac{n^{0,1}}{n}}\epsilon(2-\frac{1}{\lambda_1})(-p_1-p_{Y,1}+\sqrt{\frac{n^{0,1}}{n}}\epsilon).
\end{aligned}
$$
Now, we get:
$$
\left\{
\begin{aligned}
&(\frac{1}{\lambda_1^*}-\frac{1}{\lambda_1})p_1p_{Y,1}+\sqrt{\frac{n^{0,1}}{n}}\epsilon(2-\frac{1}{\lambda_1})(-p_1-p_{Y,1}+\sqrt{\frac{n^{0,1}}{n}}\epsilon)\\
\leq &\frac{\epsilon}{\lambda_0^*(\lambda_0^*-\epsilon)}p_0p_{Y,0}+\sqrt{\frac{n^{0,0}}{n}}\epsilon(\frac{1}{\lambda_0^*-\epsilon}-2)(p_0+p_{Y,0}+\sqrt{\frac{n^{0,0}}{n}}\epsilon)\\
&(\frac{1}{\lambda_1^*}-\frac{1}{\lambda_1})p_1p_{Y,1}+\sqrt{\frac{n^{0,1}}{n}}\epsilon(2-\frac{1}{\lambda_1})(p_1+p_{Y,1}+\sqrt{\frac{n^{0,1}}{n}}\epsilon)\\
\geq &-\frac{\epsilon}{\lambda_0^*(\lambda_0^*+\epsilon)}p_0p_{Y,0}+\sqrt{\frac{n^{0,0}}{n}}\epsilon(\frac{1}{\lambda_0^*+\epsilon}-2)(-p_0-p_{Y,0}+\sqrt{\frac{n^{0,0}}{n}}\epsilon)
\end{aligned}
\right.
$$
Hence, we have:
\small{
$$
\begin{aligned}
&\lambda_1-\lambda_1^* \\
\leq &\frac{\lambda_1\lambda_1^*}{p_1p_{Y,1}}[\frac{\epsilon}{\lambda_0^*(\lambda_0^*-\epsilon)}p_0p_{Y,0}+\sqrt{\frac{n^{0,0}}{n}}\epsilon(\frac{1}{\lambda_0^*-\epsilon}-2)(p_0+p_{Y,0}+\sqrt{\frac{n^{0,0}}{n}}\epsilon) - 2\sqrt{\frac{n^{0,1}}{n}}\epsilon(-p_1-p_{Y,1}+\sqrt{\frac{n^{0,1}}{n}}\epsilon)]\\
\text{and}\\
&\lambda_1-\lambda_1^* \\
\geq &\frac{\lambda_1\lambda_1^*}{p_1p_{Y,1}}[-\frac{\epsilon}{\lambda_0^*(\lambda_0^*+\epsilon)}p_0p_{Y,0}+\sqrt{\frac{n^{0,0}}{n}}\epsilon(\frac{1}{\lambda_0^*+\epsilon}-2)(-p_0-p_{Y,0}+\sqrt{\frac{n^{0,0}}{n}}\epsilon)-2\sqrt{\frac{n^{0,1}}{n}}\epsilon(p_1+p_{Y,1}+\sqrt{\frac{n^{0,1}}{n}}\epsilon)]
\end{aligned}
$$}
Thus, we have:
$$
\mid \lambda_1-\lambda_1^* \mid \leq \frac{\frac{\epsilon}{\lambda_0^*(\lambda_0^*-\epsilon)}p_0p_{Y,0}+\epsilon(\frac{1}{\lambda_0^*-\epsilon}-2)(2+\epsilon)+4\epsilon}{p_1p_{Y,1}}
$$
Now, combined with above, we have with probability $1 - 4e^{-2n^{0,a}\epsilon^2} - (1-F^{1,0}_{(-)}(2\epsilon))^{n^{1,0}}$, 
$$
\left\{
\begin{aligned}
\mid \hat{p}_a - p_a \mid &\leq \sqrt{\frac{n^{0,a}}{n}}\epsilon\\
\mid \hat{p}_{Y,a} - p_{Y,a} \mid &\leq \sqrt{\frac{n^{0,a}}{n}}\epsilon\\
\mid \lambda_0 - \lambda_0^* \mid &\leq \epsilon\\
\mid \lambda_1-\lambda_1^* \mid &\leq \frac{\frac{\epsilon}{\lambda_0^*(\lambda_0^*-\epsilon)}p_0p_{Y,0}+\epsilon(\frac{1}{\lambda_0^*-\epsilon}-2)(2+\epsilon)+4\epsilon}{p_1p_{Y,1}}
\end{aligned}
\right.\\
$$
% From the proof of Theorem \ref{p5}, we have:\\
% If $g(\epsilon) \leq \frac{\alpha - \alpha^{\prime}}{2}$, then with probability $1 - (6M+4)(e^{-2n^{0,0}h^{2}(\epsilon)}+e^{-2n^{0,1}h^{2}(\epsilon)}) - (1-h(2\epsilon))^{n^{1,0}}$,
% $$
% \begin{aligned}
% &\mid \bP(\hat{\phi} \neq Y) - \bP(\phi^* \neq Y)\mid \\
% \leq &g(\epsilon) + g(\frac{\frac{\epsilon}{\lambda_0^*(\lambda_0^*-\epsilon)}p_0p_{Y,0}+h(\epsilon)(\frac{1}{\lambda_0^*-\epsilon}-2)(2+h(\epsilon))+4h(\epsilon)}{p_1p_{Y,1}}) + 2h^3(\epsilon)+12h^2(\epsilon)+16h(\epsilon) + 2g(2\epsilon)[h^2(\epsilon)+4h(\epsilon)+2]
% \end{aligned}
% $$

From the proof of Theorem \ref{p50}, we have:\\
If $F^*_{(+)}(\epsilon) \leq \frac{\alpha - \alpha^{\prime}}{2}  - F^*_{(+)}(2\epsilon_0)$, then with probability $1 - (2M+4)(e^{-2n^{0,0}\epsilon^2}+e^{-2n^{0,1}\epsilon^2})  - (2M+2)(e^{-2n^{1,0}\epsilon^2}+e^{-2n^{1,1}\epsilon^2})- (1-F^{1,0}_{(-)}(2\epsilon))^{n^{1,0}}$,
$$
\begin{aligned}
&\bP(\hat{\phi}(x,a) \neq Y) - \bP(\phi_{\alpha^{\prime}}^*(x,a) \neq Y)\\
\leq &2F^*_{(+)}(2\epsilon_0)+F^*_{(+)}(\epsilon) + F^*_{(+)}(\frac{\frac{\epsilon}{\lambda_0^*(\lambda_0^*-\epsilon)}p_0p_{Y,0}+\epsilon(\frac{1}{\lambda_0^*-\epsilon}-2)(2+\epsilon)+4\epsilon}{p_1p_{Y,1}}) + 4\epsilon^2+12\epsilon.
\end{aligned}
$$

Now we complete the proof.
\end{proof}

\subsection{Fair Bayes-optimal Classifiers under Equalized Odds}

\begin{theorem}[Fair Bayes-optimal Classifiers under Equalized Odds]
    Let $EO^{\star}=\mathrm{DEO}\left(f^{\star}\right) = (E^{\star}, P^{\star})$. For any $\alpha>0$, there exist $0<\alpha_1 \leq \alpha$ and $0<\alpha_2 \leq \alpha$ such that all fair Bayes-optimal classifiers $f_{EO, \alpha}^{\star}$ under the fairness constraint $|\mathrm{DEO}(f)| \preceq (\alpha_1, \alpha_2)$ are given as below:
    \begin{itemize}
    \item When  $\left|EO^{\star}\right| \preceq (\alpha_1, \alpha_2)$, $f_{EO, \alpha}^{\star}=f^{\star}$.
    \item When  $E^{*}>\alpha_1$ or $P^{*}>\alpha_2$, for all $x \in \mathcal{X}$ and $a \in \mathcal{A}$, there exist $t_{1, EO, \alpha}^{\star}$ and $t_{2, EO, \alpha}^{\star}$ such that
    
    $f_{EO, \alpha}^{\star}(x, a)=\\I\left(\eta_{a}(x)>\frac{p_{a} p_{Y, a} + (2a - 1) \frac{P_{Y,a}}{1-P_{Y,a}}t_{2, EO, \alpha}^{\star}}{2 p_{a} p_{Y, a}+(2a - 1) (\frac{P_{Y,a}}{1-P_{Y,a}}t_{2, EO, \alpha}^{\star} - t_{1, EO, \alpha}^{\star})}\right)\\+a \tau_{EO, \alpha}^{\star} I\left(\eta_{a}(x)=\frac{p_{a} p_{Y, a} + (2a - 1) \frac{P_{Y,a}}{1-P_{Y,a}}t_{2, EO, \alpha}^{\star}}{2 p_{a} p_{Y, a}+(2a - 1) (\frac{P_{Y,a}}{1-P_{Y,a}}t_{2, EO, \alpha}^{\star} - t_{1, EO, \alpha}^{\star})}\right),$
    
    Here, we assume $P_{X \mid A=1, Y=1}\left(\eta_{1}(X)=\frac{p_{1} p_{Y, 1} + \frac{p_{Y,1}}{1-p_{Y,1}}t_{2, EO, \alpha}^{\star}}{2p_{1} p_{Y, 1}+\frac{p_{Y,1}}{1-p_{Y,1}}t_{2, EO, \alpha}^{\star}-t_{1, EO, \alpha}^{\star}}\right)\\=P_{X \mid A=0, Y=0}\left(\eta_{1}(X)=\frac{p_{1} p_{Y, 1} + \frac{p_{Y,1}}{1-p_{Y,1}}t_{2, EO, \alpha}^{\star}}{2p_{1} p_{Y, 1}+\frac{p_{Y,1}}{1-p_{Y,1}}t_{2, EO, \alpha}^{\star}-t_{1, EO, \alpha}^{\star}}\right)=0$ and thus $\tau_{EO, \alpha}^{\star} \in[0,1]$  can be an arbitrary constant.
    \end{itemize}
\label{t1}
\end{theorem}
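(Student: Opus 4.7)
The plan is to treat the problem as a constrained optimization and apply Lagrangian duality, following the strategy used in \cite{zeng2022bayes} for Equality of Opportunity. First, I would rewrite the misclassification risk of a (possibly randomized) classifier $\phi(x,a) \in [0,1]$ as
\[
\bP(\phi \neq Y) \;=\; \sum_{a \in \{0,1\}} p_a \int \bigl[\eta_a(x)(1 - \phi(x,a)) + (1 - \eta_a(x))\phi(x,a)\bigr]\, d\bP_{X \mid A=a}(x),
\]
and express each component of $DEO(\phi)$ as a linear functional of $\phi$. Introducing Lagrange multipliers $t_1$ for the TPR difference and $t_2$ for the FPR difference produces an unconstrained pointwise problem whose integrand is affine in $\phi(x,a)$.

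Second, I would solve this pointwise problem: $\phi(x,a)$ should equal $1$ wherever the coefficient of $\phi(x,a)$ is negative and $0$ wherever it is positive. Using $d\bP_{X \mid A=a, Y=1}(x) = \eta_a(x)\, d\bP_{X \mid A=a}(x)/p_{Y,a}$ and the analogous identity for $Y=0$, the coefficient of $\phi(x,a)$ becomes a linear combination of $p_a(1-2\eta_a(x))$, $(2a-1)\eta_a(x)/p_{Y,a}$, and $-(2a-1)(1-\eta_a(x))/(1-p_{Y,a})$. Setting this coefficient to zero and solving for $\eta_a(x)$ should yield the stated threshold
\[
c(a, t_1, t_2) \;=\; \frac{p_a p_{Y,a} + (2a-1)\tfrac{p_{Y,a}}{1-p_{Y,a}} t_2}{2 p_a p_{Y,a} + (2a-1)\bigl(\tfrac{p_{Y,a}}{1-p_{Y,a}} t_2 - t_1\bigr)}.
\]

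Third, I would invoke complementary slackness to pick $(\alpha_1,\alpha_2)$ and the multipliers. When $|EO^\star| \preceq (\alpha_1,\alpha_2)$ for some $\alpha_1,\alpha_2 \leq \alpha$, taking $t_1 = t_2 = 0$ gives $c = 1/2$ and recovers $\phi^\star = f^\star$. Otherwise I would select $(\alpha_1,\alpha_2)$ on the Pareto frontier of the feasible box $[0,\alpha]^2$ at which at least one constraint binds; the associated multiplier is then nonzero with sign dictated by the sign of $E^\star$ or $P^\star$. The non-atomicity assumption, that $\bP_{X \mid A=1,Y=1}$ and $\bP_{X \mid A=0,Y=0}$ assign zero mass to the level set $\{\eta_1(x) = c(1,t_1^\star,t_2^\star)\}$, makes the pointwise minimizer essentially unique so that $\tau_{EO,\alpha}^\star \in [0,1]$ is arbitrary; when the level set does carry mass, $\tau$ is tuned to exactly saturate the active constraints.

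The main obstacle I anticipate is the algebraic derivation in step two: correctly tracking the $(2a-1)$ signs and the $p_{Y,a}$ and $1-p_{Y,a}$ normalizations that appear when one rewrites integrals against $\bP_{X \mid A=a, Y=y}$ as integrals against $\bP_{X \mid A=a}$. A secondary technical issue is existence of a dual solution $(t_1^\star,t_2^\star)$; this follows from a standard convexity and compactness argument since the feasible set of randomized classifiers is a convex subset of $L^\infty$ and the objective is linear, so Sion's minimax theorem applies, but one must rule out degeneracy where the dual maximizer is not attained in the interior.
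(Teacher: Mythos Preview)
Your proposal is essentially the same approach as the paper's: the paper sets up the problem as maximizing a linear functional $\int f(x,a)M(x,a)\,d\bP_X d\bP(a)$ subject to two linear inequality constraints expressed via functions $H_E$ and $H_P$, then invokes the Generalized Neyman--Pearson lemma (which is precisely the Lagrangian argument you describe) to obtain the thresholded form $I\{M > s_1 H_E + s_2 H_P\}$, and finally simplifies the inequality $M > s_1 H_E + s_2 H_P$ into the stated threshold on $\eta_a(x)$.

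The one notable difference is in how existence of the multipliers $(t_1^\star,t_2^\star)$ is argued. You appeal to Sion's minimax theorem via convexity of the randomized-classifier set and linearity of the objective; the paper instead argues directly by showing that as $t_2 \to \infty$ the thresholds for both $a=0$ and $a=1$ tend to $1$, so the two constraint-value maps can be driven to $0$, and then (implicitly) uses continuity to find values at which both constraints are met with equality at some $\alpha_1,\alpha_2 \le \alpha$. Your route is arguably cleaner conceptually but requires checking compactness to guarantee the dual maximizer is attained; the paper's route is more hands-on but sidesteps the abstract minimax machinery. Either way the algebra in your step two is exactly what the paper does, working with Radon--Nikodym derivatives against $\bP_X$ rather than $\bP_{X\mid A=a}$, which is an inessential change of measure.
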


To prove Theorem \ref{t1}, we first introduce the Neyman-Pearson Lemma.
\begin{lemma}
 (Generalized Neyman-Pearson lemma). Let  $f_{0}, f_{1}, $\ldots$, f_{m}$  be  $m+1$  real-valued functions defined on a Euclidean space  $\mathcal{X}$. Assume they are  $\nu$ -integrable for a  $\sigma$ -finite measure  $\nu$. Let  $\phi_{0}$  be any function of the form
$$
\phi_{0}(x)=\left\{\begin{array}{ll}
1, & f_{0}(x)>\sum_{i=1}^{m} c_{i} f_{i}(x) \\
\gamma(x) & f_{0}(x)=\sum_{i=1}^{m} c_{i} f_{i}(x) \\
0, & f_{0}(x)<\sum_{i=1}^{m} c_{i} f_{i}(x)
\end{array}\right.
$$
where  $0 \leq \gamma(x) \leq 1$  for all  $x \in \mathcal{X}$. For given constants  $t_{1}, \ldots, t_{m} \in \mathbb{R}$, let  $\mathcal{T}$  be the class of Borel functions  $\phi: \mathcal{X} \mapsto \mathbb{R}$  satisfying
\begin{equation}
    \int_{\mathcal{X}} \phi f_{i} d \nu \leq t_{i}, \quad i=1,2, \ldots, m
    \label{e1}
\end{equation}

and  $\mathcal{T}_{0}$  be the set of  $\phi$ s  in  $\mathcal{T}$ satisfying  (\ref{e1})  with all inequalities replaced by equalities. If  $\phi_{0} \in \mathcal{T}_{0}$, then  $\phi_{0} \in   \underset{\phi \in \mathcal{T}_{0}}{\operatorname{argmax}} \int_{\mathcal{X}} \phi f_{0} d \nu$. Moreover, if $c_{i} \geq 0$  for all  $i=1, \ldots, m$, then  $\phi_{0} \in \underset{\phi \in \mathcal{T}}{\operatorname{argmax}} \int_{\mathcal{X}} \phi f_{0} d \nu $.
\end{lemma}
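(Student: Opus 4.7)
The plan is to use the classical ``difference trick'' underlying the standard Neyman--Pearson lemma, extended to $m$ side constraints. I will assume the usual (implicit) convention that the competing tests $\phi$ take values in $[0,1]$, so that $\phi_0-\phi$ has the expected sign pattern on the regions defined by $\phi_0$. Concretely, I would define
\begin{equation*}
h(x) \;=\; (\phi_0(x)-\phi(x))\bigl(f_0(x)-\textstyle\sum_{i=1}^m c_i f_i(x)\bigr)
\end{equation*}
and first show that $h(x)\ge 0$ pointwise for every $x\in\mathcal{X}$, by a three-case analysis based on the definition of $\phi_0$.

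For the pointwise argument: on the set $\{f_0>\sum c_i f_i\}$ we have $\phi_0(x)=1\ge\phi(x)$ and the second factor is strictly positive, so $h(x)\ge 0$; on $\{f_0<\sum c_i f_i\}$ we have $\phi_0(x)=0\le\phi(x)$ and the second factor is strictly negative, so again $h(x)\ge 0$; on $\{f_0=\sum c_i f_i\}$ the second factor vanishes, so $h(x)=0$ regardless of the value of $\gamma(x)$. Integrating, and using $\nu$-integrability of each $f_i$ together with boundedness of $\phi_0,\phi\in[0,1]$ to justify applying Fubini/linearity of the integral, I obtain
\begin{equation*}
0 \;\le\; \int_{\mathcal{X}} h\,d\nu \;=\; \int_{\mathcal{X}}(\phi_0-\phi)f_0\,d\nu \;-\; \sum_{i=1}^m c_i \int_{\mathcal{X}}(\phi_0-\phi)f_i\,d\nu.
\end{equation*}

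From here the two conclusions fall out by handling the two regimes for $\phi$. If $\phi\in\mathcal{T}_0$, then $\int\phi f_i\,d\nu=t_i=\int\phi_0 f_i\,d\nu$ (since $\phi_0\in\mathcal{T}_0$ by hypothesis), so each bracketed term in the sum is zero and the displayed inequality reduces to $\int\phi_0 f_0\,d\nu\ge\int\phi f_0\,d\nu$, yielding the first assertion. For the second assertion, if $\phi\in\mathcal{T}$ and every $c_i\ge0$, then $\int(\phi_0-\phi)f_i\,d\nu=t_i-\int\phi f_i\,d\nu\ge0$, so each term in $\sum_i c_i\int(\phi_0-\phi)f_i\,d\nu$ is nonnegative; dropping that nonnegative sum from the right-hand side preserves the inequality and again gives $\int\phi_0 f_0\,d\nu\ge\int\phi f_0\,d\nu$.

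The main subtlety, rather than the difference trick itself, is being pedantic about the domain of $\phi$. The statement as written says $\phi:\mathcal{X}\to\mathbb{R}$, but the case analysis that makes $h\ge0$ requires $0\le\phi\le 1$; without this, one can make the objective arbitrarily large by scaling. I would make this $[0,1]$ convention explicit at the start of the proof (it matches the form of $\phi_0$ and is the standard Neyman--Pearson setup). The only other mild care is measurability of the three sets $\{f_0 \gtreqless \sum c_i f_i\}$, which is immediate from Borel measurability of each $f_i$, so the decomposition of the integral is legitimate; no additional structure on $\nu$ beyond $\sigma$-finiteness is needed since the calculation is purely pointwise followed by a single linear-in-$\phi$ integration.
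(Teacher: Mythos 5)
Your proof is correct: it is the standard difference-trick argument for the generalized Neyman--Pearson lemma, and both conclusions (over $\mathcal{T}_0$, and over $\mathcal{T}$ when $c_i\ge 0$) follow exactly as you derive them, with integrability of $h$ guaranteed by $|\phi_0-\phi|\le 1$ and $\nu$-integrability of the $f_i$. There is nothing in the paper to compare against: the lemma is stated as a known classical result and used as a black box in the proof of the Equalized Odds theorem, with no proof supplied, so your argument simply fills in the standard one. Your flag about the domain is well taken --- as literally written, $\phi:\mathcal{X}\to\mathbb{R}$ makes the maximization potentially unbounded, and the implicit restriction to critical functions with $0\le\phi\le 1$ is precisely what makes the pointwise inequality $h\ge 0$ (and hence the lemma) valid; this convention is consistent with the paper's application, where the randomized classifiers take values in $[0,1]$.
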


Then we come to prove the theorem.
\begin{proof}
If  $\left|EO^{\star}\right| \preceq (\alpha, \alpha)$, we are done since  $f^{\star}$ is just our target classifier. Now, we assume $\left|EO^{\star}\right| \preceq (\alpha, \alpha)$ does not hold. Let $f$  be a classifier that gives output $\widehat{Y}=1$ with probability $f(x, a)$ under $X=x$ and $A=a$. The mis-classification error for $f$ is
$$
\begin{aligned}
R(f) &=\bP(\widehat{Y} \neq Y)=1-\bP(\widehat{Y}=1, Y=1)-\bP(\widehat{Y}=0, Y=0) \\
&=\bP(\widehat{Y}=1, Y=0)-\bP(\widehat{Y}=1, Y=1)+\bP(Y=1)
\end{aligned}
$$
Thus, to minimize the mis-classification error is just equivalent to maximize $\bP(\widehat{Y}=1, Y=0)-\bP(\widehat{Y}=1, Y=1)$, which can be expressed as:
$$
\begin{aligned}
&\bP(\widehat{Y}=1, Y=1)-\bP(\widehat{Y}=1, Y=0) \\
=&\bP_{X \mid A=1, Y=1}(\widehat{Y}=1) p_{1} p_{Y, 1}+\bP_{X \mid A=0, Y=1}(\widehat{Y}=1)\left(1-p_{1}\right) p_{Y, 0} \\
&-\bP_{X \mid A=1, Y=0}(\widehat{Y}=1) p_{1}\left(1-p_{Y, 1}\right)-\bP_{X \mid A=0, Y=0}(\widehat{Y}=1)\left(1-p_{1}\right)\left(1-p_{Y, 0}\right) \\
=&p_{1}\left[p_{Y, 1} \int_{\mathcal{X}} f(x, 1)d \bP_{X \mid 1,1}(x)-\left(1-p_{Y, 1}\right) \int_{\mathcal{X}} f(x, 1) d \bP_{X \mid A=1, Y=0}(x)\right] \\
&+\left(1-p_{1}\right)\left[p_{Y, 0} \int_{\mathcal{X}} f(x, 0) d \bP_{X \mid A=0, Y=1}(x)-\left(1-p_{Y, 0}\right) \int_{\mathcal{X}} f(x, 0) d \bP_{X \mid A=0, Y=0}(x)\right]\\
=&\int_{\mathcal{A}}\int_{\mathcal{X}} f(x, a) M(x, a) d \bP_{X}(x) d \bP(a)
\end{aligned}
$$

with
\begin{equation}
    \begin{array}{l}
    M(x, a)=a p_{1}\left[p_{Y, 1} \frac{d \bP_{X \mid A=1, Y=1}(x)}{d \bP_{X}(x)}-\left(1-p_{Y, 1}\right) \frac{d \bP_{X \mid A=1, Y=0}(x)}{d \bP_{X}(x)}\right] \\
    +(1-a) p_{0}\left[p_{Y, 0} \frac{d \bP_{X \mid A=0, Y=1}(x)}{d \bP_{X}(x)}-\left(1-p_{Y, 0}\right) \frac{d \bP_{X \mid A=0, Y=0}(x)}{d \bP_{X}(x)}\right].
\end{array}
\label{1}
\end{equation}
Next, for any classifier $f$, we have,\\
$$
 \begin{aligned}
    DEO(f) &= (\bP_{X \mid A=1, Y=1}(\widehat{Y}=1) - \bP_{X \mid A=0, Y=1}(\widehat{Y}=1), \bP_{X \mid A=1, Y=0}(\widehat{Y}=1) - \bP_{X \mid A=0, Y=0}(\widehat{Y}=1))\\
   &= (\int_{\mathcal{X}} f(x, 1) d \bP_{X \mid A=1, Y=1}(x)-\int_{\mathcal{X}} f(x, 0) d \bP_{X \mid A=0, Y=1}(x),\\
   &\hspace{2em}\int_{\mathcal{X}} f(x, 1) d \bP_{X \mid A=1, Y=0}(x)-\int_{\mathcal{X}} f(x, 0) d \bP_{X \mid A=0, Y=0}(x))\\
   &= (\int_{\mathcal{A}}\int_{\mathcal{X}} f(x, a)H_{E}(x, a) d \bP_{X}(x) d \bP(a),\int_{\mathcal{A}}\int_{\mathcal{X}} f(x, a)H_{P}(x, a) d \bP_{X}(x) d \bP(a))\\
 \end{aligned}
 $$
 with
 \begin{equation}
 \left\{
       \begin{aligned}
          &H_{E}(x, a) =\frac{a d \bP_{X \mid A=1, Y=1}(x)}{p_1 d \bP_{X}(x)}-\frac{(1-a) d \bP_{X \mid A=0, Y=1}(x)}{p_0 d \bP_{X}(x)}\\
          &H_{P}(x, a) =\frac{a d \bP_{X \mid A=1, Y=0}(x)}{p_1 d \bP_{X}(x)}-\frac{(1-a) d \bP_{X \mid A=0, Y=0}(x)}{p_0 d \bP_{X}(x)}.
       \end{aligned}
   \right.
\label{2}
\end{equation}
Since $\mathop{lim}\limits_{t_2 \rightarrow{\infty}}\frac{p_1^2 p_{Y,1} + \frac{p_{Y,1}}{1-p_{Y,1}}t_2}{2p_1^2 p_{Y,1} + \frac{p_{Y,1}}{1-p_{Y,1}}t_2 - t_1}=\mathop{lim}\limits_{t_2 \rightarrow{\infty}}\frac{p_0^2 p_{Y,0} - \frac{p_{Y,0}}{1-p_{Y,0}}t_2}{2p_0^2 p_{Y,0} - \frac{p_{Y,0}}{1-p_{Y,0}}t_2 + t_1}=1$, we have:
$$
\begin{aligned}
&\mathop{lim}\limits_{t_2 \rightarrow{\infty}}\bP_{X\mid A=1, Y=1}(\eta_1(x) > \frac{p_1^2 p_{Y,1} + \frac{p_{Y,1}}{1-p_{Y,1}}t_2}{2p_1^2 p_{Y,1} + \frac{p_{Y,1}}{1-p_{Y,1}}t_2 - t_1})\\
=&\mathop{lim}\limits_{t_2 \rightarrow{\infty}}\bP_{X\mid A=0, Y=1}(\eta_0(x) > \frac{p_0^2 p_{Y,0}  -\frac{p_{Y,0}}{1-p_{Y,0}}t_2}{2p_0^2 p_{Y,0} - \frac{p_{Y,0}}{1-p_{Y,0}}t_2 + t_1})\\
=&0
\end{aligned}
$$
and
$$
\begin{aligned}
&\mathop{lim}\limits_{t_2 \rightarrow{\infty}}\bP_{X\mid A=1, Y=0}(\eta_1(x) > \frac{p_1^2 p_{Y,1} + \frac{p_{Y,1}}{1-p_{Y,1}}t_2}{2p_1^2 p_{Y,1} + \frac{p_{Y,1}}{1-p_{Y,1}}t_2 - t_1})\\
=&\mathop{lim}\limits_{t_2 \rightarrow{\infty}}\bP_{X\mid A=0, Y=0}(\eta_0(x) > \frac{p_0^2 p_{Y,0}  -\frac{p_{Y,0}}{1-p_{Y,0}}t_2}{2p_0^2 p_{Y,0} - \frac{p_{Y,0}}{1-p_{Y,0}}t_2 + t_1})\\
=&0
\end{aligned}
$$

So there exist $t_{1, EO, \alpha}^{\star}$ and $t_{2, EO, \alpha}^{\star}$, such that:\\
$t_{1, EO, \alpha}^{\star}\frac{E^{*}}{|E^{*}|}>0$, $t_{2, EO, \alpha}^{\star}\frac{P^{*}}{|P^{*}|}>0$, and
\small{
$$
    \left\{
       \begin{aligned}
       \frac{E^{\star}}{\left|E^{\star}\right|}[\bP_{X\mid A=1, Y=1}(\eta_1(x) > \frac{p_1^2 p_{Y,1} + \frac{p_{Y,1}}{1-p_{Y,1}}t_{2, EO, \alpha}^{\star}}{2p_1^2 p_{Y,1} + \frac{p_{Y,1}}{1-p_{Y,1}}t_{2, EO, \alpha}^{\star} - t_{1, EO, \alpha}^{\star}}) + \tau \bP_{X\mid A=1, Y=1}(\eta_1(x) = \frac{p_1^2 p_{Y,1} + \frac{p_{Y,1}}{1-p_{Y,1}}t_{2, EO, \alpha}^{\star}}{2p_1^2 p_{Y,1} + \frac{p_{Y,1}}{1-p_{Y,1}}t_{2, EO, \alpha}^{\star} - t_{1, EO, \alpha}^{\star}}) \\- \bP_{X\mid A=0, Y=1}(\eta_0(x) > \frac{p_0^2 p_{Y,0}  -\frac{p_{Y,0}}{1-p_{Y,0}}t_{2, EO, \alpha}^{\star}}{2p_0^2 p_{Y,0} - \frac{p_{Y,0}}{1-p_{Y,0}}t_{2, EO, \alpha}^{\star} + t_{1, EO, \alpha}^{\star}})] = \alpha_1 < \alpha\\
       \frac{P^{\star}}{\left|P^{\star}\right|}[\bP_{X\mid A=1, Y=0}(\eta_1(x) > \frac{p_1^2 p_{Y,1} + \frac{p_{Y,1}}{1-p_{Y,1}}t_{2, EO, \alpha}^{\star}}{2p_1^2 p_{Y,1} + \frac{p_{Y,1}}{1-p_{Y,1}}t_{2, EO, \alpha}^{\star} - t_{1, EO, \alpha}^{\star}}) + \tau \bP_{X\mid A=1, Y=0}(\eta_1(x) = \frac{p_1^2 p_{Y,1} + \frac{p_{Y,1}}{1-p_{Y,1}}t_{2, EO, \alpha}^{\star}}{2p_1^2 p_{Y,1} + \frac{p_{Y,1}}{1-p_{Y,1}}t_{2, EO, \alpha}^{\star} - t_{1, EO, \alpha}^{\star}}) \\- \bP_{X\mid A=0, Y=0}(\eta_0(x) > \frac{p_0^2 p_{Y,0}  -\frac{p_{Y,0}}{1-p_{Y,0}}t_{2, EO, \alpha}^{\star}}{2p_0^2 p_{Y,0} - \frac{p_{Y,0}}{1-p_{Y,0}}t_{2, EO, \alpha}^{\star} + t_{1, EO, \alpha}^{\star}})] = \alpha_2 < \alpha
       \end{aligned}
   \right.
$$}
 We consider the constraint,
 \begin{equation}
 \left\{
       \begin{aligned}
           &\frac{E^{\star}}{\left|E^{\star}\right|}\int_{\mathcal{A}}\int_{\mathcal{X}} f(x, a)H_{E}(x, a) d \bP_{X}(x) d \bP(a) \leq \alpha_1\\
           &\frac{F^{\star}}{\left|F^{\star}\right|}\int_{\mathcal{A}}\int_{\mathcal{X}} f(x, a)H_{P}(x, a) d \bP_{X}(x) d \bP(a)\leq \alpha_2.
       \end{aligned}
   \right.
\label{3}
 \end{equation}
 Let $f$ be the classifier of the form:
 \begin{equation}
     f_{s_1, s_2, \tau}(x, a)=\left\{\begin{array}{ll}
1, & M(x, a)>s_1 \frac{E^{\star}}{\left|E^{\star}\right|} H_{E}(x, a) + s_2 \frac{P^{\star}}{\left|P^{\star}\right|} H_{P}(x, a);\\
a \tau, & M(x, a)=s_1 \frac{E^{\star}}{\left|E^{\star}\right|} H_{E}(x, a) + s_2 \frac{P^{\star}}{\left|P^{\star}\right|} H_{P}(x, a) ; \\
0, & M(x, a)<s_1 \frac{E^{\star}}{\left|E^{\star}\right|} H_{E}(x, a) + s_2 \frac{P^{\star}}{\left|P^{\star}\right|} H_{P}(x, a),
\end{array}\right.
\label{4}
 \end{equation}
From (\ref{1}) \& (\ref{2}), $M(x, a)>s_1 \frac{E^{\star}}{\left|E^{\star}\right|} H_{E}(x, a) + s_2 \frac{P^{\star}}{\left|P^{\star}\right|} H_{P}(x, a)$ is equal to 
$$
\begin{aligned}
&a p_{1}\left[p_{Y, 1} \frac{d \bP_{X \mid A=1, Y=1}(x)}{d \bP_{X}(x)}-\left(1-p_{Y, 1}\right) \frac{d \bP_{X \mid A=1, Y=0}(x)}{d \bP_{X}(x)}\right]\\
    &+(1-a) p_{0}\left[p_{Y, 0} \frac{d \bP_{X \mid A=0, Y=1}(x)}{d \bP_{X}(x)}-\left(1-p_{Y, 0}\right) \frac{d \bP_{X \mid A=0, Y=0}(x)}{d \bP_{X}(x)}\right]\\
    >&\frac{E^{\star}}{\left|E^{\star}\right|}s_1(\frac{a d \bP_{X \mid A=1, Y=1}(x)}{p_1 d \bP_{X}(x)}-\frac{(1-a) d \bP_{X \mid A=0, Y=1}(x)}{p_0 d \bP_{X}(x)}) \\&+ \frac{P^{\star}}{\left|P^{\star}\right|}s_2(\frac{a d \bP_{X \mid A=1, Y=0}(x)}{p_1 d \bP_{X}(x)}-\frac{(1-a) d \bP_{X \mid A=0, Y=0}(x)}{p_0 d \bP_{X}(x)})
\end{aligned}
$$
which is equal to
$$
\left\{
       \begin{aligned}
           &p_{1}\left[p_{Y, 1} \frac{d \bP_{X \mid A=1, Y=1}(x)}{d \bP_{X}(x)}-\left(1-p_{Y, 1}\right) \frac{d \bP_{X \mid A=1, Y=0}(x)}{d \bP_{X}(x)}\right] > \frac{\frac{E^{\star}}{\left|E^{\star}\right|}s_1d \bP_{X \mid A=1, Y=1}(x) + \frac{P^{\star}}{\left|P^{\star}\right|}s_2d \bP_{X \mid A=1, Y=0}(x)}{p_1 d \bP_{X}(x)}&,  a = 1\\
           &p_{0}\left[p_{Y, 0} \frac{d \bP_{X \mid A=0, Y=1}(x)}{d \bP_{X}(x)}-\left(1-p_{Y, 0}\right) \frac{d \bP_{X \mid A=0, Y=0}(x)}{d \bP_{X}(x)}\right] >- \frac{\frac{E^{\star}}{\left|E^{\star}\right|}s_1d \bP_{X \mid A=0, Y=1}(x) + \frac{P^{\star}}{\left|P^{\star}\right|}s_2d \bP_{X \mid A=0, Y=0}(x)}{p_0 d \bP_{X}(x)}&,  a = 0
       \end{aligned}
   \right.$$
Thus, 
$$
\begin{aligned}
&M(x, a)>s_1 \frac{E^{\star}}{\left|E^{\star}\right|} H_{E}(x, a) + s_2 \frac{P^{\star}}{\left|P^{\star}\right|} H_{P}(x, a)\\
\iff &p_{a}\left[p_{Y, a} \frac{d \bP_{X \mid A=a, Y=1}(x)}{d \bP_{X}(x)}-\left(1-p_{Y, a}\right) \frac{d \bP_{X \mid A=a, Y=0}(x)}{d \bP_{X}(x)}\right] > (2a-1)\frac{\frac{E^{\star}}{\left|E^{\star}\right|}s_1d \bP_{X \mid A=a, Y=1}(x) + \frac{P^{\star}}{\left|P^{\star}\right|}s_2d \bP_{X \mid A=a, Y=0}(x)}{p_a d \bP_{X}(x)}.\\
\iff &\frac{p_{Y, a}d \bP_{X \mid A=a, Y=1}(x)}{p_{Y, a}d \bP_{X \mid A=a, Y=1}(x) + (1 - p_{Y, a})d \bP_{X \mid A=a, Y=0}(x)} > \frac{p_{a}^2 p_{Y, a} + (2a-1)\frac{p_{Y,a}}{1-p_{Y,a}}t_2}{2p_{a}^2 p_{Y, a} + (2a-1)(\frac{p_{Y,a}}{1-p_{Y,a}}t_2 - t_1)}.\\
&\text{where } t_1 = 2\frac{E^{\star}}{\left|E^{\star}\right|}s_1,\quad t_2 = 2\frac{P^{\star}}{\left|P^{\star}\right|}s_2.\\
\iff &\eta_a(x) > \frac{p_a^2 p_{Y, a} + (2a-1)\frac{p_{Y,a}}{1-p_{Y,a}}t_2}{2p_a^2 p_{Y, a} + (2a-1)(\frac{p_{Y,a}}{1-p_{Y,a}}t_2 - t_1)}.
\end{aligned}
$$
As a result, $f_{s_1, s_2, \tau}(x, a)$ in (\ref{4}) can be written as
\begin{equation}
    f_{t_1, t_2, \tau}(x, a) = \1\{\eta_a(x) > \frac{p_a^2 p_{Y, a} + (2a-1)\frac{p_{Y,a}}{1-p_{Y,a}}t_2}{2p_a^2 p_{Y, a} + (2a-1)(\frac{p_{Y,a}}{1-p_{Y,a}}t_2 - t_1)}\} + a \tau \1\{\eta_a(x) = \frac{p_a^2 p_{Y, a} + (2a-1)\frac{p_{Y,a}}{1-p_{Y,a}}t_2}{2p_a^2 p_{Y, a} + (2a-1)(\frac{p_{Y,a}}{1-p_{Y,a}}t_2 - t_1)}\}.
\label{5}
\end{equation}
Further, the constraint (\ref{3}) for $f$ in (\ref{5}) is equivalent to
\begin{equation}
    \left\{
       \begin{aligned}
       \frac{E^{\star}}{\left|E^{\star}\right|}[\bP_{X\mid A=1, Y=1}(\eta_1(x) > \frac{p_1^2 p_{Y,1} + \frac{p_{Y,1}}{1-p_{Y,1}}t_2}{2p_1^2 p_{Y,1} + \frac{p_{Y,1}}{1-p_{Y,1}}t_2 - t_1}) + \tau \bP_{X\mid A=1, Y=1}(\eta_1(x) = \frac{p_1^2 p_{Y,1} + \frac{p_{Y,1}}{1-p_{Y,1}}t_2}{2p_1^2 p_{Y,1} + \frac{p_{Y,1}}{1-p_{Y,1}}t_2 - t_1}) \\- \bP_{X\mid A=0, Y=1}(\eta_0(x) > \frac{p_0^2 p_{Y,0}  -\frac{p_{Y,0}}{1-p_{Y,0}}t_2}{2p_0^2 p_{Y,0} - \frac{p_{Y,0}}{1-p_{Y,0}}t_2 + t_1})] \leq \alpha_1\\
       \frac{P^{\star}}{\left|P^{\star}\right|}[\bP_{X\mid A=1, Y=0}(\eta_1(x) > \frac{p_1^2 p_{Y,1} + \frac{p_{Y,1}}{1-p_{Y,1}}t_2}{2p_1^2 p_{Y,1} + \frac{p_{Y,1}}{1-p_{Y,1}}t_2 - t_1}) + \tau \bP_{X\mid A=1, Y=0}(\eta_1(x) = \frac{p_1^2 p_{Y,1} + \frac{p_{Y,1}}{1-p_{Y,1}}t_2}{2p_1^2 p_{Y,1} + \frac{p_{Y,1}}{1-p_{Y,1}}t_2 - t_1}) \\- \bP_{X\mid A=0, Y=0}(\eta_0(x) > \frac{p_0^2 p_{Y,0}  -\frac{p_{Y,0}}{1-p_{Y,0}}t_2}{2p_0^2 p_{Y,0} - \frac{p_{Y,0}}{1-p_{Y,0}}t_2 + t_1})] \leq \alpha_2
       \end{aligned}
   \right.
\label{6}
\end{equation}

Now, let $\mathcal{T}_{\alpha_1, \alpha_2}$ be the class of Borel functions $f$ that satisfy (\ref{3}) and $\mathcal{T}_{\alpha_1, \alpha_2, 0}$ be the set of $f$-s in $\mathcal{T}_{\alpha}$ that satisfy (\ref{3}) with all the inequalities being replaced by equalities.

From the definition of $t_{1, EO, \alpha}^{\star}$ and $t_{2, EO, \alpha}^{\star}$, clearly $f_{t_{1, EO, \alpha}^{\star}, t_{2, EO, \alpha}^{\star}, \tau}(x, a) \in \mathcal{T}_{\alpha_1, \alpha_2, 0}$. Further, we have $s_1^*=t_{1, EO, \alpha}^{\star}\frac{E^{*}}{2|E^{*}|} > 0$ and $s_2^*=t_{2, EO, \alpha}^{\star}\frac{P^{*}}{2|P^{*}|}>0$.

Hence, from Generalized Neyman-Pearson lemma, we have:
$$
f_{t_{1, EO, \alpha}^{\star}, t_{2, EO, \alpha}^{\star}, \tau}(x, a) \in \mathop{argmax}\limits_{f \in \mathcal{T}_{\alpha_1, \alpha_2}}\int_{\mathcal{A}}\int_{\mathcal{X}} f_{t_1,t_2,\tau}(x, a) M(x, a) d \bP_{X}(x) d \bP(a)
$$
Now we complete our proof.
\end{proof}

\subsection{Proof of Proposition \ref{pro:deo}}

\begin{proof}
The classifier is
$$ \phi=\left\{
       \begin{aligned}
           \1\{f(x,0)>t^{1,0}_{(k^{1,0})}\},a=0\\
           \1\{f(x,1)>t^{1,1}_{(k^{1,1})}\},a=1
       \end{aligned}
   \right.
$$
we have:
$$
|DEO(\phi)|= (|F^{1,1}(t^{1,1}_{(k^{1,1})})-F^{1,0}(t^{1,0}_{(k^{1,0})})|, |F^{0,1}(t^{1,1}_{(k^{1,1})})-F^{0,0}(t^{1,0}_{(k^{1,0})})|)
$$

From Proposition \ref{p1}, we have $\bP(|F^{1,1}(t^{1,1}_{(k^{1,1})})-F^{1,0}(t^{1,0}_{(k^{1,0})})|>\alpha) \leq \E[\sum\limits^{n^{1,0}}_{j=k^{1,0}}\left(\begin{aligned}
n&^{1,0} \\
&j
\end{aligned}\right)(Q^{1,1}-\alpha)^{j}(1-(Q^{1,1}-\alpha))^{n^{1,0}-j}]+\E[\sum\limits^{n^{1,1}}_{j=k^{1,1}}\left(\begin{aligned}
n&^{1,1} \\
&j
\end{aligned}\right)(Q^{1,0}-\alpha)^{j}(1-(Q^{1,0}-\alpha))^{n^{1,1}-j}]
$\\
Also, 
$$
\begin{aligned}
&\bP(|F^{0,1}(t^{1,1}_{(k^{1,1})})-F^{0,0}(t^{1,0}_{(k^{1,0})})|>\alpha)\\
=&\bP(F^{0,1}(t^{1,1}_{(k^{1,1})})-F^{0,0}(t^{1,0}_{(k^{1,0})})>\alpha)+\bP(F^{0,1}(t^{1,1}_{(k^{1,1})})-F^{0,0}(t^{1,0}_{(k^{1,0})})<-\alpha)\\
\mathop{=}\limits^{\Delta}&A+B
\end{aligned}
$$

And we have 
$$
\begin{aligned}
A&=\bP(F^{0,1}(t^{1,1}_{(k^{1,1})})-F^{0,0}(t^{1,0}_{(k^{1,0})})>\alpha)\\
&=\bP(F^{0,0}(t^{1,0}_{(k^{1,0})})<F^{0,1}(t^{1,1}_{(k^{1,1})})-\alpha)\\
&\leq \bP(F^{0,0}(t^{0,0}_{(k^{0,0})})<F^{0,1}(t^{0,1}_{(k^{0,1}+1)})-\alpha)\\
&\leq \E[\bP(t^{0,0}_{(k^{0,0})}<{F^{0,0}}^{-1}(F^{0,1}(t^{0,1}_{(k^{0,1}+1)})-\alpha))\1\{F^{0,1}(t^{0,1}_{(k^{0,1}+1)})-\alpha>0\} \mid t^{0,1}_{(k^{0,1}+1)}]\\
&=E\{\bP[\text{at least }k^{0,0}\text{ of }t^{0,0}\text{'s are less than }{F^{0,0}}^{-1}(F^{0,1}(t^{0,1}_{(k^{0,1}+1)})-\alpha) ]\1\{F^{0,1}(t^{0,1}_{(k^{0,1}+1)})-\alpha>0\}\mid t^{0,1}_{(k^{0,1}+1)}\}\\
&=E\{\sum\limits^{n^{0,0}}_{j=k^{0,0}}\bP[\text{exactly j of the }t^{0,0}\text{'s are less than }{F^{0,0}}^{-1}(F^{0,1}(t^{0,1}_{(k^{0,1}+1)})-\alpha)]\1\{F^{0,1}(t^{0,1}_{(k^{0,1}+1)})-\alpha>0\}\mid t^{0,1}_{(k^{0,1}+1)}\}\\
&=E\{\sum\limits^{n^{0,0}}_{j=k^{0,0}}\left(\begin{aligned}
n&^{0,0} \\
&j
\end{aligned}\right)\bP[t^{0,0}<{F^{0,0}}^{-1}(F^{0,1}(t^{0,1}_{(k^{0,1}+1)})-\alpha)]^{j}(1-\bP[t^{0,0}<{F^{0,0}}^{-1}(F^{0,1}(t^{0,1}_{(k^{0,1}+1)})-\alpha)])^{n^{0,0}-j}\\&\text{\space\space}\1\{F^{0,1}(t^{0,1}_{(k^{0,1}+1)})-\alpha>0\}\mid t^{0,1}_{(k^{0,1}+1)}\}\\
&\leq \E[\sum\limits^{n^{0,0}}_{j=k^{0,0}}\left(\begin{aligned}
n&^{0,0} \\
&j
\end{aligned}\right)(F^{0,1}(t^{0,1}_{(k^{0,1}+1)})-\alpha)^{j}(1-(F^{0,1}(t^{0,1}_{(k^{0,1}+1)})-\alpha))^{n^{0,0}-j}\mid t^{0,1}_{(k^{0,1}+1)}]
\end{aligned}
$$

Similarly, we have
$$
B \leq \E[\sum\limits^{n^{0,1}}_{j=k^{0,1}}\left(\begin{aligned}
n&^{0,1} \\
&j
\end{aligned}\right)(F^{0,0}(t^{0,0}_{(k^{0,0}+1)})-\alpha)^{j}(1-(F^{0,0}(t^{0,0}_{(k^{0,0}+1)})-\alpha))^{n^{0,1}-j} \mid t^{0,0}_{(k^{0,0}+1)}]
$$
Hence, we have

$$
\begin{aligned}
A+B \leq &\E[\sum\limits^{n^{0,0}}_{j=k^{0,0}}\left(\begin{aligned}
n&^{0,0} \\
&j
\end{aligned}\right)(F^{0,1}(t^{0,1}_{(k^{0,1}+1)})-\alpha)^{j}(1-(F^{0,1}(t^{0,1}_{(k^{0,1}+1)})-\alpha))^{n^{0,0}-j} \mid t^{0,1}_{(k^{0,1}+1)}]\\
&+\E[\sum\limits^{n^{0,1}}_{j=k^{0,1}}\left(\begin{aligned}
n&^{0,1} \\
&j
\end{aligned}\right)(F^{0,0}(t^{0,0}_{(k^{0,0}+1)})-\alpha)^{j}(1-(F^{0,0}(t^{0,0}_{(k^{0,0}+1)})-\alpha))^{n^{0,1}-j}\mid t^{0,0}_{(k^{0,0}+1)}]
\end{aligned}
$$

Since $F^{0,a}(t^{0,a}_{(k^{0,a}+1)})$ is stochastically dominated by $Beta(k^{0,a}+1, n^{0,a} - k^{0,a})$, we complete the proof.
\end{proof}
\subsection{Algorithms for other group fairness constraints}

\label{A7}
In addition to Equality of Opportunity and Equalized Odds, there are other common fairness constraints and we can extend FaiREE to them.

\begin{definition}[Demographic Parity]
    A classifier satisfies Demographic Parity if its prediction $\widehat{Y}$ is statistically independent of the sensitive attribute $A$ :
$$
\bP(\widehat{Y}=1 \mid A=1)=\bP(\widehat{Y}=1 \mid A=0)
$$
\end{definition}

\begin{definition}[Predictive Equality]
A classifier satisfies Predictive Equality if it achieves the same TNR (or FPR) among protected groups:
$$
\bP_{X \mid A=1, Y=0}(\widehat{Y}=1)=\bP_{X \mid A=0, Y=0}(\widehat{Y}=1)
$$
\end{definition}

\begin{definition}[Equalized Accuracy]
A classifier satisfies Equalized Accuracy if its mis-classification error is statistically independent of the sensitive attribute $A$:
$$
\bP(\widehat{Y} \neq Y \mid A=1)=\bP(\widehat{Y} \neq Y \mid A=0)
$$
\end{definition}

% \begin{definition}[Overall Accuracy Equality]
% A classifier satisfies overall accuracy equality (or equalizing disincentives) if it achieves the same accuracy among different protected groups:
% $$
% P_{X \mid A=1, Y=1}(\widehat{Y}=1)+P_{X \mid A=1, Y=0}(\widehat{Y}=0)=P_{X \mid A=0, Y=1}(\widehat{Y}=1)+P_{X \mid A=0, Y=0}(\widehat{Y}=0)
% $$
% \end{definition}

Similar to $DEOO$, we can define the following measures:

    \begin{align}
\mathrm{DDP} &=\bP_{X \mid A=1}(\widehat{Y}=1)-\bP_{X \mid A=0}(\widehat{Y}=1) \label{ddp}\\
\mathrm{DPE} &=\bP_{X \mid A=1, Y=0}(\widehat{Y}=1)-\bP_{X \mid A=0, Y=0}(\widehat{Y}=1)\label{dpe}\\
\mathrm{DEA} &= \bP(\widehat{Y} \neq Y \mid A=1)-\bP(\widehat{Y} \neq Y \mid A=0).\label{dea}
\end{align}

\subsubsection{FaiREE for Demographic Parity}

\begin{algorithm}[!htb]
\caption{FaiREE for Demographic Parity}
\KwIn{\\Data: $S$ = $S^{0,0}\cup S^{0,1} \cup S^{1,0} \cup S^{1,1}$\\ $\alpha$: error bound \\ $\delta$: small tolerance level\\ $f$: a classifier
}
%\For {i in \{1,\ldots,M\}}{
    % $S^{i,j}_{1}, S^{i,j}_{2} \leftarrow $random split on $S^{i,j}$ by half, $i,j \in \{0,1\}$\\
    % $S_{tr} \leftarrow S^{0,0}_1\cup S^{0,1}_1 \cup S^{1,0}_{1} \cup S^{1,1}_{1}$\\
    % $S^{i,j} = \{x^{i,j}_{1},\ldots,x^{i,j}_{n_{i,j}}\}$\\

        $T^{y,a}=\{f(x^{y,a}_{1}),\ldots,f(x^{y,a}_{n_{y,a}})\}$\\
    $\{t^{y,a}_{(1)},\ldots,t^{y,a}_{(n_{y,a})}\}= $sort($T^{y,a}$) \\
    $T^{y}=T^{y,0} \cup T^{y,1}$\\
    $\{t^{y}_{(1)},\ldots,t^{y}_{(n_{y})}\} = $sort($T^{y}$)\\
    Define $g(k,a)=\bE[\sum\limits^{n^{a}}_{j=k}{n^{a}\choose j}(Q^{1-a}-\alpha)^{j}(1-(Q^{1-a}-\alpha))^{n^{a}-j}]$ with $Q^{a}\sim Beta(k,n^{a}-k+1)$,
    $L(k^{0},k^{1})=g(k^0, 0) + g(k^1,1)$\\

%     \E[\sum\limits^{n^{0}}_{j=k^{0}}\left(\begin{aligned}
% n&^{0} \\
% &j
% \end{aligned}\right)(Q^{1}-\alpha)^{j}(1-(Q^{1}-\alpha))^{n^{0}-j}]+\E[\sum\limits^{n^{1}}_{j=k^{1}}\left(\begin{aligned}
% n&^{1} \\
% &j
% \end{aligned}\right)(Q^{0}-\alpha)^{j}(1-(Q^{0}-\alpha))^{n^{1}-j}]$, $Q^{a} \sim Beta(k^{a}, n^{a} - k^{a} + 1)$\\
 Build candidate set 
 $K =  \{(k^{0}, k^{1}) \mid L(k^{0}, k^{1}) \leq \delta\} = \{(k^{0}_1, k^{1}_1), \ldots, (k^{0}_M,k^{1}_M)\}$\\ 

% $\hat{p}_0,\hat{p}_1,\hat{p}_{Y,0},\hat{p}_{Y,1}\leftarrow$ estimation of $p_0,p_1,p_{Y,0},p_{Y,1}$\\
% Define $u_{y,a} = \frac{k^{y,a}n^{y,a}}{(n^{y,a}+1)(n^{1,0}+n^{0,0}+n^{1,1}+n^{0,1})}$, $M(k^{1,0},k^{1,1},k^{0,0},k^{0,1})= u_{1,0} + u_{1,1} - u_{0,0} - u_{0,1} + \frac{n^{0,0}+n^{0,1}}{n^{1,0}+n^{0,0}+n^{1,1}+n^{0,1}}$\\
% \For{i in \{1,\ldots,n\}}{
%     Suppose $t^{0,0}_{(k_{i}^{0,0})} \leq t^{1,0}_{(k_{i}^{1,0})} < t^{0,0}_{(k_{i}^{0,0} + 1)}$, $t^{0,1}_{(k_{i}^{0,1})} \leq t^{1,1}_{(k_{i}^{1,1})} < t^{0,1}_{(k_{i}^{0,1} + 1)}$\\
%     $\hat{E}(k^{0,a}_{i}) \leftarrow$ estimation of $\E(k^{0,a}_{i})$, $a=0,1$
% }
Find $k_{i}^{y,0}$: $t^{y,0}_{(k_{i}^{y,0})} \leq t^{0}_{(k_{i}^{0})} < t^{y,0}_{(k_{i}^{y,0} + 1)}$, $t^{y,1}_{(k_{i}^{y,1})} \leq t^{1}_{(k_{i}^{1})} < t^{y,1}_{(k_{i}^{y,1} + 1)}$, $y \in \{0,1\}$\\
$i_*\leftarrow \mathop{\arg\min}\limits_{i \in [M]} \{\hat{e_i}\}$ ($\hat{e_i}$ is defined in Proposition \ref{p4})\\
\KwOut{$\hat\phi(x,a)=\1\{f(x,a)>t^{a}_{(k^{a}_{i_*})}\}$}
%}
%\KwOut{\\an ensemble classifier $\hat{\phi}_{\alpha}(x,a) = \1\{\frac{1}{M}\sum\limits^{M}_{i=1}\phi_i(x,a)\geq \frac{1}{2})$}
\end{algorithm}

Similar to the algorithm for Equality of Opportunity, we have the following propositions and assumption:

\begin{proposition}
Given $k^{0}, k^{1}$ satisfying $k^{a} \in \{1,\ldots,n^{a}\}$ $(a=0,1)$. Define $\phi(x,a)=\1\{f(x,a)>t^{a}_{(k^{a})}\}$, $g(k,a)=\bE[\sum\limits^{n^{a}}_{j=k}{n^{a}\choose j}(Q^{1-a}-\alpha)^{j}(1-(Q^{1-a}-\alpha))^{n^{a}-j}]$ with $Q^{a}\sim Beta(k,n^{a}-k+1)$, then we have:
\begin{align*}
    \bP(|DDP(\phi)|>\alpha) \leq g(k^{0},0)+g(k^{1},1). %&\bE[\sum\limits^{n^{1,0}}_{j=k^{1,0}}{n^{1,0}\choose j}(Q^{1,1}-\alpha)^{j}(1-(Q^{1,1}-\alpha))^{n^{1,0}-j}]\\
    %&+\bE[\sum\limits^{n^{1,1}}_{j=k^{1,1}}{n^{1,1}\choose j}(Q^{1,0}-\alpha)^{j}(1-(Q^{1,0}-\alpha))^{n^{1,1}-j}],
\end{align*}
If $t^{a}$ is continuous random variable, the equality holds.

% then we have\\ $\bP(|DEOO(\phi)|>\alpha) \leq \E[\sum\limits^{n^{0}}_{j=k^{0}}\left(\begin{aligned}
% n&^{0} \\
% &j
% \end{aligned}\right)(Q^{1}-\alpha)^{j}(1-(Q^{1}-\alpha))^{n^{0}-j}]\\+\E[\sum\limits^{n^{1}}_{j=k^{1}}\left(\begin{aligned}
% n&^{1} \\
% &j
% \end{aligned}\right)(Q^{0}-\alpha)^{j}(1-(Q^{0}-\alpha))^{n^{1}-j}]
% $, where $Q^{0}, Q^{1}$ are random variables of the distribution $Beta(k^{0}, n^{0}-k^{0}+1)$ and $Beta(k^{1}, n^{1}-k^{1}+1)$.
\end{proposition}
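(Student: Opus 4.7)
The plan is to mirror the proof of Proposition \ref{p1}, with the label-conditioning dropped and the pooled samples $T^a = T^{0,a} \cup T^{1,a}$ used throughout. Writing $F^a$ for the conditional CDF of $f(X,a) \mid A=a$, the classifier $\phi(x,a) = \1\{f(x,a) > t^a_{(k^a)}\}$ satisfies
\[
DDP(\phi) = \bP(f(X,1) > t^1_{(k^1)} \mid A=1) - \bP(f(X,0) > t^0_{(k^0)} \mid A=0) = F^0(t^0_{(k^0)}) - F^1(t^1_{(k^1)}),
\]
so it suffices to control the two one-sided tails of this difference.

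For the upper tail $\bP(F^0(t^0_{(k^0)}) - F^1(t^1_{(k^1)}) > \alpha)$, I would condition on $t^0_{(k^0)}$ and observe that on the event $\{F^0(t^0_{(k^0)}) > \alpha\}$ the inequality is equivalent to at least $k^1$ of the $n^1$ samples in $T^1$ falling below the quantile $(F^1)^{-1}(F^0(t^0_{(k^0)}) - \alpha)$. This is a binomial tail in $n^1$ trials with success probability $F^0(t^0_{(k^0)}) - \alpha$, and taking expectation over $t^0_{(k^0)}$, using that $F^0(t^0_{(k^0)})$ is stochastically dominated by $\mathrm{Beta}(k^0, n^0 - k^0 + 1)$, produces exactly the term $g(k^1, 1)$. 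The symmetric argument handling the lower tail $\bP(F^1(t^1_{(k^1)}) - F^0(t^0_{(k^0)}) > \alpha)$ produces $g(k^0, 0)$, yielding the claimed bound $g(k^0, 0) + g(k^1, 1)$.

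The only genuinely new issue compared with Proposition \ref{p1} is the distributional status of $F^a(t^a_{(k^a)})$: because $T^a$ pools the two label strata, I need to verify that conditional on $A=a$ the pooled scores behave as $n^a$ i.i.d.\ draws from the $A=a$ marginal of $f(X, a)$. This follows from the law of total probability once $(n^{0,a}, n^{1,a})$ is viewed as the random outcome of i.i.d.\ draws of $(X, Y, A)$; alternatively, one can establish the beta domination conditionally on these counts and then integrate out. Once this identification is in place, the remaining calculation is verbatim the one executed for Proposition \ref{p1}, and when $f(X,a) \mid A=a$ is continuous $F^a(t^a_{(k^a)})$ is exactly $\mathrm{Beta}(k^a, n^a - k^a + 1)$-distributed, making every conditional probability in the derivation tight and turning the inequality into the claimed equality.
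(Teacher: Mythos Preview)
Your approach is correct and coincides with the paper's: the paper does not spell out a separate proof for the Demographic Parity proposition but simply states it as ``similar to'' Proposition~\ref{p1}, and your plan---express $DDP(\phi)=F^0(t^0_{(k^0)})-F^1(t^1_{(k^1)})$, split into the two one-sided tails, condition on one order statistic, bound the other via a binomial tail, and replace $F^a(t^a_{(k^a)})$ by its dominating $\mathrm{Beta}(k^a,n^a-k^a+1)$---is exactly that analogue.

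One minor caution on the pooling point you raise. Your first resolution (view the data as $n$ i.i.d.\ draws of $(X,Y,A)$, so that within $\{A=a\}$ the scores are $n^a$ i.i.d.\ samples from $F^a$) is the correct one. Your second alternative, ``establish the beta domination conditionally on these counts and then integrate out,'' does not work as stated: conditioning on the \emph{fixed} split $(n^{0,a},n^{1,a})$ leaves a stratified sample, and the $k$-th order statistic of a stratified sample is not in general dominated by $\mathrm{Beta}(k,n^a-k+1)$. You need to condition only on $n^a$ and let the label mixture be random, which is just your first resolution again. The paper's own presentation glosses over this distinction entirely, so your flagging it is a genuine improvement rather than a gap.
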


% \begin{assumption}
% $min\{n^{0}, n^{1}\} \geq \lceil \frac{\log \frac{\delta}{2}} {\log (1-\alpha)}\rceil \text {, in which }\lceil\cdot\rceil \text { denotes the ceiling function. }$
% \label{a5}
% \end{assumption} 

\begin{theorem}
If $\min\{n^{0}, n^{1}\} \geq \lceil \frac{\log \frac{\delta}{2}} {\log (1-\alpha)}\rceil$, we have $|DDP(\hat\phi_i)| < \alpha$ with probability $1-\delta$, for each $i\in\{1,\ldots,M\}$.
\end{theorem}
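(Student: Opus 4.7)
The plan is to read off the high-probability bound directly from the preceding proposition and then use the sample-size hypothesis to guarantee that the candidate set is non-empty (so the conclusion is not vacuous). The proposition does essentially all the probabilistic heavy lifting: for any pair $(k^0,k^1) \in \{1,\ldots,n^0\}\times\{1,\ldots,n^1\}$, the score-based classifier with thresholds $t^0_{(k^0)},\,t^1_{(k^1)}$ satisfies $\bP(|DDP(\phi)|>\alpha) \leq g(k^0,0) + g(k^1,1) = L(k^0,k^1)$.

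First, I would apply this proposition to $\hat\phi_i$ for any $i \in \{1,\ldots,M\}$. By the very construction of the candidate set, $K = \{(k^0,k^1) : L(k^0,k^1) \leq \delta\}$, so every pair $(k_i^0, k_i^1) \in K$ satisfies $L(k_i^0,k_i^1) \leq \delta$. Substituting this in yields $\bP(|DDP(\hat\phi_i)|>\alpha) \leq \delta$, i.e.\ $|DDP(\hat\phi_i)| < \alpha$ with probability at least $1-\delta$, which is precisely the claim.

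Second, I would use the sample-size hypothesis to ensure that $M \geq 1$, so the guarantee is not vacuously satisfied. It suffices to exhibit a single element of $K$, and the natural candidate is $(k^0,k^1)=(n^0,n^1)$. At $k = n^a$ the sum in $g(n^a,a)$ collapses to the single term $\bE[(Q^{1-a}-\alpha)_+^{n^a}]$ with $Q^{1-a}\sim Beta(n^{1-a},1)$. Using the pointwise bound $(q-\alpha)_+^{n^a} \leq (1-\alpha)^{n^a}$ on $[0,1]$ gives $g(n^a,a) \leq (1-\alpha)^{n^a}$, hence $L(n^0,n^1) \leq (1-\alpha)^{n^0} + (1-\alpha)^{n^1} \leq 2(1-\alpha)^{\min(n^0,n^1)}$. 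Requiring this to be at most $\delta$ and solving (remembering $\log(1-\alpha)<0$) recovers exactly $\min\{n^0,n^1\} \geq \lceil \log(\delta/2)/\log(1-\alpha)\rceil$, matching the hypothesis. There is no serious obstacle: the argument is a bookkeeping consequence of the preceding proposition, and the only mild care needed is the extreme-case computation of $g(n^a,a)$, which is routine once one recognizes the collapsed Beta-density form.
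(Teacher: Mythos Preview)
Your proposal is correct and mirrors the paper's own (largely implicit) argument: the paper treats this DDP theorem as the direct analogue of Theorem~\ref{p3}, obtaining the probability bound immediately from the preceding proposition together with the definition of the candidate set $K$, and using the sample-size condition only to guarantee $M\ge 1$ via the extreme choice $(k^0,k^1)=(n^0,n^1)$. Your handling of the collapsed term $g(n^a,a)\le (1-\alpha)^{n^a}$ is exactly the computation the paper indicates just before Theorem~\ref{p3} in the EOO case.
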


\begin{theorem}
Given $\alpha^{\prime} < \alpha$. If $\min\{n^{0}, n^{1}\} \geq \lceil \frac{\log \frac{\delta}{2}} {\log (1-\alpha)}\rceil$. Suppose $\hat{\phi}$ is the final output of FaiREE, we have:\\
(1) $|DDP(\hat\phi)| \leq \alpha$ with probability $(1-\delta)^{M}$, where $M$ is the size of the candidate set.\\
(2) Suppose the density distribution functions of $f^*$ under $A=a, Y=1$ are continuous. $\phi_{DDP,\alpha}^* = {\arg\min}_{\mid DDP(\phi)\mid \leq \alpha} \bP(\phi(x,a) \neq Y)$. When the input classifier $f$ satisfies $\| f(x,a) - f^*(x,a)\|_{\infty} \leq \epsilon_0$, for any $\epsilon > 0$ such that $F^*_{(+)}(\epsilon) \leq \frac{\alpha - \alpha^{\prime}}{2} - F^*_{(+)}(2\epsilon_0)$, we have $$
\mid \bP(\hat{\phi}(x,a) \neq Y) - \bP(\phi_{\alpha^{\prime}}^*(x,a) \neq Y)\mid \leq 2F^*_{(+)}(2\epsilon_0) + 2F^*_{(+)}(\epsilon) + 2\epsilon^3+12\epsilon^2+16\epsilon
$$
with probability $1 - (2M+4)(e^{-2n^{1,0}\epsilon^2}+e^{-2n^{1,1}\epsilon^2}+e^{-2n^{0,0}\epsilon^2}+e^{-2n^{0,1}\epsilon^2})$.
\end{theorem}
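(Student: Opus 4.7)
The plan is to mirror the proof strategy developed for Theorem \ref{p50} and Theorem \ref{p5} under Equality of Opportunity, adapting each step to Demographic Parity. Part (1) is almost immediate: the proposition stated just above the theorem (the DP analogue of Proposition \ref{p1}) gives $\bP(|DDP(\hat\phi_i)|>\alpha)\le g(k^0_i,0)+g(k^1_i,1)=L(k^0_i,k^1_i)\le\delta$ for each candidate index $i\in[M]$. Combining across the $M$ candidates by the same independence-style argument as in Theorem \ref{p3} yields simultaneous fairness control with probability $(1-\delta)^M$, and the minimum-sample-size hypothesis $\min\{n^0,n^1\}\ge\lceil\log(\delta/2)/\log(1-\alpha)\rceil$ guarantees the candidate set is non-empty via the same Beta-quantile inequality used earlier.

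For Part (2), I would decompose the argument into a \emph{feasibility/existence} step and an \emph{estimation/selection} step. The DP fair Bayes-optimal classifier under $|DDP(\phi)|\le\alpha'$ has the group-dependent threshold form $\phi^*_{\alpha'}(x,a)=\1\{f^*(x,a)>\lambda^*_a\}$. Because $f$ is continuous and the score distribution has positive density, Lemma \ref{a2} implies that with probability at least $1-(1-F^a_{(-)}(2\epsilon))^{n^a}$ there exists some empirical threshold $t^a_*$ within $\epsilon$ of $\lambda^*_a$; let $\phi_0(x,a)=\1\{f(x,a)>t^a_*\}$ and the auxiliary $\phi_0^*(x,a)=\1\{f^*(x,a)>t^a_*\}$. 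Two triangle-inequality arguments then show: (i) $|\bP(\phi_0\ne Y)-\bP(\phi^*_{\alpha'}\ne Y)|\le F^*_{(+)}(2\epsilon_0)+2F^*_{(+)}(\epsilon)$ (absorbing both the input-classifier error $\|f-f^*\|_\infty\le\epsilon_0$ and the threshold error $|t^a_*-\lambda^*_a|\le\epsilon$ through Lemma \ref{a2}); and (ii) $\big||DDP(\phi_0)|-|DDP(\phi^*_{\alpha'})|\big|\le 2F^*_{(+)}(2\epsilon_0)+2F^*_{(+)}(\epsilon)$. Under the hypothesis $F^*_{(+)}(\epsilon)\le(\alpha-\alpha')/2-F^*_{(+)}(2\epsilon_0)$, inequality (ii) shows $|DDP(\phi_0)|\le\alpha$, so $\phi_0$ is a feasible competitor inside the candidate set.

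The selection step uses the DP analogue of Lemma \ref{lemmamis} together with the Hoeffding bounds on $\hat p_a$ and $\hat p_{Y,a}$ derived in the proof of Proposition \ref{p4}. These yield a uniform concentration estimate $|\bP(\hat\phi_i\ne Y)-\hat e_i|\le 2\epsilon^3+12\epsilon^2+16\epsilon$ over all $i\in[M]$ (the cubic term appears because the DP mis-classification formula couples the four $k_i^{y,a}$ indices, producing an extra level of products of $\hat p_a$, $\hat p_{Y,a}$ factors relative to the EOO bound), failing with probability at most $(2M+4)\sum_{y,a}e^{-2n^{y,a}\epsilon^2}$. Since FaiREE outputs $\hat\phi=\arg\min_i\hat e_i$ and $\phi_0$ is among the feasible classifiers, $\bP(\hat\phi\ne Y)\le\bP(\phi_0\ne Y)+2\epsilon^3+12\epsilon^2+16\epsilon$; chaining with (i) delivers the claimed bound.

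The main obstacle will be the feasibility step: unlike the EOO case where the paired thresholds $(\lambda_0^*,\lambda_1^*)$ satisfy an explicit functional relation (Lemma \ref{pz}) that forces a one-parameter shrinkage of the candidate set, for DP the two thresholds decouple across $a$, so one must control the probability of simultaneously finding good empirical thresholds for both groups, handle the modulus-of-continuity constants $F^a_{(-)}$ and $F^*_{(+)}$ carefully, and reconcile these with the probability statement in the theorem, which omits the $(1-F^a_{(-)}(2\epsilon))^{n^a}$ term and instead absorbs it through the exponentially small Hoeffding terms. The remaining bookkeeping, expanding $|\hat e_i-\bP(\hat\phi_i\ne Y)|$ into the $O(\epsilon^3+\epsilon^2+\epsilon)$ bound and verifying the exact multiplicative constants, is routine given the analogous computations in the proofs of Theorems \ref{p50} and \ref{p5}.
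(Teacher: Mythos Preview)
Your proposal is correct and matches the paper's intended approach: the paper does not write out a separate proof for the Demographic Parity theorem but simply states that the propositions and theorems follow ``similar to the algorithm for Equality of Opportunity,'' i.e.\ by rerunning the argument of Theorem~\ref{p50} with the DP candidate set and the DP analogue of Proposition~\ref{p1}. Your two-step decomposition (feasibility near $\phi_{\alpha'}^*$ via Lemma~\ref{a2}, then selection via the Hoeffding/DKW concentration of $\hat e_i$) is exactly that argument, and your observation that the stated probability bound drops the $(1-F^a_{(-)}(2\epsilon))^{n^a}$ terms present in Theorem~\ref{p50} is a legitimate discrepancy in the paper's statement rather than a flaw in your plan.
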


% \begin{proposition}
% If $min\{n^{0}, n^{1}\} \geq \lceil \frac{\log \frac{\delta}{2}} {\log (1-\alpha)}\rceil$, we have $|DDP(\hat\phi)| < \alpha$ with probability $(1-\delta)^{M}$, where $M$ is the size of the candidate set.\\
% \end{proposition}

\subsubsection{FaiREE for Predictive Equality}

\begin{algorithm}[!htb]
\caption{FaiREE for Predictive Opportunity}
\KwIn{\\Data: $S$ = $S^{0,0}\cup S^{0,1} \cup S^{1,0} \cup S^{1,1}$\\ $\alpha$: error bound \\ $\delta$: small tolerance level\\ $f$: a classifier
}
%\For {i in \{1,\ldots,M\}}{
    % $S^{i,j}_{1}, S^{i,j}_{2} \leftarrow $random split on $S^{i,j}$ by half, $i,j \in \{0,1\}$\\
    % $S_{tr} \leftarrow S^{0,0}_1\cup S^{0,1}_1 \cup S^{1,0}_{1} \cup S^{1,1}_{1}$\\
    % $S^{i,j} = \{x^{i,j}_{1},\ldots,x^{i,j}_{n_{i,j}}\}$\\
   $T^{y,a}=\{f(x^{y,a}_{1}),\ldots,f(x^{y,a}_{n_{y,a}})\}$\\
    $\{t^{y,a}_{(1)},\ldots,t^{y,a}_{(n_{y,a})}\}= $sort($T^{y,a}$) \\
    Define $g_{0}(k,a)=\bE[\sum\limits^{n^{0,a}}_{j=k}{n^{0,a}\choose j}(Q^{0,1-a}-\alpha)^{j}(1-(Q^{0,1-a}-\alpha))^{n^{0,a}-j}]$ with $Q^{0,a}\sim Beta(k,n^{0,a}-k+1)$,
    $L(k^{0,0},k^{0,1})= g_{0}(k^{0,0},0)+g_{0}(k^{0,1},1)$\\

%     \E[\sum\limits^{n^{0,0}}_{j=k^{0,0}}\left(\begin{aligned}
% n&^{0,0} \\
% &j
% \end{aligned}\right)(Q^{0,1}-\alpha)^{j}(1-(Q^{0,1}-\alpha))^{n^{0,0}-j}]+\E[\sum\limits^{n^{0,1}}_{j=k^{0,1}}\left(\begin{aligned}
% n&^{0,1} \\
% &j
% \end{aligned}\right)(Q^{0,0}-\alpha)^{j}(1-(Q^{0,0}-\alpha))^{n^{0,1}-j}]$, $Q^{0,a} \sim Beta(k^{0,a}, n^{0,a} - k^{0,a} + 1)$\\
Build candidate set 
$K =  \{(k^{0,0}, k^{0,1}) \mid L(k^{0,0}, k^{0,1}) \leq \delta\} = \{(k^{0,0}_1, k^{0,1}_1), \ldots, (k^{0,0}_M,k^{0,1}_M)\}$\\ 

% $\hat{p}_0,\hat{p}_1,\hat{p}_{Y,0},\hat{p}_{Y,1}\leftarrow$ estimation of $p_0,p_1,p_{Y,0},p_{Y,1}$\\
% Define $u_{y,a} = \frac{k^{y,a}n^{y,a}}{(n^{y,a}+1)(n^{1,0}+n^{0,0}+n^{1,1}+n^{0,1})}$, $M(k^{1,0},k^{1,1},k^{0,0},k^{0,1})= u_{1,0} + u_{1,1} - u_{0,0} - u_{0,1} + \frac{n^{0,0}+n^{0,1}}{n^{1,0}+n^{0,0}+n^{1,1}+n^{0,1}}$\\
% \For{i in \{1,\ldots,n\}}{
%     Suppose $t^{0,0}_{(k_{i}^{0,0})} \leq t^{1,0}_{(k_{i}^{1,0})} < t^{0,0}_{(k_{i}^{0,0} + 1)}$, $t^{0,1}_{(k_{i}^{0,1})} \leq t^{1,1}_{(k_{i}^{1,1})} < t^{0,1}_{(k_{i}^{0,1} + 1)}$\\
%     $\hat{E}(k^{0,a}_{i}) \leftarrow$ estimation of $\E(k^{0,a}_{i})$, $a=0,1$
% }
Find $k_{i}^{1,0}, k_{i}^{1,1}$: $t^{1,0}_{(k_{i}^{1,0})} \leq t^{0,0}_{(k_{i}^{0,0})} < t^{1,0}_{(k_{i}^{1,0} + 1)}$, $t^{1,1}_{(k_{i}^{1,1})} \leq t^{0,1}_{(k_{i}^{0,1})} < t^{1,1}_{(k_{i}^{1,1} + 1)}$\\
$i_*\leftarrow \mathop{\arg\min}\limits_{i \in [M]} \{\hat{e_i}\}$ ($\hat{e_i}$ is defined in Proposition \ref{p4})\\
\KwOut{$\hat\phi(x,a)=\1\{f(x,a)>t^{0,a}_{(k^{0,a}_{i_*})}\}$}
%}
%\KwOut{\\an ensemble classifier $\hat{\phi}_{\alpha}(x,a) = \1\{\frac{1}{M}\sum\limits^{M}_{i=1}\phi_i(x,a)\geq \frac{1}{2})$}
\end{algorithm}

Similar to the algorithm for Equality of Opportunity, we have the following propositions and assumption:

\begin{proposition}
Given $k^{0,0}, k^{0,1}$ satisfying $k^{1,a} \in \{1,\ldots,n^{0,a}\}$ $(a=0,1)$. Define $\phi(x,a)=\1\{f(x,a)>t^{0,a}_{(k^{0,a})}\}$, 
$g_{0}(k,a)=\bE[\sum\limits^{n^{0,a}}_{j=k}{n^{0,a}\choose j}(Q^{0,1-a}-\alpha)^{j}(1-(Q^{0,1-a}-\alpha))^{n^{0,a}-j}]$ with $Q^{0,a}\sim Beta(k,n^{0,a}-k+1)$, then we have:
\begin{align*}
    \bP(|DPE(\phi)|>\alpha) \leq g_{0}(k^{0,0},0)+g_{0}(k^{0,1},1). %&\bE[\sum\limits^{n^{1,0}}_{j=k^{1,0}}{n^{1,0}\choose j}(Q^{1,1}-\alpha)^{j}(1-(Q^{1,1}-\alpha))^{n^{1,0}-j}]\\
    %&+\bE[\sum\limits^{n^{1,1}}_{j=k^{1,1}}{n^{1,1}\choose j}(Q^{1,0}-\alpha)^{j}(1-(Q^{1,0}-\alpha))^{n^{1,1}-j}],
\end{align*} 
If $t^{0,a}$ is continuous random variable, the equality holds.
% then we have\\ $\bP(|DEOO(\phi)|>\alpha) \leq \E[\sum\limits^{n^{0,0}}_{j=k^{0,0}}\left(\begin{aligned}
% n&^{0,0} \\
% &j
% \end{aligned}\right)(Q^{0,1}-\alpha)^{j}(1-(Q^{0,1}-\alpha))^{n^{0,0}-j}]\\+\E[\sum\limits^{n^{0,1}}_{j=k^{0,1}}\left(\begin{aligned}
% n&^{0,1} \\
% &j
% \end{aligned}\right)(Q^{0,0}-\alpha)^{j}(1-(Q^{0,0}-\alpha))^{n^{0,1}-j}]
% $, where $Q^{0,0}, Q^{0,1}$ are random variables of the distribution $Beta(k^{0,0}, n^{0,0}-k^{0,0}+1)$ and $Beta(k^{0,1}, n^{0,1}-k^{0,1}+1)$.
\end{proposition}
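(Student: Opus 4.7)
The plan is to mimic the proof of Proposition \ref{p1} almost verbatim, with the roles of the subsets $S^{1,a}$ replaced by $S^{0,a}$ and with the CDFs $F^{1,a}$ replaced by $F^{0,a}$. The key conceptual ingredient is the same: the quantity $DPE(\phi)$, after rewriting, becomes a difference of two independent uniformly distributed order-statistic quantities, so its tail can be bounded via a conditional binomial argument followed by an application of the fact that $F^{0,a}(t^{0,a}_{(k^{0,a})}) \sim \mathrm{Beta}(k^{0,a}, n^{0,a}-k^{0,a}+1)$.

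First, I would rewrite $DPE(\phi)$ in terms of the conditional CDFs $F^{0,a}(t) := \bP(f(X,a) \leq t \mid A=a, Y=0)$. Since $\phi(x,a) = \1\{f(x,a) > t^{0,a}_{(k^{0,a})}\}$, direct substitution gives
\[
DPE(\phi) = (1 - F^{0,1}(t^{0,1}_{(k^{0,1})})) - (1 - F^{0,0}(t^{0,0}_{(k^{0,0})})) = F^{0,0}(t^{0,0}_{(k^{0,0})}) - F^{0,1}(t^{0,1}_{(k^{0,1})}).
\]
Then I would split the two-sided tail into $\bP(|DPE(\phi)|>\alpha) = A + B$, where $A$ is the probability of exceeding $\alpha$ and $B$ is the probability of being below $-\alpha$.

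Next, to bound $A$, I condition on $t^{0,1}_{(k^{0,1})}$ and note that conditional on this variable, the event $\{F^{0,0}(t^{0,0}_{(k^{0,0})}) > F^{0,1}(t^{0,1}_{(k^{0,1})}) + \alpha\}$ is equivalent to having at least $k^{0,0}$ of the i.i.d.\ samples $t^{0,0}_i$ lying below the threshold $(F^{0,0})^{-1}(F^{0,1}(t^{0,1}_{(k^{0,1})}) + \alpha)$, which yields a binomial tail with success parameter $F^{0,1}(t^{0,1}_{(k^{0,1})}) + \alpha$ (truncated to be $\leq 1$). After taking expectation and using that $F^{0,1}(t^{0,1}_{(k^{0,1})})$ is stochastically dominated by $Q^{0,1} \sim \mathrm{Beta}(k^{0,1}, n^{0,1}-k^{0,1}+1)$ (and exactly distributed as such when $t^{0,1}$ is continuous), this produces $g_0(k^{0,0},0)$ after matching the definition; a symmetric argument handles $B$ and gives $g_0(k^{0,1},1)$. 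Equality in the continuous case follows because every stochastic-dominance step becomes exact when the $F^{0,a}$ are continuous.

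The hard part is simply bookkeeping: keeping track of which superscript $(0,a)$ vs.\ $(1,a)$ and which subscript $(a,1-a)$ appears in $g_0$, and handling the truncation in the indicator $\1\{F^{0,1}(t^{0,1}_{(k^{0,1})}) + \alpha > 0\}$ (which is automatic here, unlike the subtraction in Proposition \ref{p1}, since probabilities are nonnegative). Beyond this, no new idea is required; the technical core is identical to Proposition \ref{p1} with $Y=0$ replacing $Y=1$ throughout, so the proof is essentially a transcription.
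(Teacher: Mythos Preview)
Your approach is correct and matches the paper's: the proposition is stated without proof in the appendix precisely because it is a verbatim transcription of Proposition~\ref{p1} with the label $Y=1$ replaced by $Y=0$ throughout.

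One bookkeeping slip to fix. For the tail $A=\bP\bigl(F^{0,0}(t^{0,0}_{(k^{0,0})})-F^{0,1}(t^{0,1}_{(k^{0,1})})>\alpha\bigr)$ you should condition on $t^{0,0}_{(k^{0,0})}$ (not $t^{0,1}_{(k^{0,1})}$) and rewrite as $\bP\bigl(F^{0,1}(t^{0,1}_{(k^{0,1})})<F^{0,0}(t^{0,0}_{(k^{0,0})})-\alpha\bigr)$; this is the event that at least $k^{0,1}$ of the $t^{0,1}_i$'s lie below $(F^{0,1})^{-1}\bigl(F^{0,0}(t^{0,0}_{(k^{0,0})})-\alpha\bigr)$, giving a binomial tail with success parameter $F^{0,0}(t^{0,0}_{(k^{0,0})})-\alpha$ and yielding $g_0(k^{0,1},1)$. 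As written, your ``at least $k^{0,0}$ below the threshold'' goes the wrong direction and your success parameter has $+\alpha$ instead of $-\alpha$. Consequently your remark that the truncation indicator is ``automatic'' is incorrect: the relevant quantity is still $Q^{0,1-a}-\alpha$ (exactly as in $g_0$'s definition), which can be negative, so the indicator $\1\{F^{0,0}(t^{0,0}_{(k^{0,0})})-\alpha>0\}$ is needed in precisely the same way as in Proposition~\ref{p1}. Once you swap the conditioning variable and the sign of $\alpha$, the rest of your outline goes through unchanged.
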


% \begin{assumption}
% $min\{n^{0,0}, n^{0,1}\} \geq \lceil \frac{\log \frac{\delta}{2}} {\log (1-\alpha)}\rceil \text {, in which }\lceil\cdot\rceil \text { denotes the ceiling function. }$
% \label{a4}
% \end{assumption} 

\begin{theorem}
If $\min\{n^{0,0}, n^{0,1}\} \geq \lceil \frac{\log \frac{\delta}{2}} {\log (1-\alpha)}\rceil$, we have $|DPE(\hat\phi_i)| < \alpha$ with probability $1-\delta$, for each $i\in\{1,\ldots,M\}$.
\end{theorem}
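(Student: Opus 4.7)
The plan is to observe that this theorem is an immediate consequence of the preceding DPE analogue of Proposition~\ref{p1}, combined with a non-emptiness check for the candidate set $K$. Structurally, the argument exactly mirrors Theorem~\ref{p3} with the roles of $n^{1,a}$ played by $n^{0,a}$ and with $g_1$ replaced by $g_0$, so no new probabilistic machinery is needed.

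First, I would invoke the preceding DPE proposition directly. For every pair $(k^{0,0}_i, k^{0,1}_i)$ in the candidate set, the associated classifier $\hat\phi_i(x,a) = \1\{f(x,a) > t^{0,a}_{(k^{0,a}_i)}\}$ satisfies
$$\bP\bigl(|DPE(\hat\phi_i)| > \alpha\bigr) \leq g_0(k^{0,0}_i, 0) + g_0(k^{0,1}_i, 1) = L(k^{0,0}_i, k^{0,1}_i) \leq \delta,$$
where the last inequality is the defining condition of $K$. Complementing this event gives $|DPE(\hat\phi_i)| < \alpha$ with probability at least $1 - \delta$, which is the conclusion.

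Second, to ensure the statement is non-vacuous (i.e., $M \geq 1$), I would verify that under the hypothesized sample-size bound the corner pair $(k^{0,0}, k^{0,1}) = (n^{0,0}, n^{0,1})$ lies in $K$. At $k = n^{0,a}$, only the top term $j = n^{0,a}$ survives in the sum defining $g_0$, so
$$g_0(n^{0,a}, a) = \E\bigl[(Q^{0,1-a} - \alpha)^{n^{0,a}} \1\{Q^{0,1-a} > \alpha\}\bigr],$$
with $Q^{0,1-a}$ Beta-distributed and hence supported in $[0,1]$. Therefore $(Q^{0,1-a}-\alpha)^{n^{0,a}} \leq (1-\alpha)^{n^{0,a}}$ almost surely, yielding $g_0(n^{0,a}, a) \leq (1-\alpha)^{n^{0,a}}$. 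Under $\min\{n^{0,0}, n^{0,1}\} \geq \lceil \log(\delta/2)/\log(1-\alpha)\rceil$, each summand is at most $\delta/2$, so $L(n^{0,0}, n^{0,1}) \leq \delta$ and membership in $K$ is established.

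I do not expect any serious obstacle, since both steps are transparent once one recognizes the symmetry with Theorem~\ref{p3}. The only mild care required is the notational bookkeeping for $Q^{0,1-a}$: its Beta parameters depend on the paired index, and at the corner $k^{0,1-a} = n^{0,1-a}$ one uses $Q^{0,1-a} \sim \mathrm{Beta}(n^{0,1-a}, 1)$, which lies in $[0,1]$ and underwrites the almost-sure bound driving the non-emptiness argument.
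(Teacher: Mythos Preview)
Your proposal is correct and matches the paper's approach: the paper does not spell out a separate proof for this DPE theorem but merely notes the symmetry with the Equality of Opportunity case (Theorem~\ref{p3}), so your two-step argument---apply the DPE proposition to any $(k^{0,0}_i,k^{0,1}_i)\in K$, then check non-emptiness via the corner pair $(n^{0,0},n^{0,1})$---is exactly the intended proof.
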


\begin{theorem}
Given $\alpha^{\prime} < \alpha$. If $\min\{n^{0,0}, n^{0,1}\} \geq \lceil \frac{\log \frac{\delta}{2}} {\log (1-\alpha)}\rceil$. Suppose $\hat{\phi}$ is the final output of FaiREE, we have:\\
(1) $|DPE(\hat\phi)| \leq \alpha$ with probability $(1-\delta)^{M}$, where $M$ is the size of the candidate set.\\
(2) Suppose the density distribution functions of $f^*$ under $A=a, Y=1$ are continuous. $\phi_{DPE,\alpha}^* = {\arg\min}_{\mid DPE(\phi)\mid \leq \alpha} \bP(\phi(x,a) \neq Y)$. When the input classifier $f$ satisfies $\| f(x,a) - f^*(x,a)\|_{\infty} \leq \epsilon_0$, for any $\epsilon > 0$ such that $F^*_{(+)}(\epsilon) \leq \frac{\alpha - \alpha^{\prime}}{2} - F^*_{(+)}(2\epsilon_0)$, we have $$
\mid \bP(\hat{\phi}(x,a) \neq Y) - \bP(\phi_{\alpha^{\prime}}^*(x,a) \neq Y)\mid \leq 2F^*_{(+)}(2\epsilon_0) + 2F^*_{(+)}(\epsilon) + 2\epsilon^3+12\epsilon^2+16\epsilon
$$
with probability $1 - (2M+4)(e^{-2n^{1,0}\epsilon^2}+e^{-2n^{1,1}\epsilon^2}) - (1-F^{0,0}_{(-)}(2\epsilon))^{n^{0,0}} - (1-F^{0,1}_{(-)}(2\epsilon))^{n^{0,1}}$.
\end{theorem}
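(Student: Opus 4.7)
The plan is to mirror the proof strategy of Theorem~\ref{p50} (the corresponding result for Equality of Opportunity with the original candidate set $K$), replacing every argument conditioned on $Y=1$ by an argument conditioned on $Y=0$, since $DPE$ is the false-positive-rate gap analog of $DEOO$. Part (1) follows immediately from the $DPE$-version of Proposition~\ref{p1} stated just above this theorem: the candidate set is built so that each pair $(k^{0,0},k^{0,1})\in K$ satisfies $g_{0}(k^{0,0},0)+g_{0}(k^{0,1},1)\le\delta$, which upper bounds $\bP(|DPE(\hat\phi_i)|>\alpha)$ for each candidate classifier. Applied to the final output $\hat\phi$ and combined across the $M$ candidates in exactly the manner of Theorem~\ref{p3}/Theorem~\ref{p50}(1), this yields the $(1-\delta)^{M}$ probability claim.

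For part (2), I first construct a near-Bayes-optimal candidate inside $K$. Writing $\phi_{\alpha'}^{*}(x,a)=\1\{f^{*}(x,a)>\lambda_{a}^{*}\}$, Lemma~\ref{a2} guarantees that each score $t^{0,a}$ lies in $[\lambda_{a}^{*}-\epsilon,\lambda_{a}^{*}+\epsilon]$ with probability at least $F^{0,a}_{(-)}(2\epsilon)$. Hence with probability at least $1-(1-F^{0,0}_{(-)}(2\epsilon))^{n^{0,0}}-(1-F^{0,1}_{(-)}(2\epsilon))^{n^{0,1}}$ there exist order-statistic indices yielding a classifier $\phi_{0}(x,a)=\1\{f(x,a)>t^{0,a}_{*}\}$ with $|t^{0,a}_{*}-\lambda_{a}^{*}|\le\epsilon$. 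Using $\|f-f^{*}\|_\infty\le\epsilon_{0}$ together with Lemma~\ref{a2}, the same two-step triangle-inequality bookkeeping as in the proof of Theorem~\ref{p50} shows $\bP(\phi_{0}\neq Y)-\bP(\phi_{\alpha'}^{*}\neq Y)\le F^{*}_{(+)}(2\epsilon_{0})+F^{*}_{(+)}(\epsilon)$ and $|DPE(\phi_{0})|\le\alpha'+2F^{*}_{(+)}(2\epsilon_{0})+2F^{*}_{(+)}(\epsilon)$. The hypothesis $F^{*}_{(+)}(\epsilon)\le(\alpha-\alpha')/2-F^{*}_{(+)}(2\epsilon_{0})$ then forces $|DPE(\phi_{0})|\le\alpha$, so $\phi_{0}$ is a legitimate member of the candidate set $K$.

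The remaining step is to show that the candidate-selection procedure picks a classifier whose misclassification error is close to that of $\phi_{0}$ (and hence close to that of $\phi_{\alpha'}^{*}$). Proposition~\ref{p4} already provides a uniform-over-$K$ estimate of each $\bP(\hat\phi_{i}\neq Y)$ by $\hat e_{i}$ up to $2\epsilon^{2}+6\epsilon$ with the Hoeffding-type probability indicated in the theorem, after a union bound over the $M$ candidates and the plug-in estimates of $p_{a}$ and $p_{Y,a}$. Since the algorithm minimizes $\hat e_{i}$, the chosen $\hat\phi$ has population error at most that of $\phi_{0}$ plus $4\epsilon^{2}+12\epsilon$; chaining with the $F^{*}_{(+)}(2\epsilon_{0})+F^{*}_{(+)}(\epsilon)$ gap between $\phi_{0}$ and $\phi_{\alpha'}^{*}$ and absorbing small lower-order terms arising from the cross products $\epsilon\cdot\hat p$ yields the advertised $2F^{*}_{(+)}(2\epsilon_{0})+2F^{*}_{(+)}(\epsilon)+2\epsilon^{3}+12\epsilon^{2}+16\epsilon$ bound. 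The main obstacle is the simultaneous control of fairness and accuracy on a single high-probability event: one must verify that $\phi_{0}$ lives in $K$ (a fairness-side argument using the DKW-style concentration of $F^{0,a}(t^{0,a}_{*})$ and Lemma~\ref{a2}) while also showing that $\hat e_{i^{*}}$ faithfully tracks the population error uniformly across the $M$ candidates, and the union bound over these two events is what drives the explicit failure probability appearing in the theorem.
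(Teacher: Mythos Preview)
Your proposal is correct and matches the paper's intended approach: the paper does not spell out a separate proof for the $DPE$ theorem but simply states that the results are obtained ``similar to the algorithm for Equality of Opportunity,'' i.e., by mirroring the proof of Theorem~\ref{p50} with the roles of the $Y=1$ and $Y=0$ subgroups swapped (thresholds drawn from $T^{0,a}$, existence of a near-optimal candidate via Lemma~\ref{a2} applied to the $F^{0,a}$ distributions, and the same Hoeffding/DKW bookkeeping for the error estimate). Your two-part decomposition---first locating a candidate $\phi_0$ with $|t^{0,a}_*-\lambda_a^*|\le\epsilon$ that is both near-optimal and feasible, then using the uniform accuracy of $\hat e_i$ over $K$ to control the selected $\hat\phi$---is exactly the structure of the paper's argument for Theorem~\ref{p50}.
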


% \begin{proposition}
% If $min\{n^{0,0}, n^{0,1}\} \geq \lceil \frac{\log \frac{\delta}{2}} {\log (1-\alpha)}\rceil$, we have $|DPE(\hat\phi)| < \alpha$ with probability $(1-\delta)^{M}$, where $M$ is the size of the candidate set.\\
% \end{proposition}

\subsubsection{FaiREE for Equalized Accuracy}

\begin{algorithm}[!htb]
\caption{FaiREE for Equalized Accuracy}
\KwIn{\\Data: $S$ = $S^{0,0}\cup S^{0,1} \cup S^{1,0} \cup S^{1,1}$\\ $\alpha$: error bound ($\alpha > |p_{Y,1} - p_{Y,0}|$) \\ $\delta$: small tolerance level\\$f$: a classifier
}
%\For {i in \{1,\ldots,M\}}{
    % $S^{i,j} = \{x^{i,j}_{1},\ldots,x^{i,j}_{n_{i,j}}\}$\\
    $T^{y,a}=\{f(x^{y,a}_{1}),\ldots,f(x^{y,a}_{n_{y,a}})\}$\\
    $\{t^{y,a}_{(1)},\ldots,t^{y,a}_{(n_{y,a})}\}= $sort($T^{y,a}$) \\
    Define $g_{1}(k,a)=\bE[\sum\limits^{n^{1,a}}_{j=k}{n^{1,a}\choose j}(\frac{p_{Y,1-a}Q^{1,1-a}-\alpha}{p_{Y,a}})^{j}(1-\frac{p_{Y,1-a}Q^{1,1-a}-\alpha}{p_{Y,a}})^{n^{1,a}-j}]$ with $Q^{1,a}\sim Beta(k,n^{1,a}-k+1)$,$L_1(k^{1,0},k^{1,1}) = g_{1}(k^{1,1},1) + g_{1}(k^{1,0},0)$\\

%     =\E[\sum\limits^{n^{1,0}}_{j=k^{1,0}}\left(\begin{aligned}
% n&^{1,0} \\
% &j
% \end{aligned}\right)(\frac{p_{Y,1}Q^{1,1}-\alpha}{p_{Y,0}})^{j}(1-\frac{p_{Y,1}Q^{1,1}-\alpha}{p_{Y,0}})^{n^{1,0}-j}]+\E[\sum\limits^{n^{1,1}}_{j=k^{1,1}}\left(\begin{aligned}
% n&^{1,1} \\
% &j
% \end{aligned}\right)(\frac{p_{Y,0}Q^{1,0}-\alpha}{p_{Y,1}})^{j}(1-\frac{p_{Y,0}Q^{1,0}-\alpha}{p_{Y,1}})^{n^{1,1}-j}]$, $Q^{1,a} \sim Beta(k^{1,a}, n^{1,a} - k^{1,a} + 1)$\\
Define $g_{0}(k,a)=\bE[\sum\limits^{n^{0,a}}_{j=k}{n^{0,a}\choose j}(\frac{(1-p_{Y,1-a})Q^{0,1-a}+p_{Y,1-a}-p_{Y,a}-\alpha}{1-p_{Y,0}})^{j}(1-\frac{(1-p_{Y,1-a})Q^{0,1-a}+p_{Y,1-a}-p_{Y,a}-\alpha}{1-p_{Y,a}})^{n^{0,a}-j}]$ with $Q^{0,a}\sim Beta(k+1,n^{0,a}-k)$, $L_0(k^{0,0},k^{0,1}) = g_{0}(k^{0,1},1) + g_{0}(k^{0,0},0)$\\

% =\E[\sum\limits^{n^{0,0}}_{j=k^{0,0}}\left(\begin{aligned}
% n&^{0,0} \\
% &j
% \end{aligned}\right)(\frac{(1-p_{Y,1})Q^{0,1}+p_{Y,1}-p_{Y,0}-\alpha}{1-p_{Y,0}})^{j}(1-\frac{(1-p_{Y,1})Q^{0,1}+p_{Y,1}-p_{Y,0}-\alpha}{1-p_{Y,0}})^{n^{0,0}-j}]+\E[\sum\limits^{n^{0,1}}_{j=k^{0,1}}\left(\begin{aligned}
% n&^{0,1} \\
% &j
% \end{aligned}\right)(\frac{(1-p_{Y,0})Q^{0,0}+p_{Y,0}-p_{Y,1}-\alpha}{1-p_{Y,1}})^{j}(1-\frac{(1-p_{Y,0})Q^{0,0}+p_{Y,0}-p_{Y,1}-\alpha}{1-p_{Y,1}})^{n^{0,1}-j}]$, $Q^{0,a} \sim Beta(k^{0,a}+1, n^{0,a} - k^{0,a})$\\
Find $k^{0,0}, k^{0,1}$: $t^{0,0}_{(k^{0,0})} \leq t^{1,0}_{(k^{1,0})} < t^{0,0}_{(k^{0,0} + 1)}$, $t^{0,1}_{(k^{0,1})} \leq t^{1,1}_{(k^{1,1})} < t^{0,1}_{(k^{0,1} + 1)}$.\\

Build candidate set 
$K =  \{(k^{1,0}, k^{1,1}) \mid L_1(k^{1,0}, k^{1,1}) + L_0(k^{0,0}, k^{0,1}) \leq \delta\} = \{(k^{1,0}_1, k^{1,1}_1), \ldots, (k^{1,0}_M,k^{1,1}_M)\}$\\ 

% $\hat{p}_0,\hat{p}_1,\hat{p}_{Y,0},\hat{p}_{Y,1}\leftarrow$ estimation of $p_0,p_1,p_{Y,0},p_{Y,1}$\\
% Define $u_{y,a} = \frac{k^{y,a}n^{y,a}}{(n^{y,a}+1)(n^{1,0}+n^{0,0}+n^{1,1}+n^{0,1})}$, $M(k^{1,0},k^{1,1},k^{0,0},k^{0,1})= u_{1,0} + u_{1,1} - u_{0,0} - u_{0,1} + \frac{n^{0,0}+n^{0,1}}{n^{1,0}+n^{0,0}+n^{1,1}+n^{0,1}}$\\
% \For{i in \{1,\ldots,n\}}{
%     Suppose $t^{0,0}_{(k_{i}^{0,0})} \leq t^{1,0}_{(k_{i}^{1,0})} < t^{0,0}_{(k_{i}^{0,0} + 1)}$, $t^{0,1}_{(k_{i}^{0,1})} \leq t^{1,1}_{(k_{i}^{1,1})} < t^{0,1}_{(k_{i}^{0,1} + 1)}$\\
%     $\hat{E}(k^{0,a}_{i}) \leftarrow$ estimation of $\E(k^{0,a}_{i})$, $a=0,1$
% }
$i_*\leftarrow \mathop{\arg\min}\limits_{i \in [M]} \{\hat{e_i}\}$ ($\hat{e_i}$ is defined in Proposition \ref{p4})\\
\KwOut{$\hat\phi(x,a)=\1\{f(x,a)>t^{1,a}_{(k^{1,a}_{i_*})}\}$}
%}
%\KwOut{\\an ensemble classifier $\hat{\phi}_{\alpha}(x,a) = \1\{\frac{1}{M}\sum\limits^{M}_{i=1}\phi_i(x,a)\geq \frac{1}{2})$}
\end{algorithm}

Similar to the algorithm for Equalized Odds, we have the following propositions and assumption:

\begin{proposition}
Given $k^{1,0}, k^{1,1}$ satisfying $k^{1,a} \in \{1,\ldots,n^{1,a}\}$ $(a=0,1)$ and $\alpha > |p_{Y,1} - p_{Y,0}|$. Define $\phi(x,a)=\1\{f(x,a)>t^{1,a}_{(k^{1,a})}\}$, 
$g_{1}(k,a)=\bE[\sum\limits^{n^{1,a}}_{j=k}{n^{1,a}\choose j}(\frac{p_{Y,1-a}Q^{1,1-a}-\alpha}{p_{Y,a}})^{j}(1-\frac{p_{Y,1-a}Q^{1,1-a}-\alpha}{p_{Y,a}})^{n^{1,a}-j}]$ with $Q^{1,a}\sim Beta(k,n^{1,a}-k+1)$, $q_{Y,a}(\alpha)=\frac{(1-p_{Y,1-a})Q^{0,1-a}+p_{Y,1-a}-p_{Y,a}-\alpha}{1-p_{Y,a}}$, and $g_{0}(k,a)=\bE[\sum\limits^{n^{0,a}}_{j=k}{n^{0,a}\choose j}(q_{Y,a}(\alpha))^{j}(1-q_{Y,a}(\alpha))^{n^{0,a}-j}]$ with $Q^{0,a}\sim Beta(k+1,n^{0,a}-k)$. Then we have:
\begin{align*}
    \bP(|DPE(\phi)|>\alpha) \leq g_{1}(k^{1,1},1) + g_{1}(k^{1,0},0)+g_{0}(k^{0,1},1) + g_{0}(k^{0,0},0). %&\bE[\sum\limits^{n^{1,0}}_{j=k^{1,0}}{n^{1,0}\choose j}(Q^{1,1}-\alpha)^{j}(1-(Q^{1,1}-\alpha))^{n^{1,0}-j}]\\
    %&+\bE[\sum\limits^{n^{1,1}}_{j=k^{1,1}}{n^{1,1}\choose j}(Q^{1,0}-\alpha)^{j}(1-(Q^{1,0}-\alpha))^{n^{1,1}-j}],
\end{align*}

% then we have\\ 
% $\bP(|DEA(\phi)| \leq \alpha) \geq 1 - \E[\sum\limits^{n^{1,0}}_{j=k^{1,0}}\left(\begin{aligned}
% n&^{1,0} \\
% &j
% \end{aligned}\right)(\frac{p_{Y,1}Q^{1,1}-\alpha}{p_{Y,0}})^{j}(1-\frac{p_{Y,1}Q^{1,1}-\alpha}{p_{Y,0}})^{n^{1,0}-j}]-\E[\sum\limits^{n^{1,1}}_{j=k^{1,1}}\left(\begin{aligned}
% n&^{1,1} \\
% &j
% \end{aligned}\right)(\frac{p_{Y,0}Q^{1,0}-\alpha}{p_{Y,1}})^{j}(1-\frac{p_{Y,0}Q^{1,0}-\alpha}{p_{Y,1}})^{n^{1,1}-j}]-\E[\sum\limits^{n^{0,0}}_{j=k^{0,0}}\left(\begin{aligned}
% n&^{0,0} \\
% &j
% \end{aligned}\right)(\frac{(1-p_{Y,1})Q^{0,1}+p_{Y,1}-p_{Y,0}-\alpha}{1-p_{Y,0}})^{j}(1-\frac{(1-p_{Y,1})Q^{0,1}+p_{Y,1}-p_{Y,0}-\alpha}{1-p_{Y,0}})^{n^{0,0}-j}]-\E[\sum\limits^{n^{0,1}}_{j=k^{0,1}}\left(\begin{aligned}
% n&^{0,1} \\
% &j
% \end{aligned}\right)(\frac{(1-p_{Y,0})Q^{0,0}+p_{Y,0}-p_{Y,1}-\alpha}{1-p_{Y,1}})^{j}(1-\frac{(1-p_{Y,0})Q^{0,0}+p_{Y,0}-p_{Y,1}-\alpha}{1-p_{Y,1}})^{n^{0,1}-j}]$, \\
% where $Q^{y,a}$ is a random variable of the distribution $Beta(k^{y,a}, n^{y,a}-k^{y,a}+1)$ and $Q^{y,a}$ is a random variable of the distribution $Beta(k^{y,a}+1, n^{y,a}-k^{y,a})$.
\end{proposition}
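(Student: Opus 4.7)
I read the statement's ``$DPE$'' as $\mathrm{DEA}$: the surrounding algorithm is for Equalized Accuracy, the feasibility condition $\alpha>|p_{Y,1}-p_{Y,0}|$ matches the trivial-classifier floor for DEA (since a constant classifier already has $|\mathrm{DEA}|=|p_{Y,1}-p_{Y,0}|$), and combining $g_0$ with $g_1$ only makes sense if both $Y=1$ and $Y=0$ samples enter the bound, which happens for DEA but not for DPE. The plan is to mirror the proof of Proposition~\ref{p1}, now applied to weighted differences of CDFs that arise from an explicit expansion of the per-group risk. Concretely, for $\phi(x,a)=\1\{f(x,a)>t^{1,a}_{(k^{1,a})}\}$ I would first derive
\begin{equation*}
\bP(\widehat Y\neq Y\mid A=a)=p_{Y,a}\,F^{1,a}(t^{1,a}_{(k^{1,a})})+(1-p_{Y,a})\bigl(1-F^{0,a}(t^{1,a}_{(k^{1,a})})\bigr),
\end{equation*}
so that $\mathrm{DEA}(\phi)=\Delta_1-\Delta_0+(p_{Y,0}-p_{Y,1})$, with $\Delta_1:=p_{Y,1}F^{1,1}(t^{1,1}_{(k^{1,1})})-p_{Y,0}F^{1,0}(t^{1,0}_{(k^{1,0})})$ and $\Delta_0:=(1-p_{Y,1})F^{0,1}(t^{1,1}_{(k^{1,1})})-(1-p_{Y,0})F^{0,0}(t^{1,0}_{(k^{1,0})})$. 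Then $\{|\mathrm{DEA}|>\alpha\}\subseteq\{\mathrm{DEA}>\alpha\}\cup\{\mathrm{DEA}<-\alpha\}$, and I would split each side by a union bound that routes the excess violation either through $\Delta_1$ or through $-\Delta_0$, producing four events put in one-to-one correspondence with the four $g$ terms.

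For each of the four events I would recycle the order-statistics argument from Proposition~\ref{p1}. Take $\{\Delta_1>\alpha\}$: condition on $t^{1,1}_{(k^{1,1})}$, rewrite the event as $F^{1,0}(t^{1,0}_{(k^{1,0})})<\bigl(p_{Y,1}F^{1,1}(t^{1,1}_{(k^{1,1})})-\alpha\bigr)/p_{Y,0}$, and bound the conditional probability by the Binomial tail using the equivalence ``$t^{1,0}_{(k^{1,0})}<s^{\ast}\iff$ at least $k^{1,0}$ of the $t^{1,0}$'s lie below $s^{\ast}$''; taking expectation against the $\mathrm{Beta}(k^{1,1},n^{1,1}-k^{1,1}+1)$ law of $F^{1,1}(t^{1,1}_{(k^{1,1})})$ recovers $g_1(k^{1,0},0)$, and the mirror argument yields $g_1(k^{1,1},1)$ for $\{\Delta_1<-\alpha\}$. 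The two $Y=0$ events are handled identically, except that $F^{0,a}(t^{1,a}_{(k^{1,a})})$ is not itself an order statistic of the $t^{0,a}$ sample; I would use the sandwich $F^{0,a}(t^{0,a}_{(k^{0,a})})\le F^{0,a}(t^{1,a}_{(k^{1,a})})<F^{0,a}(t^{0,a}_{(k^{0,a}+1)})$ (built into the algorithmic construction of $k^{0,a}$) to replace it by a genuine order statistic with law $\mathrm{Beta}(k^{0,a}+1,n^{0,a}-k^{0,a})$, and then carry the weights $(1-p_{Y,\cdot})$ and the constant shift $p_{Y,0}-p_{Y,1}$ through the algebra to obtain the numerator $(1-p_{Y,1-a})Q^{0,1-a}+p_{Y,1-a}-p_{Y,a}-\alpha$ that appears in $g_0$.

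The step I expect to be most finicky is the union-bound decomposition itself. Because of the asymmetry introduced by the $p_{Y,0}-p_{Y,1}$ shift inside $\mathrm{DEA}$, there is no canonical symmetric way to divide the $\alpha$ budget between the $Y=1$ and $Y=0$ pieces, and matching the split exactly to the numerators inside $g_1$ and $g_0$ requires careful bookkeeping; in particular, the condition $\alpha>|p_{Y,1}-p_{Y,0}|$ is needed to keep the threshold ratios appearing in $g_0$ inside $[0,1]$ so that the Binomial-tail bound is non-degenerate. I would fix this split on scratch paper before turning to the Beta/Binomial manipulations, which are then completely analogous to those in the proof of Proposition~\ref{p1}.
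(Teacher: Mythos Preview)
Your diagnosis that ``$DPE$'' here should be read as $DEA$ is correct, and the plan you outline is precisely the argument the paper has in mind: the paper gives no standalone proof for this proposition, declaring it ``similar to the algorithm for Equalized Odds,'' so the intended proof is that of Proposition~\ref{pro:deo} rerun with the weighted differences $\Delta_1,\Delta_0$ in place of the unweighted $F^{1,1}-F^{1,0}$ and $F^{0,1}-F^{0,0}$. Your derivation of the per-group risk, the decomposition $\mathrm{DEA}=\Delta_1-\Delta_0+(p_{Y,0}-p_{Y,1})$, the conditional Binomial-tail/Beta order-statistic bound for each piece, and the use of the sandwich $t^{0,a}_{(k^{0,a})}\le t^{1,a}_{(k^{1,a})}<t^{0,a}_{(k^{0,a}+1)}$ to import the $\mathrm{Beta}(k^{0,a}+1,n^{0,a}-k^{0,a})$ law all track the paper's proof of Proposition~\ref{pro:deo} term for term; you have also correctly reverse-engineered the shifted numerators in $g_0$ and the role of the hypothesis $\alpha>|p_{Y,1}-p_{Y,0}|$.

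One caution on the step you already flag as ``most finicky.'' Unpacking the four $g$-terms, they bound exactly $\bP(|\Delta_1|>\alpha)+\bP\bigl(|Y'|>\alpha\bigr)$ with $Y':=-\Delta_0+(p_{Y,0}-p_{Y,1})$, and the covering $\{|\Delta_1+Y'|>\alpha\}\subseteq\{|\Delta_1|>\alpha\}\cup\{|Y'|>\alpha\}$ is \emph{false} in general (take $\Delta_1=Y'=0.6\alpha$). The paper does not address this either---it simply asserts the bound by analogy---so carrying out your plan literally will reproduce the paper's claimed inequality, but the covering step as written only yields a bound on $\bP(|\mathrm{DEA}|>2\alpha)$, not on $\bP(|\mathrm{DEA}|>\alpha)$. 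Your instinct that this split needs to be fixed on scratch paper first is therefore well placed; just be aware that no symmetric allocation of the $\alpha$-budget makes the four stated $g$-thresholds recover the inequality exactly as written.
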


\begin{assumption}
$n^{0,0} \geq \lceil \frac{\log \frac{\delta}{4}} {\log (1-\frac{\alpha}{1-p_{Y,0}})}\rceil$, $n^{0,1} \geq \lceil \frac{\log \frac{\delta}{4}} {\log (1-\frac{\alpha}{1-p_{Y,1}})}\rceil$, $n^{1,0} \geq \lceil \frac{\log \frac{\delta}{4}} {\log (\frac{p_{Y,1} - \alpha}{p_{Y,0}})}\rceil$, $n^{1,1} \geq \lceil \frac{\log \frac{\delta}{4}} {\log (\frac{p_{Y,0} - \alpha}{p_{Y,1}})}\rceil\text {, in which }\lceil\cdot\rceil \text { denotes the ceiling function. }$
\label{a6}
\end{assumption} 

\begin{theorem}
Under Assumption \ref{a6}, we have $|DEA(\hat\phi_i)| \leq \alpha$ with probability $1-\delta$, for each $i\in\{1,\ldots,M\}$.
\end{theorem}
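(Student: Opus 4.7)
The plan is to reduce the claim directly to the preceding proposition on $|DEA(\phi)|$ and then show that the sample size assumption makes the candidate set construction self-consistent. First I would note that the candidate set for Equalized Accuracy is built so that every $i\in[M]$ satisfies $L_1(k^{1,0}_i,k^{1,1}_i)+L_0(k^{0,0}_i,k^{0,1}_i)\le\delta$, where the four $g_y(\cdot,\cdot)$ terms are exactly the upper bounds that appear in the proposition preceding the theorem. Applied to $\hat\phi_i$, the proposition yields
\[
\bP\bigl(|DEA(\hat\phi_i)|>\alpha\bigr)\;\le\; g_1(k^{1,1}_i,1)+g_1(k^{1,0}_i,0)+g_0(k^{0,1}_i,1)+g_0(k^{0,0}_i,0)\;\le\;\delta,
\]
which is already the conclusion. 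So the only real content is to verify that under Assumption \ref{a6} the candidate set $K$ is non-empty, so the theorem is not vacuous, and that $L_1+L_0$ is indeed controlled by the four one-sided bounds each $\le\delta/4$.

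Next I would exhibit an explicit feasible tuple by taking the extreme choice $k^{1,a}_i=n^{1,a}$ and $k^{0,a}_i=n^{0,a}$ for $a\in\{0,1\}$. Plugging $k=n^{1,a}$ into $g_1(k,a)$ collapses the binomial sum to a single term, and using $Q^{1,1-a}\le 1$ gives the deterministic bound $g_1(n^{1,a},a)\le\bigl((p_{Y,1-a}-\alpha)/p_{Y,a}\bigr)^{n^{1,a}}$. Similarly $g_0(n^{0,a},a)\le \bigl(1-\alpha/(1-p_{Y,a})\bigr)^{n^{0,a}}$, once one simplifies $\bigl((1-p_{Y,1-a})+p_{Y,1-a}-p_{Y,a}-\alpha\bigr)/(1-p_{Y,a})$. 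Each of the four sample-size lower bounds in Assumption \ref{a6} is precisely what is needed to force the corresponding base-of-power below $1$ and the resulting expression below $\delta/4$; summing the four $\delta/4$ contributions shows $L_1+L_0\le\delta$ at the extreme tuple, so $K\neq\emptyset$.

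The main obstacle is not analytic but book-keeping: one must be careful that the four different conditioning regimes (the signed differences $p_{Y,1-a}-p_{Y,a}$ that appear inside $g_0$ and $g_1$ force the condition $\alpha>|p_{Y,1}-p_{Y,0}|$) keep the arguments of $g_0,g_1$ in $[0,1]$, which is needed both for the preceding proposition to be vacuously valid when the argument is negative and for the geometric decay calculation above to use the correct branch of the logarithm. Once the four bounds $g_y(n^{y,a},a)\le\delta/4$ are in place, a straightforward union-type argument combines them into $L_1+L_0\le\delta$ at the extreme point, hence $K\neq\emptyset$ and the final bound $\bP(|DEA(\hat\phi_i)|>\alpha)\le\delta$ holds uniformly over $i\in[M]$, completing the proof.
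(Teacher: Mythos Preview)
The paper does not write out an explicit proof of this statement; as with Theorems~\ref{p3} and~\ref{p8}, it is meant to be an immediate consequence of the preceding proposition together with the definition of the candidate set. Your first displayed inequality is precisely that reduction, and it is correct: for any $i\in[M]$ the candidate-set constraint gives $g_1(k^{1,1}_i,1)+g_1(k^{1,0}_i,0)+g_0(k^{0,1}_i,1)+g_0(k^{0,0}_i,0)\le\delta$, and the proposition then yields $\bP(|DEA(\hat\phi_i)|>\alpha)\le\delta$. So the core argument matches the paper's pattern.

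Your non-emptiness step, however, has a gap. In the Equalized Accuracy algorithm the indices $k^{0,a}$ are \emph{not} free parameters: they are determined from $k^{1,a}$ via the bracketing condition $t^{0,a}_{(k^{0,a})}\le t^{1,a}_{(k^{1,a})}<t^{0,a}_{(k^{0,a}+1)}$. Consequently you cannot simply declare the extreme tuple $k^{1,a}=n^{1,a}$ and $k^{0,a}=n^{0,a}$ simultaneously; once $k^{1,a}=n^{1,a}$ is fixed, $k^{0,a}$ is whatever rank $\max_j t^{1,a}_j$ has among the $t^{0,a}$'s, which may be any value in $\{0,\dots,n^{0,a}\}$. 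Thus the four separate $\delta/4$ bounds you compute (which are individually correct) do not automatically combine at a single feasible tuple. Note, though, that non-emptiness is not actually required for the theorem as stated (the conclusion is vacuous when $M=0$), and the paper itself derives the sample-size thresholds in Assumption~\ref{a6} exactly as you did, treating the four $g$-terms as if they were independently minimizable; so this is a shared loose end rather than an error peculiar to your argument.
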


\begin{corollary}
Under Assumption \ref{a6}, we have $|DEA(\hat\phi)| \leq \alpha$ with probability $(1-\delta)^{M}$, where $M$ is the size of the candidate set.
\end{corollary}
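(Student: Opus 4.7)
The plan is to derive this corollary by applying the preceding theorem uniformly over every element of the candidate set $K$. The preceding theorem establishes that, for any fixed index $i \in \{1,\ldots,M\}$, the candidate classifier $\hat\phi_i(x,a) = \1\{f(x,a) > t^{1,a}_{(k^{1,a}_i)}\}$ satisfies $|DEA(\hat\phi_i)| \leq \alpha$ with probability at least $1-\delta$. Since the output $\hat\phi$ of the algorithm is chosen as $\hat\phi_{i_*}$ for $i_* = \mathop{\arg\min}_{i \in [M]} \hat{e_i}$, the event $\{|DEA(\hat\phi)| \leq \alpha\}$ is implied by the simultaneous event $\bigcap_{i=1}^M \{|DEA(\hat\phi_i)| \leq \alpha\}$, regardless of which index $i_*$ the selection step returns.

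First I would invoke the preceding theorem on each index $i$ to obtain the per-candidate bound $\bP(|DEA(\hat\phi_i)| > \alpha) \leq \delta$. This bound is a direct consequence of the construction of $K$, since every $(k^{1,0}_i, k^{1,1}_i) \in K$ satisfies $L_1(k^{1,0}_i, k^{1,1}_i) + L_0(k^{0,0}_i, k^{0,1}_i) \leq \delta$, and the proposition preceding it identifies this quantity as a valid upper bound on the violation probability. Next I would combine these $M$ events across the candidate set: treating them in the product form yields $\bP\bigl(\bigcap_{i=1}^M \{|DEA(\hat\phi_i)| \leq \alpha\}\bigr) \geq (1-\delta)^M$, which is the claimed bound. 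Since the algorithm's output $\hat\phi$ belongs to $\{\hat\phi_1, \ldots, \hat\phi_M\}$, the inclusion $\bigcap_{i=1}^M \{|DEA(\hat\phi_i)| \leq \alpha\} \subseteq \{|DEA(\hat\phi)| \leq \alpha\}$ then gives the conclusion.

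The main obstacle I anticipate is justifying the multiplicative form $(1-\delta)^M$ rather than the union-bound form $1 - M\delta$, because the events $\{|DEA(\hat\phi_i)| \leq \alpha\}$ are defined through the same underlying sample $S$ and so are not genuinely independent. This is the same phenomenon that appears in Theorem \ref{p50}(1) and its DP/PE analogues earlier in the appendix, where the corresponding bound is also stated in the product form and treated as a direct corollary of the per-index bound. Following the paper's convention I would present the step at the same level of detail, noting that if one prefers an unconditional guarantee, the union bound $\bP(|DEA(\hat\phi)| \leq \alpha) \geq 1 - M\delta$ is immediate and that rescaling $\delta \mapsto c_0/M$ as in Theorem \ref{p5} recovers a clean $(1-c_0)$-probability statement.
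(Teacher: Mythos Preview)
Your proposal is correct and matches the paper's approach: the paper does not give a separate proof for this corollary, treating it as an immediate consequence of the per-candidate theorem in the same way that Theorem~\ref{p50}(1) is stated to be ``a direct corollary from Theorem~\ref{p3}.'' Your observation about the product form $(1-\delta)^M$ versus the union bound $1-M\delta$ is apt and goes slightly beyond what the paper makes explicit.
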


\subsection{Implementation details and additional experiments}
\label{AA9}
From Lemma \ref{eo}, we adopt a new way of building a much smaller candidate set. Note that our shrunk candidate set for Equality of Opportunity is:
$$
\begin{aligned}
K^{\prime} =  &\{(k^{1,0}, u_1(k^{1,0})) \mid L_1(k^{1,0}, u_1(k^{1,0})) \leq \delta\}.
\end{aligned}
$$

Since Equalized Odds constraint is an extension of Equality of Opportunity, our target classifier should be in $K^{\prime}$.

To select our target classifier, it's sufficient to add a condition of similar false positive rate between privileged and unprivileged groups. Specifically, we choose our final candidate set as below:

$$
\begin{aligned}
K^{\prime\prime} =  &\{(k^{1,0}, u_1(k^{1,0})) \mid L_1(k^{1,0}, u_1(k^{1,0})) \leq \delta, L_0(k^{0,0}, k^{1,0}) \leq \delta\}. %\\
%=& \{(k^{1,0}_1, k^{1,1}_1), \ldots, (k^{1,0}_M,k^{1,1}_M)\}(M \leq n^{1,0}).
\end{aligned}
$$
We also did experiments on other benchmark datasets.

First, we apply FaiREE to German Credit dataset \cite{kamiran2009classifying}, whose task is to predict whether a bank account holder's credit is \textit{good} or \textit{bad}. The protected attribute is gender, and the sample size is 1000, with 800 training samples and 200 test samples. To facilitate the numerical study, we randomly split data into training set, calibration set, and test set at each repetition and repeat 500 times. 
\begin{table}[htbp]
\small
\caption{Result of different methods on German Credit dataset}\label{tab:german}
%\vspace{-1em}
\begin{center}
\resizebox{\columnwidth}{!}{\setlength{\tabcolsep}{1.5mm}{\begin{tabular}{c|c|c|c|c|c|c|c|c|c|c|c|c}
\toprule
{} & {Eq} & {C-Eq} & {ROC} & \multicolumn{3}{|c|}{FairBayes} & \multicolumn{3}{|c|}{FaiREE-EOO} & \multicolumn{3}{|c}{FaiREE-EO}\\
\midrule
{$\alpha$} & {/} &{/} &{/} & {0.07} & {0.1}& {0.14}& {0.07}& {0.1}& {0.14}& {0.07}& {0.1}& {0.14}\\
\midrule
{$\overline{|DEOO|}$} & {0.078} & {0.093}& {0.100}& {0.126}& {0.125}& {0.126}& {0.020}& {0.029}& {0.034}& {0.001}& {0.025}& {0.048}\\
{$|DEOO|_{95}$} & {0.179} & {0.127}& {0.267}& {0.160}& {0.182}& {0.186}& {0.066}& {0.097}& {0.130}& {0.004}& {0.084}& {0.120}\\
{$\overline{|DPE|}$} & {0.109} & {0.072}& {0.138}& {/}& {/}& {/}& {/}& {/}& {/}& {0.041}& {0.063}& {0.092}\\
{${|DPE|}_{95}$} & {0.235} & {0.114}& {0.334}& {/}& {/}& {/}& {/}& {/}& {/}& {0.059}& {0.097}& {0.133}\\
\midrule
{$\overline{ACC}$} & {0.707} & {0.720}& {0.591}& {0.722}& {0.723}& {0.723}& {0.717}& {0.729}& {0.745}& {0.702}& {0.721}& {0.722}\\
\midrule
\end{tabular}}}
\end{center}
\end{table}

Then, we apply FaiREE to Compas Score dataset \cite{angwin2016machine}, whose task is to predict whether a person will conduct crime in the future. The protected attribute is gender, and the sample size is 5278, with 4222 training samples and 1056 test samples. To facilitate the numerical study, we randomly split data into training set, calibration set and test set at each repetition and repeat for 500 times. 
\begin{table}[htbp]
\small
\caption{Result of different methods on Compas Score dataset}\label{tab:compas}
%\vspace{-1em}
\begin{center}
\resizebox{\columnwidth}{!}{\setlength{\tabcolsep}{1.5mm}{\begin{tabular}{c|c|c|c|c|c|c|c|c|c|c|c|c}
\toprule
{} & {Eq} & {C-Eq} & {ROC} & \multicolumn{3}{|c|}{FairBayes} & \multicolumn{3}{|c|}{FaiREE-EOO} & \multicolumn{3}{|c}{FaiREE-EO}\\
\midrule
{$\alpha$} & {/} &{/} &{/} & {0.07} & {0.1}& {0.14}& {0.07}& {0.1}& {0.14}& {0.07}& {0.1}& {0.14}\\
\midrule
{$\overline{|DEOO|}$} & {0.083} & {0.642}& {0.070}& {0.077}& {0.109}& {0.136}& {0.026}& {0.033}& {0.042}& {0.027}& {0.049}& {0.098}\\
{$|DEOO|_{95}$} & {0.101} & {0.684}& {0.174}& {0.145}& {0.186}& {0.250}& {0.066}& {0.097}& {0.131}& {0.068}& {0.090}& {0.140}\\
{$\overline{|DPE|}$} & {0.025} & {0.291}& {0.067}& {/}& {/}& {/}& {/}& {/}& {/}& {0.026}& {0.048}& {0.060}\\
{${|DPE|}_{95}$} & {0.047} & {0.332}& {0.148}& {/}& {/}& {/}& {/}& {/}& {/}& {0.057}& {0.092}& {0.114}\\
\midrule
{$\overline{ACC}$} & {0.629} & {0.664}& {0.652}& {0.658}& {0.658}& {0.659}& {0.654}& {0.660}& {0.672}& {0.623}& {0.654}& {0.669}\\
\midrule
\end{tabular}}}
\end{center}
\end{table}

We further generate a synthetic model where trained classifiers are more informative.
\begin{enumerate}
    \item[\textbf{Model 3.}] 
    We generate the protected attribute $A$ and label $Y$ with the probability, location parameter and scale parameter the same as Model 1. The dimension of features is set to 60, and we generate features with $x^{0,0}_{i,j} \stackrel{i.i.d.}{\sim} t(1)$, $x^{0,1}_{i,j} \stackrel{i.i.d.}{\sim} t(4)$, $x^{1,0}_{i,j} \stackrel{i.i.d.}{\sim} \chi^2_1$ and $x^{1,1}_{i,j} \stackrel{i.i.d.}{\sim} \chi^2_4$, for $j=1,2,...,60$.
\end{enumerate} 
\begin{table}[htbp]
\small
\caption{Experimental studies under Model 3. Here $\overline{|DEOO|}$ denotes the sample average of the absolute value of $DEOO$ defined in Eq.~(\ref{deoo}), and $|DEOO|_{95}$ denotes the sample upper 95\% quantile. $\overline{|DPE|}$ and ${|DPE|}_{95}$ are defined similarly for $DPE$ defined in Eq.~(\ref{dpe}). $\overline{ACC}$ is the sample average of accuracy. We use ``/" in the $DPE$ line because FairBayes and FaiREE-EOO are not designed to control $DPE$.}% \linjun{define the notation here} \james{2 or 3 decimal points should be enough and makes it easier to read.}}
\begin{center}
\resizebox{\columnwidth}{!}{\setlength{\tabcolsep}{1.5mm}{\begin{tabular}{c|c|c|c|c|c|c|c|c|c|c|c|c}
\toprule
{} & {Eq} & {C-Eq} & {ROC} & \multicolumn{3}{|c|}{FairBayes} & \multicolumn{3}{|c|}{FaiREE-EOO}& \multicolumn{3}{|c}{FaiREE-EO}\\
\midrule
{$\alpha$} & {/} &{/} &{/} & {0.04} & {0.06}& {0.08}& {0.04}& {0.06}& {0.08}& {0.04}& {0.06}& {0.08}\\
\midrule
{$\overline{|DEOO|}$} & {0.041} & {0.020}& {0.034}& {0.048}& {0.062}& {0.076}& {0.018}& {0.025}& {0.033}& {0.015}& {0.032}& {0.034}\\
{$|DEOO|_{95}$} & {0.115} & {0.042}& {0.070}& {0.080}& {0.113}& {0.143}& {0.038}& {0.054}& {0.076}& {0.036}& {0.058}& {0.069}\\
{$\overline{|DPE|}$} & {0.093} & {0.106}& {0.062}& {/}& {/}& {/}& {/}& {/}& {/}& {0.026}& {0.034}& {0.046}\\
{${|DPE|}_{95}$} & {0.272} & {0.191}& {0.101}& {/}& {/}& {/}& {/}& {/}& {/}& {0.039}& {0.055}& {0.077}\\
\midrule
{$\overline{ACC}$} & {0.887} & {0.921}& {0.834}& {0.898}& {0.901}& {0.912}& {0.946}& {0.950}& {0.963}& {0.900}& {0.914}& {0.933}\\
\midrule
\end{tabular}}}
\end{center}
\label{table7}
\end{table}

\subsubsection{Comparison with more algorithms}
In this subsection, we further compare FaiREE with more baseline algorithms that are designed for achieving Equalized Odds or Equality of Opportunity, including pre-processing algorithms (Fairdecision in \cite{kilbertus2020fair} and LAFTR in \cite{madras2018learning}) and in-processing algorithms (Meta-cl in \cite{celis2019classification} and  Adv-debias in \cite{zhang2018mitigating}) under synthetic settings Model 1 and Model 2 described in Section~\ref{sec:5.1}, and the real dataset Adult Census in Section~\ref{sec:5.2}. The results are summarized in Tables~ \ref{table10}, \ref{table8}, and \ref{table9}.

From the experimental results, we can find that FaiREE has favorable results over these baseline methods, with respect to fairness and accuracy. In particular, the experimental results indicate that while the baseline methods are designed to minimize the fairness violation as much as possible (i.e. set $\alpha=0$), these methods are unable to have an exact control of the fairness violation to a desired level $\alpha$. For example, in the analysis of Adult Census dataset, the 95\% quantile of the DEOO fairness violations of Fairdecision is 0.078, and that of LAFTR, Meta-cl and Adv-debias are all above 0.2. Moreover, our results found that If we allow the same fairness violation of DEOO and DEP for our proposed method FaiREE, we have a much higher accuracy (0.845) compared to the accuracy of those four baseline methods.

% \textcolor{orange}{\begin{enumerate}
%     \item[\textbf{Model 1.}]
%     We generate the protected attribute $A$ and label $Y$ with probability $p_1=\bP(A=1)=0.7, p_0=\bP(A=0)=0.3, p_{y, 1}=\bP(Y=y\mid A=1)=0.7$ and $p_{y, 0}=\bP(Y=y\mid A=0)=0.4$ for $y\in\{0,1\}$. The dimension of features is set to 60, and we generate features with $x^{0,0}_{i,j} \stackrel{i.i.d.}{\sim} t(3)$, where $t(k)$ denotes the $t$-distribution with degree of freedom $k$, $x^{0,1}_{i,j} \stackrel{i.i.d.}{\sim} \chi^2_1$, $x^{1,0}_{i,j} \stackrel{i.i.d.}{\sim} \chi^2_3$ and $x^{1,1}_{i,j} \stackrel{i.i.d.}{\sim} N(\mu, 1)$, where $\mu \sim U(0,1)$ and the scale parameter is fixed to be 1, for $j=1,2,...,60$. %\linjun{are they 1-dim or 60-dim?}
%     \item[\textbf{Model 2.}] 
%     We generate the protected attribute $A$ and label $Y$ with the probability, location parameter and scale parameter the same as Model 1. The dimension of features is set to 80, and we generate features with $x^{0,0}_{i,j} \stackrel{i.i.d.}{\sim} t(4)$, $x^{0,1}_{i,j} \stackrel{i.i.d.}{\sim} \chi^2_2$, $x^{1,0}_{i,j} \stackrel{i.i.d.}{\sim} \chi^2_4$ and $x^{1,1}_{i,j} \stackrel{i.i.d.}{\sim} Laplace(\mu, 1)$, for $j=1,2,...,80$.
% \end{enumerate}}

\begin{table}[H]
\small
\caption{Results of different methods on Adult Census dataset. Here $\overline{|DEOO|}$ denotes the sample average of the absolute value of $DEOO$ defined in Eq.~(\ref{deoo}), and $|DEOO|_{95}$ denotes the sample upper 95\% quantile. $\overline{|DPE|}$ and ${|DPE|}_{95}$ are defined similarly for $DPE$ defined in Eq.~(\ref{dpe}). $\overline{ACC}$ is the sample average of accuracy. We use ``/" in the $DPE$ line because Fairdecision and FaiREE-EOO are not designed to control $DPE$.}\label{tab:morecom3}% \linjun{define the notation here} \james{2 or 3 decimal points should be enough and makes it easier to read.}}
\begin{center}
\resizebox{\columnwidth}{!}{\setlength{\tabcolsep}{1.5mm}{\begin{tabular}{c|c|c|c|c|c|c|c|c|c|c}
\toprule
{} & {Fairdecision} & {LAFTR} & {Meta-cl} & { Adv-debias} & \multicolumn{3}{|c|}{FaiREE-EOO}& \multicolumn{3}{|c}{FaiREE-EO}\\
\midrule
{$\alpha$} & {/} &{/} &{/} & {/} & {0.07}& {0.1}& {0.14}& {0.07}& {0.1}& {0.14}\\
\midrule
{$\overline{|DEOO|}$} & {0.041} & {0.124}& {0.172}& {0.199}& {0.034}& {0.039}& {0.066}& {0.002}& {0.039}& {0.067}\\
{$|DEOO|_{95}$} & {0.078} & {0.203}& {0.253}& {0.248}& {0.065}& {0.090}& {0.124}& {0.008}& {0.094}& {0.125}\\
{$\overline{|DPE|}$} & {/} & {0.044}& {0.194}& {0.074}& {/}& {/}& {/}& {0.030}& {0.066}& {0.074}\\
{${|DPE|}_{95}$} & {/} & {0.083}& {0.271}& {0.094}& {/}& {/}& {/}& {0.056}& {0.078}& {0.086}\\
\midrule
{$\overline{ACC}$} & {0.772} & {0.822}& {0.688}& {0.791}& {0.845}& {0.846}& {0.847}& {0.512}& {0.845}& {0.846}\\
\midrule
\end{tabular}}}
\end{center}
\label{table10}
\end{table}

\begin{table}[H]
\small
\caption{Experimental studies under Model 1, with the same notation as Table \ref{table10}} % \linjun{define the notation here} \james{2 or 3 decimal points should be enough and makes it easier to read.}}
\begin{center}
\resizebox{\columnwidth}{!}{\setlength{\tabcolsep}{1.5mm}{\begin{tabular}{c|c|c|c|c|c|c|c|c|c|c}
\toprule
{} & {Fairdecision} & {LAFTR} & {Meta-cl} & { Adv-debias} & \multicolumn{3}{|c|}{FaiREE-EOO}& \multicolumn{3}{|c}{FaiREE-EO}\\
\midrule
{$\alpha$} & {/} &{/} &{/} & {/} & {0.08}& {0.12}& {0.16}& {0.08}& {0.12}& {0.16}\\
\midrule
{$\overline{|DEOO|}$} & {0.072} & {0.081}& {0.028}& {0.062}& {0.028}& {0.046}& {0.063}& {0.025}& {0.031}& {0.042}\\
{$|DEOO|_{95}$} & {0.177} & {0.145}& {0.108}& {0.226}& {0.073}& {0.115}& {0.157}& {0.079}& {0.108}& {0.133}\\
{$\overline{|DPE|}$} & {/} & {0.061}& {0.118}& {0.179}& {/}& {/}& {/}& {0.039}& {0.042}& {0.045}\\
{${|DPE|}_{95}$} & {/} & {0.104}& {0.272}& {0.412}& {/}& {/}& {/}& {0.075}& {0.084}& {0.106}\\
\midrule
{$\overline{ACC}$} & {0.616} & {0.533}& {0.620}& {0.645}& {0.621}& {0.657}& {0.669}& {0.552}& {0.562}& {0.615}\\
\midrule
\end{tabular}}}
\end{center}
\label{table8}
\end{table}

\begin{table}[H]
\small
\caption{Experimental studies under Model 2, with the same notation as Table \ref{table10}.}\label{tab:morecom2}% \linjun{define the notation here} \james{2 or 3 decimal points should be enough and makes it easier to read.}}
\begin{center}
\resizebox{\columnwidth}{!}{\setlength{\tabcolsep}{1.5mm}{\begin{tabular}{c|c|c|c|c|c|c|c|c|c|c}
\toprule
{} & {Fairdecision} & {LAFTR} & {Meta-cl} & { Adv-debias} & \multicolumn{3}{|c|}{FaiREE-EOO}& \multicolumn{3}{|c}{FaiREE-EO}\\
\midrule
{$\alpha$} & {/} &{/} &{/} & {/} & {0.08}& {0.12}& {0.16}& {0.08}& {0.12}& {0.16}\\
\midrule
{$\overline{|DEOO|}$} & {0.675} & {0.450}& {0.094}& {0.096}& {0.027}& {0.047}& {0.073}& {0.028}& {0.035}& {0.047}\\
{$|DEOO|_{95}$} & {0.744} & {0.633}& {0.208}& {0.263}& {0.075}& {0.112}& {0.153}& {0.077}& {0.114}& {0.143}\\
{$\overline{|DPE|}$} & {/} & {0.502}& {0.120}& {0.140}& {/}& {/}& {/}& {0.041}& {0.044}& {0.056}\\
{${|DPE|}_{95}$} & {/} & {0.686}& {0.312}& {0.418}& {/}& {/}& {/}& {0.071}& {0.090}& {0.127}\\
\midrule
{$\overline{ACC}$} & {0.584} & {0.647}& {0.606}& {0.628}& {0.595}& {0.627}& {0.639}& {0.575}& {0.589}& {0.606}\\
\midrule
\end{tabular}}}
\end{center}
\label{table9}
\end{table}

\subsection{Supplementary figures}
\begin{figure}[htbp]
    \centering
    \includegraphics[width=14cm]{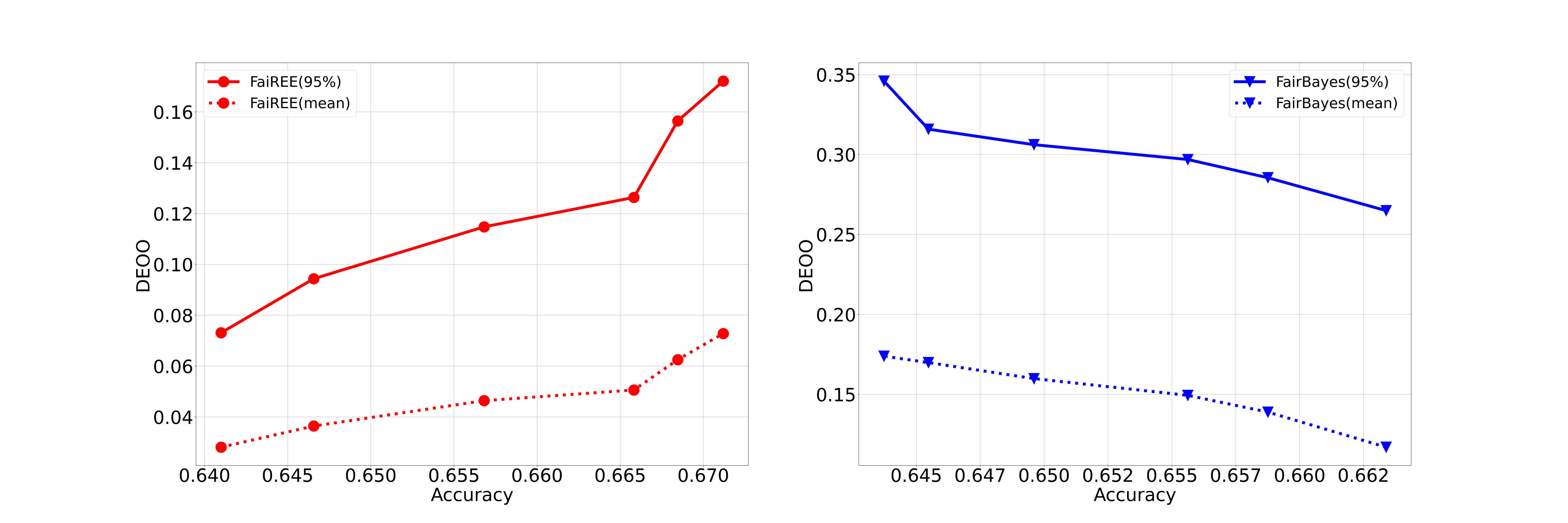}
    \caption{DEOO v.s. Accuracy, as a complementary figure for Figure \ref{compare}}
    \label{fig:my_label}
\end{figure}

\begin{figure}[htbp]
    \centering
    \includegraphics[scale = 0.5]{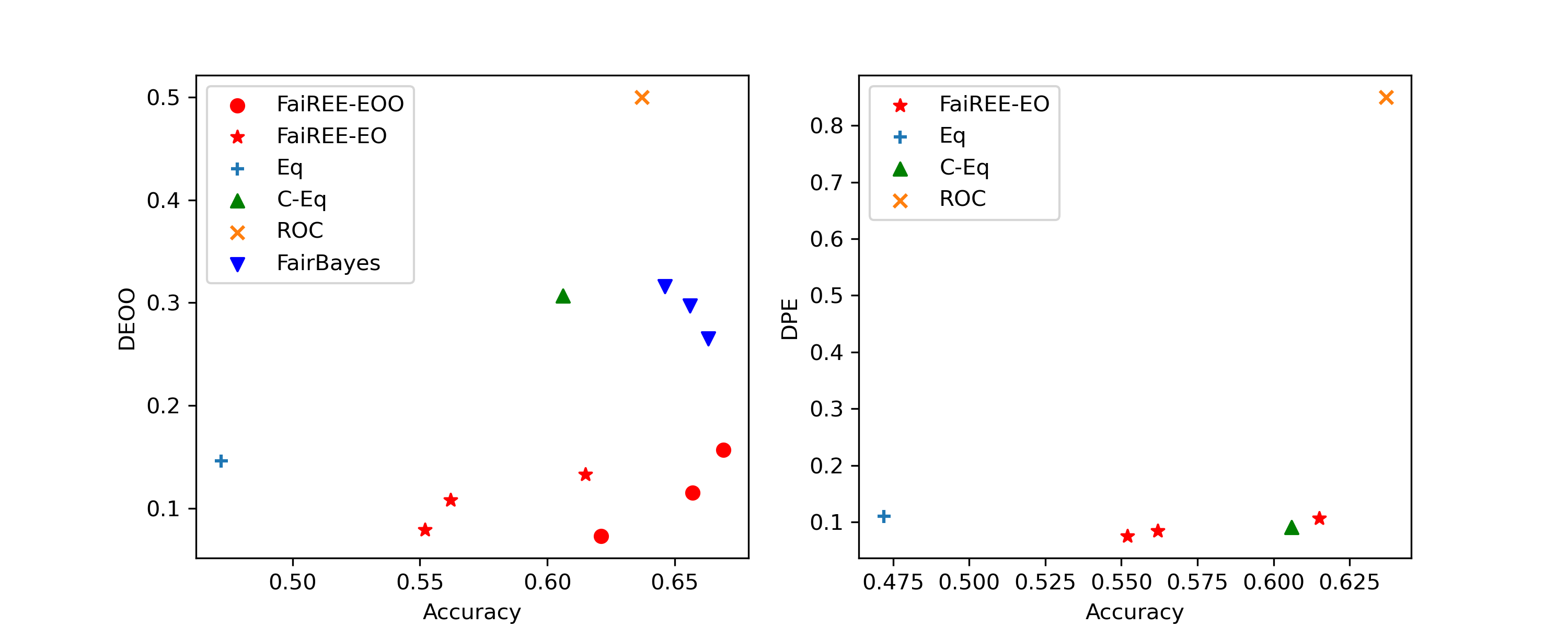}
    \caption{DEOO v.s. Accuracy \& DPE v.s. Accuracy for Model 1}
    \label{fig:my_label}
\end{figure}

\begin{figure}[htbp]
    \centering
    \includegraphics[scale = 0.5]{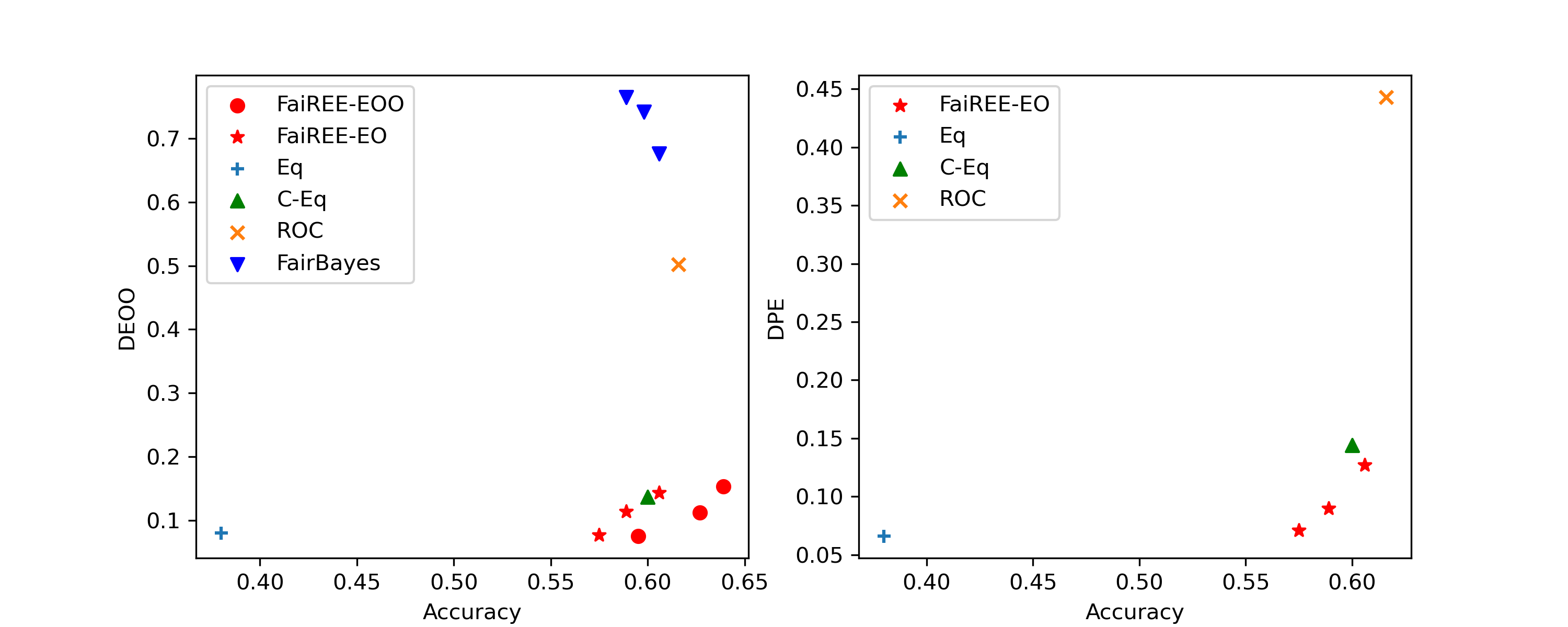}
    \caption{DEOO v.s. Accuracy \& DPE v.s. Accuracy for Model 2}
    \label{fig:my_label}
\end{figure}

\begin{figure}[htbp]
    \centering
    \includegraphics[scale = 0.5]{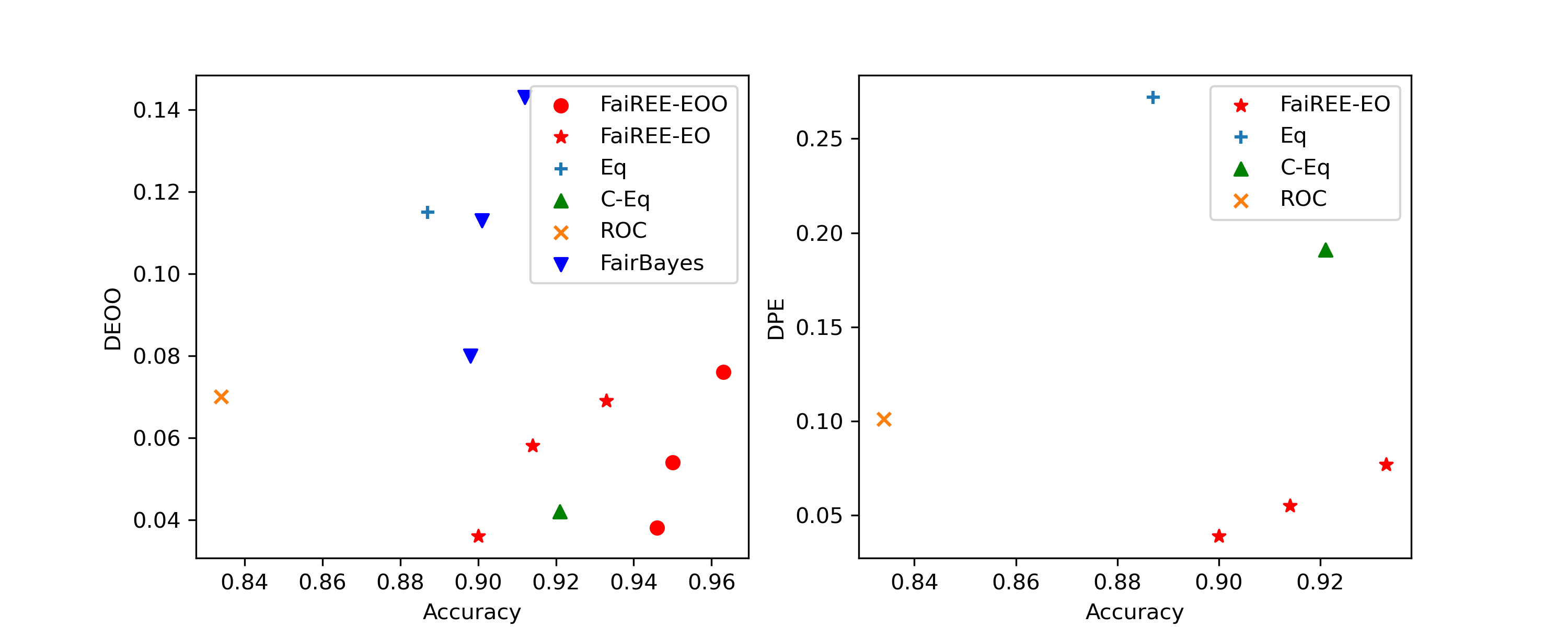}
    \caption{DEOO v.s. Accuracy \& DPE v.s. Accuracy for Model 3}
    \label{fig:my_label}
\end{figure}

\begin{figure}[htbp]
    \centering
    \includegraphics[scale = 0.5]{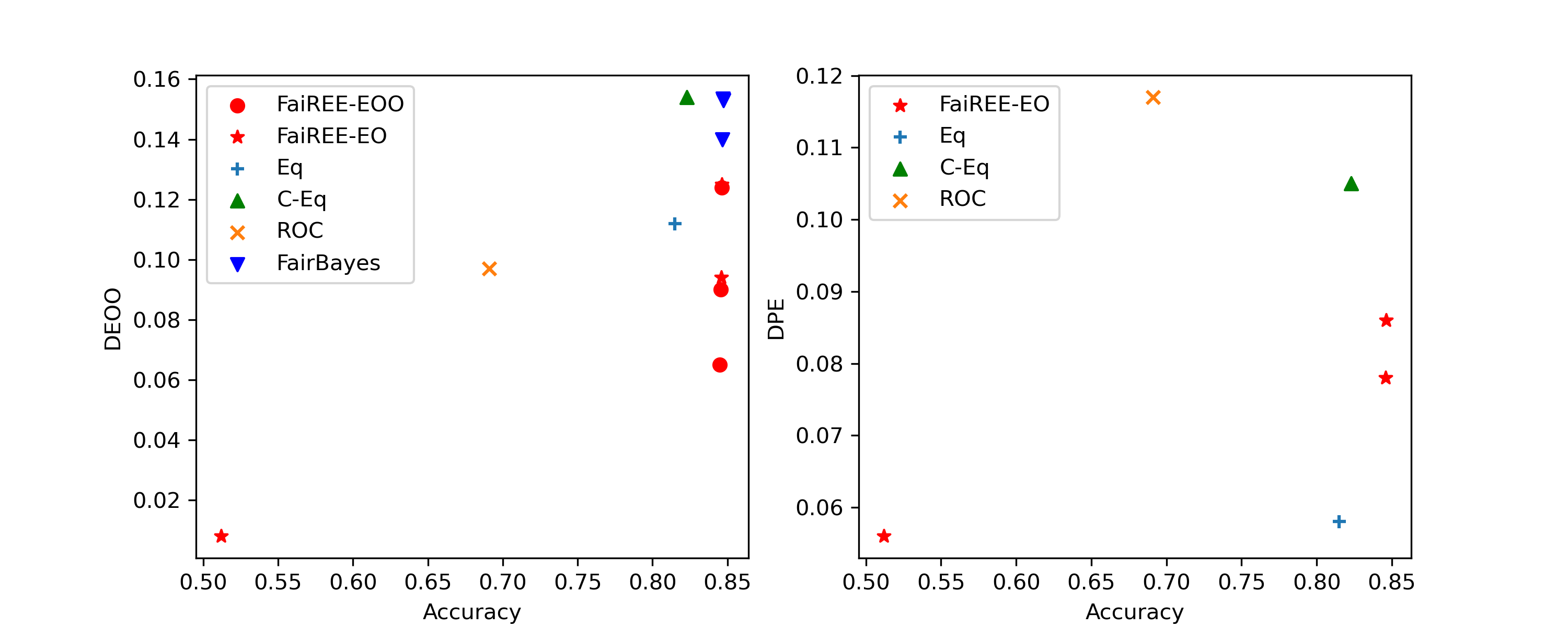}
    \caption{DEOO v.s. Accuracy \& DPE v.s. Accuracy for Adult Census dataset}
    \label{fig:my_label}
\end{figure}

\begin{figure}[htbp]
    \centering
    \includegraphics[scale = 0.5]{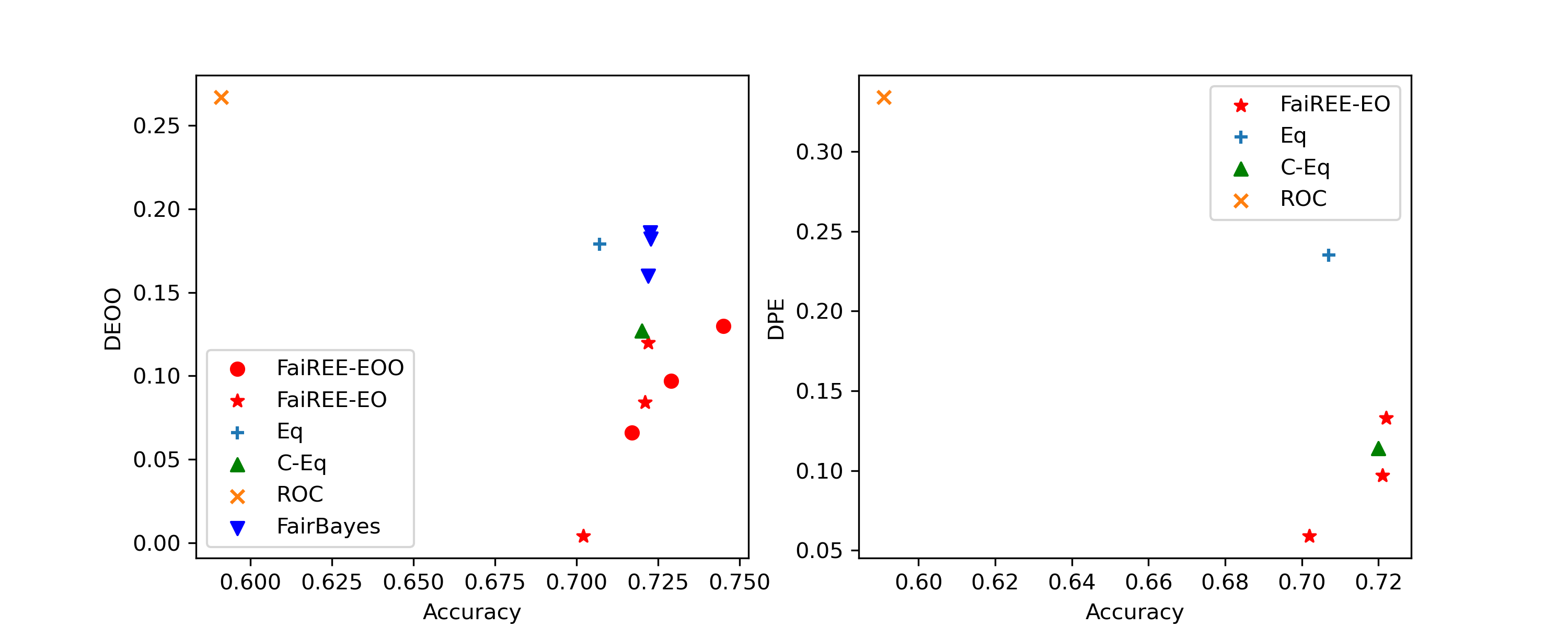}
    \caption{DEOO v.s. Accuracy \& DPE v.s. Accuracy for German Credit dataset}
    \label{fig:my_label}
\end{figure}

\begin{figure}[htbp]
    \centering
    \includegraphics[scale = 0.5]{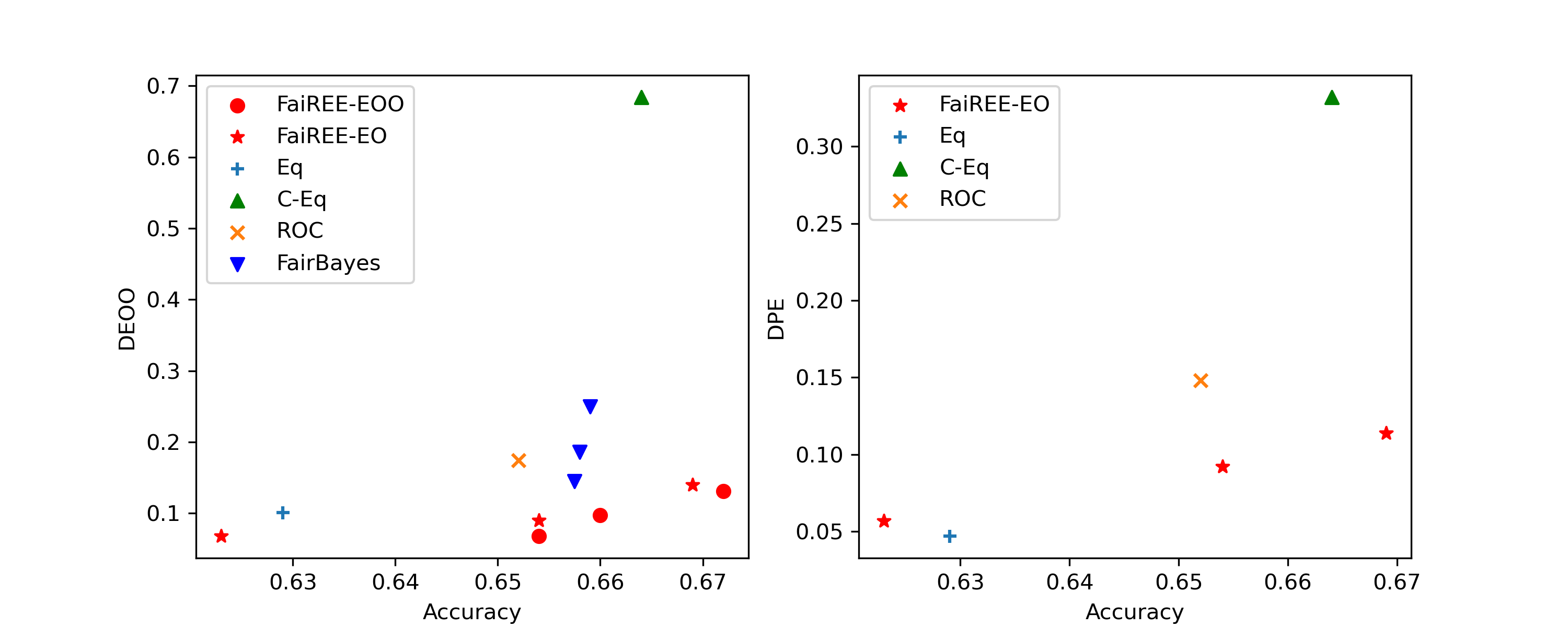}
    \caption{DEOO v.s. Accuracy \& DPE v.s. Accuracy for Compas Score dataset}
    \label{fig:my_label}
\end{figure}
\end{document}